\renewcommand{\cite}{\citep}
\newenvironment{noverticalspace}
{%
	\par %
	\offinterlineskip %
}
{\par}%
\newlength\savewidth\newcommand\shline{\noalign{\global\savewidth\arrayrulewidth
		\global\arrayrulewidth 1pt}\hline\noalign{\global\arrayrulewidth\savewidth}}
\newcommand{\tablestyle}[2]{\setlength{\tabcolsep}{#1}\renewcommand{\arraystretch}{#2}\centering\footnotesize}
\newtheorem{theorem}{Theorem}[section]
\newtheorem{claim}[theorem]{Claim}
\newtheorem{definition}[theorem]{Definition}
\newtheorem{lemma}[theorem]{Lemma}
\newtheorem{proposition}[theorem]{Proposition}
\newtheorem{example}[theorem]{Example}
\newtheorem{assumption}[theorem]{Assumption}
\newtheorem*{rep@definition}{\rep@title}
\newcommand{\newrepdefinition}[2]{%
	\newenvironment{rep#1}[1]{%
		\def\rep@title{#2 \ref{##1}}%
		\begin{rep@definition}}%
		{\end{rep@definition}}}
\def\shownotes{1}  \ifnum\shownotes=0
\newcommand{\authnote}[2]{{[#1: #2]}}
\newcommand{\authnote}[2]{}
\newcommand{\tnote}[1]{{\color{orange}\authnote{TM}{#1}}}
\newcommand{\jnote}[1]{{\color{purple}\authnote{JH}{#1}}}
\def\shownotes{1}  \ifnum\shownotes=1
\newcommand{\authornotenonurgent}[2]{{[#1: #2]}}
\newcommand{\authornotenonurgent}[2]{}
\title{Provable Guarantees for Self-Supervised Deep Learning with Spectral Contrastive Loss}
\author{%
 \large{Jeff Z. HaoChen$^{1}$~~~~ Colin Wei$^{1}$~~~~ Adrien Gaidon$^{2}$~~~~ Tengyu Ma$^{1}$}\\
\vspace{0.1cm}\\
\large{$^1$ Stanford University~~~ $^2$ Toyota Research Institute}\\
\vspace{0.01cm}\\
{{\texttt{\{jhaochen, colinwei, tengyuma\}@stanford.edu}}~~~~ \texttt{adrien.gaidon@tri.global}}
}
\newcommand{\nds}{{n_{\rm{down}}}}
\newcommand{\npt}{{n_{\rm{pre}}}}
\newcommand{\R}{\mathbb{R}}
\DeclareMathOperator*{\argmax}{arg\,max}
\DeclareMathOperator*{\argmin}{arg\,min}
\DeclareMathOperator{\Tr}{Tr}
\newcommand{\setgaussian}{R}
\newcommand{\eigvadj}{\gamma}
\newcommand{\hyp}{\mathcal{F}}
\newcommand{\globalminf}{{f^*_{\textup{pop}}}}
\newcommand{\empminf}{{\hat{f}_{\textup{emp}}}}
\newcommand{\rad}[1]{\widehat{\mathcal{R}}_{#1}}
\newcommand{\fstarf}{f^*_{\mathcal{F}}}
\newcommand{\boundx}{{C_x}}
\newcommand{\boundwi}[1]{{C_{w, #1}}}
\newcommand{\boundw}{{C_w}}
\newcommand{\boundf}{{\kappa}}
\newcommand{\Exp}[1]{\mathrm{\mathbb{E}}_{#1}}
\newcommand{\wpair}[2]{w_{#1#2}}
\newcommand{\wnode}[1]{w_{#1}}
\newcommand{\ballr}{\mu}
\newcommand{\ma}{\textup{mf}}
\newcommand{\sizead}{N}
\newcommand{\scf}{\textup{sc}}
\newcommand{\eigv}{{v}}
\newcommand{\matrixw }{{B}}
\newcommand{\vectorw}{{b}}
\newcommand{\laplacian}{\mathcal{L}}
\newcommand{\id}[1]{\mathbbm{1}\left[#1\right]}
\newcommand{\Real}{\mathbb{R}}
\newcommand{\norm}[1]{\left\lVert#1\right\rVert}
\newcommand{\Loss}[1]{\mathcal{L}({#1})}
\newcommand{\eLoss}[2]{\widehat{\mathcal{L}}_{#1}({#2})}
\newcommand{\sLoss}[1]{\widehat{\mathcal{L}}_S({#1})}
\newcommand{\Lossmc}[1]{\mathcal{L}_{\ma}({#1})}
\newcommand{\poly}{\textup{poly}}
\newcommand{\edata}{\widehat{\mathcal{X}}}
\newcommand{\aug}[1]{\mathcal{A}(\cdot|#1)}
\newcommand{\augp}[2]{\mathcal{A}(#1|#2)}
\newcommand{\pndata}{\mathcal{P}_{\overline{\mathcal{X}}}}
\newcommand{\epndata}{\hat{\mathcal{P}}_{\mathcal{X}}}
\newcommand{\pred}{g}
\newcommand{\npred}{\bar{g}}
\newcommand{\ndata}{\overline{\mathcal{X}}}
\newcommand{\adata}{\mathcal{X}}
\begin{document}
	
\maketitle

\begin{abstract}
	Recent works in self-supervised learning have advanced the state-of-the-art by relying on the \textit{contrastive learning} paradigm, which learns representations by pushing positive pairs, or similar examples from the same class, closer together while keeping negative pairs far apart. Despite the empirical successes, theoretical foundations are limited -- prior analyses assume conditional independence of the positive pairs given the same class label, but recent empirical applications use heavily correlated positive pairs (i.e., data augmentations of the same image). Our work analyzes contrastive learning without assuming conditional independence of positive pairs using a novel concept of the \textit{augmentation graph} on data.  Edges in this graph connect augmentations of the same datapoint, and ground-truth classes naturally form connected sub-graphs. 
	We propose a loss that performs spectral decomposition on the population augmentation graph and can be succinctly written as a contrastive learning objective on neural net representations.
	Minimizing this objective leads to features with provable accuracy guarantees under linear probe evaluation. By standard generalization bounds, these accuracy guarantees also hold when minimizing the training contrastive loss. 
Empirically, the features learned by our objective can match or outperform several strong baselines on benchmark vision datasets. 
	In all, this work provides the first provable analysis for contrastive learning where guarantees for linear probe evaluation  can apply to realistic empirical settings.
\end{abstract}

\section{Introduction}


Recent empirical breakthroughs have demonstrated the effectiveness of self-supervised learning, which trains representations on unlabeled data with surrogate losses and self-defined supervision signals~\cite{wu2018unsupervised, oord2018representation, hjelm2018learning, ye2019unsupervised, henaff2020data, bachman2019learning, tian2019contrastive, misra2020self, caron2020unsupervised, zbontar2021barlow, bardes2021vicreg, tian2020makes, chen2020exploring}. Self-supervision signals in computer vision are often defined by using data augmentation to produce multiple views of the same image. For example, the recent contrastive learning objectives~\cite{arora2019theoretical, chen2020simple, chen2020big, he2020momentum, chen2020improved}  encourage closer representations for augmentations/views of the same natural datapoint than for randomly sampled pairs of data. 

Despite the empirical successes, there is a limited theoretical understanding of why self-supervised losses learn representations that can be adapted to downstream tasks, for example, using linear heads. 
Recent mathematical analyses for contrastive learning by \citet{arora2019theoretical, tosh2020contrastive,tosh2021contrastive}  provide guarantees under the assumption that two views are somewhat conditionally independent
given the label or a hidden variable.
However, in practical algorithms for computer vision applications, the two views are augmentations of a natural image and usually exhibit a strong correlation that is difficult to be de-correlated by conditioning. They are not independent conditioned on the label, and we are only aware that they are conditionally independent given the natural image, which is too complex to serve as a hidden variable with which prior works can be meaningfully applied. 
Thus the existing theory does not appear to explain the practical success of self-supervised learning.

This paper presents a theoretical framework for self-supervised learning without requiring conditional independence. 
We design a principled, practical loss function for learning neural net representations that resembles state-of-the-art contrastive learning methods. We prove that, under a simple and realistic data assumption, linear classification using representations learned on a polynomial number of unlabeled data samples can recover the ground-truth labels of the data with high accuracy.

The fundamental data property that we leverage is a notion of continuity of the \textit{population} data within the same class. 
Though a random pair of images from the same class can be far apart, the pair is often connected by (many) sequences of natural images, where consecutive images in the sequences are close neighbors within the same class. 
As shown in Figure~\ref{figure:augmentation_graph} (images on the left top part), two very different French bulldogs can be connected by a sequence of French bulldogs (which may not be in the training set but are in the support of the population distribution). 
Prior work~\cite{wei2020theoretical} empirically demonstrates this type of connectivity property and uses it in the analysis of pseudolabeling algorithms.
This property is more salient when the neighborhood of an example includes many different types of augmentations.

More formally, we define the \textit{population augmentation graph}, whose vertices are all the augmented data in the \textit{population} distribution, which can be an exponentially large or infinite set.  Two vertices are connected with an edge if they are augmentations of the same natural example. Our main assumption is that for some proper $m\in\mathcal{Z}^+$, we cannot partition the graph into $m+1$ sub-graphs between which there are few connections (Assumption~\ref{assumption:at_most_m_clusters}). 
In other words, this intuitively states that there are at most $m$ clusters in the population augmentation graph. 
This assumption can be seen as a graph-theoretic version of the continuity assumption on the \textit{population} distribution. We also assume that there are very few edges across different ground-truth classes (Assumption~\ref{definition:accurate_partition}). 
Figure~\ref{figure:augmentation_graph} (left) illustrates a realistic scenario where dog and cat are the ground-truth categories, between which edges are very rare. Each breed forms a sub-graph that has sufficient inner connectivity and thus cannot be further partitioned.

Our assumption fundamentally does not require independence of the two views (the positive pairs) conditioned on the class and can allow disconnected sub-graphs within a class.
The classes in the downstream task can be also somewhat flexible as long as they are disconnected in the augmentation graph. 
For example, when the augmentation graph consists of $m$ disconnected sub-graphs corresponding to fine-grained classes, our assumptions allow the downstream task to have any $r \le m$ coarse-grained classes containing these fine-grained classes as a sub-partition. 
Prior work~\cite{wei2020theoretical} on pseudolabeling algorithms essentially requires an exact alignment between sub-graphs and downstream classes (i.e., $r=m$). They face this limitation because their analysis requires fitting discrete pseudolabels on the unlabeled data. We avoid this difficulty because we consider directly learning continuous representations on the unlabeled data.

\begin{figure*}
	\setlength{\lineskip}{0pt}
	\centering
	\begin{tabular}{p{0.55\textwidth} p{0.4\textwidth}}
		\begin{noverticalspace}
			\includegraphics[width=0.55\textwidth]{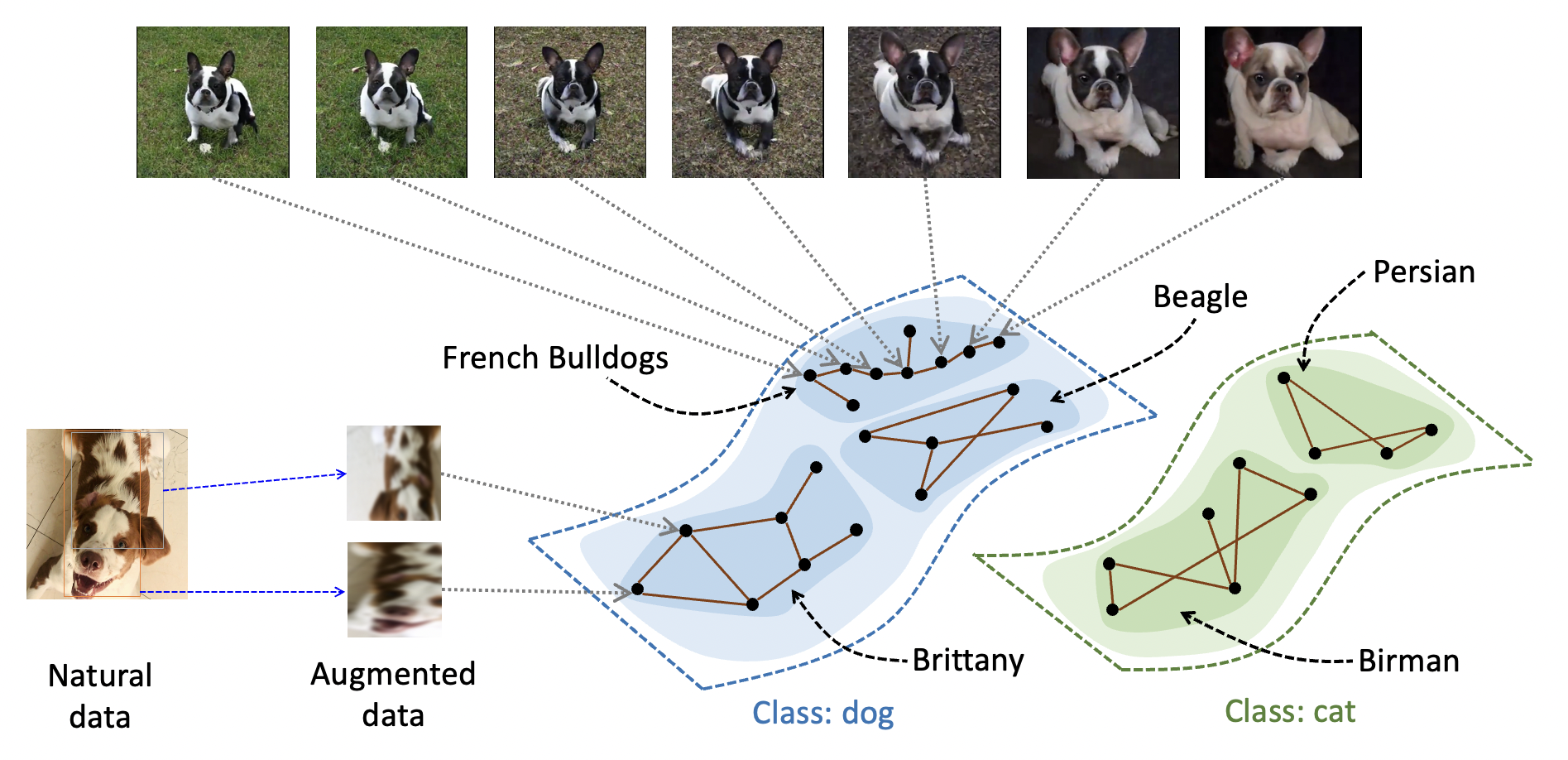}%
		\end{noverticalspace} 
		&
		\begin{noverticalspace}
			\includegraphics[width=0.4\textwidth]{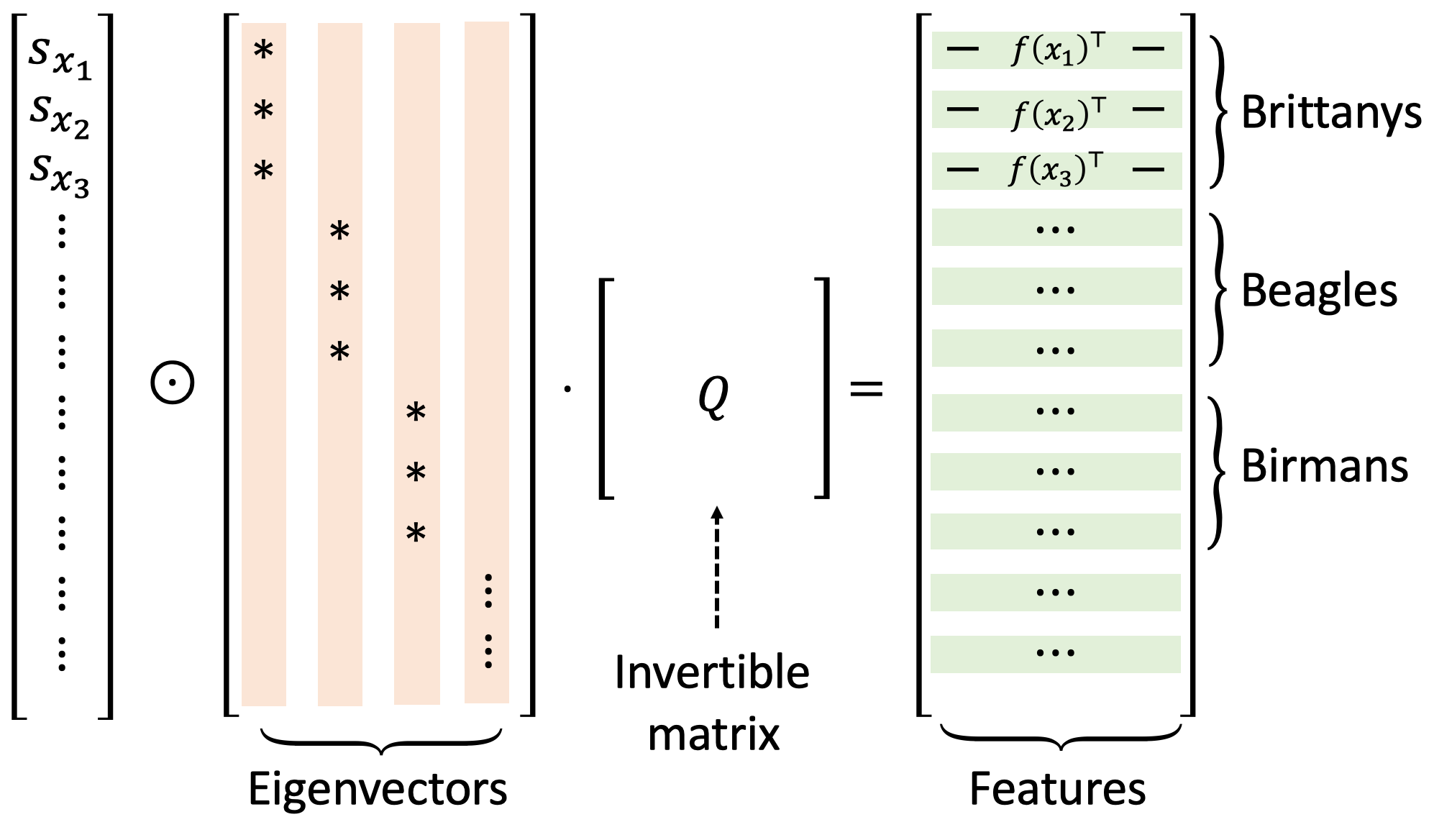}%
		\end{noverticalspace}

	\end{tabular}
	\label{figure:augmentation_graph}
	\caption[..]{
		\textbf{Left: demonstration of the population augmentation graph.} 
		Two augmented data are connected if they are views of the same natural datapoint. Augmentations of data from different classes in the downstream tasks are assumed to be nearly disconnected, whereas there are more connections within the same class. We allow the existence of disconnected sub-graphs within a class corresponding to potential sub-classes.
		\textbf{Right: decomposition of the learned representations.} The representations  (rows in the RHS) learned by minimizing the population spectral contrastive loss can be decomposed as the LHS. The 
		scalar $s_{x_i}$ is positive for every augmented datapoint $x_i$. Columns of the matrix labeled ``eigenvectors'' are the top eigenvectors of the normalized adjacency matrix of the augmentation graph defined in Section~\ref{section:augmentation_graph}. The
		operator $\odot$ multiplies row-wise each $s_{x_i}$ with the $x_i$-th row of the eigenvector matrix. When classes (or sub-classes) are exactly disconnected in the augmentation graph, the eigenvectors are sparse and align with the sub-class structure. The invertible $Q$ matrix does not affect the performance of the rows under the linear probe.  
	}	
\end{figure*}

The main insight of the paper is that contrastive learning can be viewed as a parametric form of spectral clustering~\cite{ng2001spectral,shi2000normalized} on the augmentation graph. 
Concretely, suppose we apply spectral decomposition or spectral clustering---a classical approach for graph partitioning---to the adjacency matrix defined on the population augmentation graph. We form a matrix where the top-$k$ eigenvectors are the columns and interpret each row of the matrix as the representation (in $\Real^k$) of an example. 
Somewhat surprisingly, we prove that this feature extractor can be also recovered (up to some linear transformation) by minimizing the following population objective which is a variant of the standard contrastive loss: 
\begin{align*}
\Loss{f} = -2\cdot \Exp{x, x^+} \left[f(x)^\top f(x^+)\right] + \Exp{x, x^-} \big[\left(f(x)^\top f(x^-)\right)^2\big],
\end{align*}
where $(x,x^+)$ is a pair of augmentations of the same datapoint, $(x,x^-)$ is a pair of independently random augmented data, and $f$ is a parameterized function from augmented data to $\Real^k$. Figure~\ref{figure:augmentation_graph} (right) illustrates the relationship between the eigenvector matrix and the learned representations. 
We call this loss the \textit{population spectral contrastive loss}. 

We analyze the linear classification performance of the representations learned by minimizing the population spectral contrastive loss. Our main result (Theorem~\ref{thm:combine_with_cheeger_simplified}) shows that when the representation dimension exceeds the maximum number of disconnected sub-graphs, linear classification with learned representations is guaranteed to have a small error. Our theorem reveals a trend that a larger representation dimension is needed when there are a larger number of disconnected sub-graphs. Our analysis relies on novel techniques tailored to linear probe performance, which have not been studied in the spectral graph theory community to the best of our knowledge.

The spectral contrastive loss also works on empirical data. 
Since our approach optimizes parametric loss functions, guarantees involving the population loss can be converted to finite sample results using off-the-shelf generalization bounds. The end-to-end result (Theorem~\ref{theorem:end_to_end}) shows that the number of unlabeled examples required is polynomial in the Rademacher complexity of the model family and other relevant parameters, whereas the number of downstream labeled examples only needs to be linear in the representation dimension (which needs to be linear in the number of clusters in the graph). This demonstrates that contrastive learning reduces the amount of labeled examples needed. 

In summary, our main theoretical contributions are: 1) we propose a simple contrastive loss motivated by spectral decomposition of the population data graph, 2) under simple and realistic assumptions, we provide downstream classification guarantees for the representation learned by minimizing this loss on population data, and 3) our analysis is easily applicable to deep networks with polynomial unlabeled samples via off-the-shelf generalization bounds. Our theoretical framework can be viewed as containing two stages: we first analyze the population loss and the representation that minimizes it (Section~\ref{section:framework}), then study the empirical loss where the representation is learned with a neural network with bounded capacity (Section~\ref{section:finite_sample}).  

In addition, we implement and test the proposed spectral contrastive loss on standard vision benchmark datasets. Our algorithm is simple and doesn't rely on tricks such as stop-gradient which is essential to SimSiam~\cite{chen2020exploring}. We demonstrate that the features learned by our algorithm can match or outperform several strong baselines~\citep{chen2020simple,chen2020improved,chen2020exploring,grill2020bootstrap} when evaluated using a linear probe.



\section{Additional related works}

\noindent
\textbf{Empirical works on self-supervised learning.} 
Self-supervised learning algorithms have been shown to successfully learn representations that benefit downstream tasks~\cite{wu2018unsupervised, oord2018representation, hjelm2018learning, ye2019unsupervised, henaff2020data, bachman2019learning, tian2019contrastive, misra2020self, chen2020improved, chen2020simple, he2020momentum, chen2020big, caron2020unsupervised, zbontar2021barlow, bardes2021vicreg, tian2020makes, xie2019unsupervised}.
Many recent self-supervised learning algorithms learn features with siamese networks~\cite{bromley1993signature}, where two neural networks of shared weights are applied to pairs of augmented data. 
Introducing asymmetry to siamese networks either with a momentum encoder like BYOL~\cite{grill2020bootstrap} or by stopping gradient propagation for one branch of the siamese network like SimSiam~\cite{chen2020exploring} has been shown to effectively avoid collapsing. 
Contrastive methods~\cite{chen2020simple, he2020momentum, chen2020improved} minimize the InfoNCE loss~\cite{oord2018representation}, where two views of the same data are attracted while views from different data are repulsed. 


\noindent
\textbf{Theoretical works on self-supervised learning.} 
As briefly discussed in the introduction, several theoretical works have studied self-supervised learning. 
\citet{arora2019theoretical} provide guarantees for representations learned by contrastive learning on downstream linear classification tasks under the assumption that the positive pairs are conditionally independent given the class label. Theorem 3.3 and Theorem 3.7 of the work of \citet{lee2020predicting} show that, under conditional independence given the label and/or additional latent variables, representations learned by reconstruction-based self-supervised learning algorithms can achieve small errors in the downstream linear classification task. \citet[Theorem 4.1]{lee2020predicting} generalizes it to approximate conditional independence for Gaussian data and Theorem 4.5 further weakens the assumptions significantly.
\citet{tosh2020contrastive} show that contrastive learning representations can linearly recover any continuous functions of the underlying topic posterior under a topic modeling assumption (which also requires conditional independence of the positive pair given the hidden variable). More recently, Theorem 11 of the work of~\citet{tosh2021contrastive} provide novel guarantees for contrastive learning under the assumption that there exists a hidden variable $h$ such that the positive pair $(x,x^+)$ are conditionally independent given $h$ and the random variable $p(x\vert h)p(x^+\vert h)/p(x)p(x^+)$ has a small variance. 
However, in practical algorithms for computer vision applications, the two views are two augmentations and thus they are highly correlated. They might be only independent when conditioned on very complex hidden variables such as the original natural image, which might be too complex for the previous results to be meaningfully applied. 

We can also compare the assumptions and results on a concrete generative model for the data, our Example~\ref{example:gaussian} in Section~\ref{section:gaussian_example}, 
where the data are generated by a mixture of Gaussian or a mixture of manifolds, the label is the index of the mixture,  and the augmentations are small Gaussian blurring (i.e., adding Gaussian noise). In this case, the positive pairs $(x,x^+)$ are two points that are very close to each other. To the best of our knowledge, applying Theorem 11 of~\citet{tosh2021contrastive} to this case with $h=\bar{x}$ (the natural datapoint) would result in requiring a large (if not infinite) representation dimension. Because $x^+$ and $x$ are very close, the reconstruction-based algorithms in~\citet{lee2020predicting}, when used to predict $x^+$ from $x$, will not be able to produce good representations as well.\footnote{On a technical level,  Example~\ref{example:gaussian} does not satisfy the  requirement regarding the $\beta$ quantity in Assumption 4.1 of~\citet{lee2020predicting}, if $(X_1,X_2)$ in that paper is equal to $(x,x+)$ here---it requires the label to be correlated with the raw input $x$, which is not necessarily true in Example~\ref{example:gaussian}. This can likely be addressed by using a different $X_2$. } 

On a technical level, to relate prior works' assumptions to ours, we can consider an almost equivalent version of our assumption (although our proofs do not directly rely on or relate to the discussion below). 
Let $(x,x^+)$ be a positive pair and let $p(\cdot \vert x)$ be the conditional distribution of $x^+$ given $x$. Starting from $x_0$, let us consider a hypothetical Markov chain $x_0,\dots, x_T, \cdots$ where $x_t$ is drawn from $p(\cdot \vert x_{t-1})$. Our assumption essentially means that this hypothetical Markov chain of sampling neighbors will mix within the same class earlier than it mixes across the entire population (which might not be possible or takes exponential time). More concretely, the assumption that $\rho_{\lfloor k/2\rfloor}$ is large compared to $\alpha$ in Theorem~\ref{thm:combine_with_cheeger_simplified} is roughly equivalent to the existence of a (potentially large)  $T$ such that $x_0$ and $x_T$ are still likely to have the same label, but are sufficiently independent conditioned on this label or some hidden variable. 
 Roughly speaking, prior works~\cite{arora2019theoretical, tosh2020contrastive,tosh2021contrastive} assume probabilistic structure about $x_0$ and $x_1$ (instead of $x_0$ and $x_T$), e.g., ~\citet{arora2019theoretical} and Theorem 11 of~\citet{tosh2021contrastive}  assume that $x_0$ and $x_1$ are independent conditioned on the label and/or a hidden variable. 
Similar Markov chains on augmentated data have also been used in previous work~\cite{dao2019kernel} to study properties of data augmentation.

Several other works~\citep{tsai2020self, wang2020understanding, tian2020understanding, bansal2020self,mitrovic2020representation} also theoretically study self-supervised learning. The work  \citet{tsai2020self} prove that self-supervised learning methods can extract task-relevant information and discard task-irrelevant information, but lacks guarantees for solving downstream tasks efficiently with simple (e.g., linear) models.
\citet{tian2020understanding} study why non-contrastive self-supervised learning methods can avoid feature collapse. \citet{zimmermann2021contrastive} prove that for a specific data generating process, contrastvie learning can learn representations that recover the latent variable. 
\citet{cai2021theory} analyze domain adaptation algorithms for subpopulation shift with a similar expansion condition as~\cite{wei2020theoretical} while also allowing disconnected parts within each class, but require access to ground-truth labels during training. In contrast, our algorithm doesn't need labels during pre-training. 

Co-training and multi-view learning are related settings which leverage two distinct “views” (i.e., feature subsets) of the
data~\citep{blum1998combining,dasgupta2002pac,balcan2005co}.  The original co-training algorithms~\citep{blum1998combining,dasgupta2002pac} assume that the two views are independent conditioned on the true label and leverage this
independence to obtain accurate pseudolabels for the unlabeled data.  ~\citet{balcan2005co} relax the requirement on independent views of co-training,
by using an “expansion” assumption, which is closely related to our assumption that $\rho_{\lfloor k/2\rfloor}$ is not too small in Theorem~\ref{thm:combine_with_cheeger_simplified}. Besides recent works  (e.g., the work of~\citet{tosh2021contrastive}), most co-training or multi-view learning algorithms are quite different from the modern contrastive learning algorithms which use neural network parameterization for vision applications. 


Our analysis relies on the normalized adjacency matrix (see Section~\ref{section:augmentation_graph}), which is closely related to the  graph Laplacian regularization that has been studied in the setting of semi-supervised learning~\cite{zhu2003semi, nadler2009semi}. In their works, the Laplacian matrix is used to define a regularization term that smooths the predictions on unlabeled data. This regularizer is further added to the supervised loss on labeled data during training. In contrast, we use the normalized adjacency matrix to define the unsupervised training objective in this paper.

\section{Spectral contrastive learning on population data }\label{section:framework}
\newcommand{\ones}{\textbf{1}}
\newcommand{\ind}[1]{1[#1]}
\newcommand{\hatf}{\hat{f}}
\newcommand{\hatF}{\widehat{F}}
\newcommand{\hatB}{\widehat{B}}
\newcommand{\diag}{\textup{diag}}
\newcommand{\eigF}{F^*}
\newcommand{\norA}{\overline{A}}
\newcommand{\eval}{\mathcal{E}}
In this section, we introduce our theoretical framework, the spectral contrastive loss, and the main analysis of the performance of the representations learned on population data.

We use $\ndata$ to denote the set of all natural data (raw inputs without augmentation). We assume that each $\bar{x}\in \ndata$ belongs to one of $r$ classes, and let $y: \ndata\rightarrow [r]$ denote the ground-truth (deterministic) labeling function.
Let $\pndata$ be the population distribution over $\ndata$ from which we draw training data and test our final performance. 
In the main body of the paper, for the ease of exposition, we assume $\ndata$ to be a finite but exponentially large set (e.g., all real vectors in $\R^d$ with bounded precision). This allows us to use sums instead of integrals and avoid non-essential nuances/jargons related to functional analysis. See Section~\ref{sec:infinite_data} for the straightforward extensions to the case where $\ndata$ is an infinite compact set (with mild regularity conditions).\footnote{In Section~\ref{sec:infinite_data}, we will deal with an infinite graph, its adjacency operator (instead of adjacency matrix), and the eigenfunctions of the adjacency operator (instead of eigenvectors) essentially in the same way.}

We next formulate data augmentations. 
Given a natural data sample $\bar{x}\in \ndata$, we use $\aug{\bar{x}}$ to denote the distribution of its augmentations. For instance, when $\bar{x}$ represents an image, $\aug{\bar{x}}$ can be the distribution of common augmentations~\cite{chen2020simple} that includes Gaussian blur, color distortion and random cropping. 
We use $\adata$ to denote the set of all augmented data, which is the union of supports of all $\aug{\bar{x}}$ for $\bar{x}\in\ndata$. As with $\ndata$, we also assume that $\adata$ is a finite but exponentially large set, and denote $N = |\adata|$.  None of the bounds will depend on $N$ --- it is only defined and assumed to be finite for the ease of exposition. 

We will learn an embedding function $f:\adata \rightarrow \R^k$, and then evaluate its quality by the minimum error achieved with a linear probe. Concretely, a linear classifier has weights $B\in \R^{k\times r}$ and predicts $\pred_{f,B}(x) = \argmax_{i\in [r]} (f(x)^\top B)_i$ for an augmented datapoint $x$ ($\arg\max$ breaks tie arbitrarily).
Then, given a natural data sample $\bar{x}$, we ensemble the predictions on augmented data and predict: 
\begin{align*}
\npred_{f,B}(\bar{x}) := \argmax_{i\in[r]} \Pr_{x \sim \aug{\bar{x}}} \left[\pred_{f, B}(x)=i\right].
\end{align*}
We denote the error of the representation and the linear head as:
\begin{align*}
\eval(f, B) := \Pr_{\bar{x}\sim \pndata}[y(\bar{x}) \neq \npred_{f, B}(\bar{x})].
\end{align*}
Define the \textit{linear probe} error as the error of the best possible linear classifier on the representations:
\begin{align}
\eval(f) := \min_{B\in \R^{k\times r}} \eval(f, B) = \min_{B\in \R^{k\times r}}\Pr_{\bar{x}\sim \pndata}[y(\bar{x}) \neq \npred_{f, B}(\bar{x})].
\end{align}

\subsection{Augmentation graph and spectral decomposition}\label{section:augmentation_graph}
Our approach is based on the central concept of \textbf{population augmentation graph}, denoted by $G(\adata, w)$, where the vertex set is all augmentation data $\adata$ and $w$ denotes the edge weights defined below. For any two augmented data $x, x'\in \adata$, define the weight $\wpair{x}{x'}$ as the marginal probability of generating the pair $x$ and $x'$ from a random natural data $\bar{x}\sim \pndata$:
\begin{align}
\wpair{x}{x'} := \Exp{\bar{x}\sim \pndata}\left[\augp{x}{\bar{x}} \augp{x'}{\bar{x}}\right]
\end{align}

Therefore, the weights sum to 1 because the total probability mass is 1: $\sum_{x, x'\in\adata}\wpair{x}{x'}=1$. 
The relative magnitude intuitively captures the closeness between $x$ and $x'$ with respect to the augmentation transformation.  
For most of the unrelated $x$ and $x'$, the value $\wpair{x}{x'}$ will be significantly smaller than the average value.  For example, when $x$ and $x'$ are random croppings of a cat and a dog respectively, $\wpair{x}{x'}$ will be essentially zero because no natural data can be augmented into both $x$ and $x'$. 
On the other hand, when $x$ and $x'$ are very close in $\ell_2$-distance or very close in $\ell_2$-distance up to color distortion, $\wpair{x}{x'}$ is nonzero because they may be augmentations of the same image with Gaussian blur and color distortion. We say that $x$ and $x'$ are connected with an edge if $\wpair{x}{x'}>0$.  See Figure~\ref{figure:augmentation_graph} (left) for more illustrations.

We emphasize that we \textit{only} work with the population graph rather than the empirical graph (i.e., the corresponding graph constructed with the empirical dataset as the vertex set).  
The population graph is very sparse but not empty---many similar images exist in the population. In contrast, the empirical graph would be nearly empty, since two images in the empirical dataset almost never share the same augmentation image. 
Our analysis will apply to minimizing contrastive loss on an empirical dataset (see Section~\ref{section:finite_sample}), but \textit{not} via analyzing the property of the empirical graph. 
Instead, we will show that contrastive learning on \textit{empirical} data with parametrized models is similar to decomposing the \textit{population} graph (see technical discussions in Section~\ref{section:proof}).  
This is a key difference between our work and classical spectral clustering work---we only require properties of the population graph rather than the empirical graph. 

\noindent{\bf A simplified running example} with Gaussian perturbation augmentation. 
Suppose the natural data is supported on manifolds in Euclidean space, and the data augmentation is adding random noise sampled from $\mathcal{N}(0, \sigma^2 \cdot I_{d\times d})$ where $\sigma$ is a small quantity (e.g., the norm of the perturbation $\sigma\sqrt{d}$ should be much smaller than the norm of the original datapoint). Then the edge between two augmented datapoints would be have near zero weight unless the two datapoints have small $\ell_2$ distance. Hence, the resulting graph is essentially the $\epsilon$-ball proximity graph~\cite{zemel2004proximity} or geometric graph~\citep{penrose2003random} in Euclidean space. 

Given the structure of the population augmentation graph, we apply spectral decomposition to the population graph to construct principled embeddings. The eigenvalue problems are closely related to graph partitioning as shown in spectral graph theory~\cite{chung1997spectral} for both worst-case graphs~\cite{d63036efc9d24f07b8908864667e28aa, kannan2004clusterings,louis2011algorithmic,lee2014multiway} and random graphs~\cite{mcsherry2001spectral,lei2015consistency,abbe2017community}. In machine learning, spectral clustering~\cite{ng2001spectral, shi2000normalized} is a classical algorithm that learns embeddings by eigendecomposition on an empirical distance graph and invoking $k$-means on the embeddings. 

We will apply eigendecomposition to the \textit{population} augmentation graph (and then later use linear probe for classification). Let $\wnode{x} = \sum_{x'\in \adata} \wpair{x}{x'}$ be the total weights associated to $x$, which is often viewed as an analog of the degree of $x$ in weighted graph.  A central object in spectral graph theory is the so-called \textit{normalized adjacency matrix}: 
\begin{align}
\norA := {D}^{-1/2}{A}{D}^{-1/2}
\end{align} 
where ${A}\in\Real^{\sizead\times \sizead}$ is adjacency matrix with entires $A_{xx'} = \wpair{x}{x'}$ and ${D}\in \Real^{\sizead\times \sizead}$ is a diagonal matrix with ${D}_{xx} = \wnode{x}$.\footnote{We index the matrix $A$, $D$ by $(x,x')\in \adata\times \adata$. Generally we index $N$-dimensional axis by $x\in \adata$.} 

Standard spectral graph theory approaches produce vertex embeddings as follows. 
Let $\eigvadj_1, \eigvadj_2, \cdots, \eigvadj_k$ be the $k$ largest eigenvalues of $\norA$, and ${\eigv}_1, {\eigv}_2, \cdots, {\eigv}_k$ be the corresponding unit-norm eigenvectors. Let $\eigF=[{\eigv}_1, {\eigv}_2, \cdots, {\eigv}_k]\in\Real^{\sizead\times k}$ be the matrix that collects these eigenvectors in columns, and we refer to it as the eigenvector matrix. Let $u_x^*\in \R^k$ be the $x$-th row of the matrix $\eigF$. 
It turns out that $u_x^*$'s can serve as desirable embeddings of $x$'s because they exhibit clustering structure in Euclidean space that resembles the clustering structure of the graph $G(\adata, w)$.

\subsection{From spectral decomposition to spectral contrastive learning}\label{section:spectral_contrastive_loss}

The embeddings $u_x^*$ obtained by eigendecomposition are nonparametric---a $k$-dimensional parameter is needed for every $x$---and therefore cannot be learned with a realistic amount of data. The embedding matrix $\eigF$ cannot be even stored efficiently. Therefore, we will instead parameterize the rows of the eigenvector matrix $\eigF$ as a neural net function, and assume embeddings $u_x^*$ can be represented by $f(x)$ for some $f\in\hyp$, where $\hyp$ is the hypothesis class containing neural networks.
As we'll show in Section~\ref{section:finite_sample}, this allows us to leverage the extrapolation power of neural networks and learn the representation on a finite dataset.

Next, we design a proper loss function for the feature extractor $f$, such that minimizing this loss could recover $\eigF$ up to some linear transformation. 
As we will show in Section~\ref{section:finite_sample}, the resulting population loss function on $f$ also admits an unbiased estimator with finite training samples. 
Let $F$ be an embedding matrix with $u_x$ on the $x$-th row, we will first design a loss function of $F$ that can be decomposed into parts about individual rows of $F$. 

We employ the following matrix factorization based formulation for eigenvectors. Consider the objective
\begin{align}\label{eq:matrix_fac}
\min_{F\in \R^{N\times k}} \Lossmc{{F}} := \norm{\norA- {F}{F}^\top}_F^2.
\end{align} 

By the classical theory on low-rank approximation (Eckart–Young–Mirsky theorem~\cite{eckart1936approximation}), any minimizer $\hatF$ of $\Lossmc{F}$ contains scaling of the largest eigenvectors of $\norA$ up to a right transformation---for some orthonormal matrix $R\in \Real^{k\times k}$, we have 
$
\hatF = \eigF\cdot \diag([\sqrt{\eigvadj_1},\dots, \sqrt{\eigvadj_k}])R
$. 
Fortunately, multiplying the embedding matrix by any matrix on the right and any diagonal matrix on the left does not change its linear probe performance, which is formalized by the following lemma.

\begin{lemma}\label{lemma:prediction_same_with_matrix}
	Consider an embedding matrix $F\in \R^{N\times k}$ and a linear classifier $B\in \Real^{k\times r}$. Let $D\in \R^{N\times N}$ be a diagonal matrix with positive diagonal entries and $Q\in \R^{k\times k}$ be an invertible matrix.
	Then, for any embedding matrix $\widetilde{F} = D\cdot F\cdot Q$, the linear classifier $\tilde{B}=Q^{-1}B$ on $\widetilde{F}$ has the same prediction as $B$ on $F$. 
	As a consequence, 
	we have 
	\begin{align}
	\eval(F) = \eval(\widetilde{F}).
	\end{align}
	where $\eval{(F)}$ denotes the linear probe performance when the rows of $F$ are used as embeddings. 
\end{lemma}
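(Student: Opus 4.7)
The plan is to unwind the definitions and observe that applying the transformations $D$ and $Q$ to the embedding matrix and correspondingly the inverse transformation to the classifier leaves the raw prediction scores unchanged up to a positive per-row rescaling, which does not affect the $\argmax$.

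First I would compute $\widetilde{F}\widetilde{B}$ directly. By definition, $\widetilde{F}\widetilde{B} = (DFQ)(Q^{-1}B) = DFB$, where $Q\cdot Q^{-1}$ collapses because $Q$ is invertible. Hence the $x$-th row of $\widetilde{F}\widetilde{B}$ equals $D_{xx}\cdot (FB)_x$, i.e., a positive scalar multiple of the $x$-th row of $FB$. Because $D_{xx}>0$, the coordinate-wise maximizer is preserved: $\argmax_{i\in[r]} D_{xx}(FB)_{x,i} = \argmax_{i\in[r]} (FB)_{x,i}$, with identical sets of maximizers, so any fixed tie-breaking rule returns the same index. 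This shows $\pred_{\widetilde{F},\widetilde{B}}(x) = \pred_{F,B}(x)$ for every augmented $x\in\adata$.

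Next I would propagate this pointwise equality through the ensembling definition of $\npred$. Since $\npred_{f,B}(\bar x)$ depends on $\pred_{f,B}$ only through its values on augmented samples of $\bar x$, and those values coincide for the two classifiers, we get $\npred_{\widetilde{F},\widetilde{B}}(\bar x) = \npred_{F,B}(\bar x)$ for every $\bar x\in\ndata$. Taking expectations over $\bar x\sim\pndata$ of the indicator of misprediction then gives equality of the two error rates for this specific pair.

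Finally, to conclude $\eval(F)=\eval(\widetilde F)$, I would exploit that the map $B\mapsto Q^{-1}B$ is a bijection on $\R^{k\times r}$, since $Q$ is invertible. The infimum of the error over $B$ for the embedding $F$ is therefore matched by the infimum over $\widetilde B = Q^{-1}B$ for the embedding $\widetilde F$, which ranges over the same set $\R^{k\times r}$. There is no real obstacle here; the only subtlety worth a one-line comment is that $\argmax$ with arbitrary tie-breaking is preserved under strictly positive scalar multiplication of the score vector, so the reduction is genuinely to an identity of prediction functions and not merely of argmax sets.
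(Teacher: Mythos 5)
Your proof is correct and follows the same approach as the paper: compute $\widetilde{F}\widetilde{B} = DFB$, observe that each row is a positive rescaling of the corresponding row of $FB$ so the $\argmax$ is unchanged, and conclude by noting $B \mapsto Q^{-1}B$ is a bijection so the minima over linear heads coincide. You spell out the ensembling and bijection steps a bit more explicitly than the paper's terse ``the equivalence of loss naturally follows,'' but there is no substantive difference.
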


\begin{proof}[Proof of Lemma~\ref{lemma:prediction_same_with_matrix}]
	Let $D=\diag(s)$ where $s_x>0$ for $x\in\adata$. Let $u_x, \tilde{u}_{x}\in \R^k$ be the $x$-th row of matrices $F$ and $\widetilde{F}$, respectively.
	Recall that $\pred_{u,B}(x) = \argmax_{i\in [r]} (u_x^\top B)_i$ is the prediction on an augmented datapoint $x\in\ndata$ with representation $u_x$ and linear classifier $B$. Let $\widetilde{B}=Q^{-1}B$, it's easy to see that $\pred_{\tilde{u},\widetilde{B}}(x) = \argmax_{i\in [r]} (s_x\cdot u_x^\top B)_i$. Notice that $s_x>0$ doesn't change the prediction since it changes all dimensions of $u_x^\top B$ by the same scale, we have $\pred_{\tilde{u},\widetilde{B}}(x)=\pred_{u, B}(x)$ for any augmented datapoint $x\in\adata$. The equivalence of loss naturally follows.
\end{proof}

The main benefit of objective $\Lossmc{F}$ is that it's based on the rows of $F$. Recall that vectors $u_x$ are the rows of $F$. Each entry of $FF^\top$ is of the form $u_x^\top u_{x'}$, and thus $\Lossmc{F}$ can be decomposed into a sum of $N^2$ terms involving terms $u_x^\top u_{x'}$. Interestingly, if we reparameterize each row $u_x$ by $w_x^{1/2}f(x)$, we obtain a very similar loss function for $f$ that resembles the contrastive learning loss used in practice~\citep{chen2020simple} as shown below in Lemma~\ref{lem:spectral-contrastive-loss}. See Figure~\ref{figure:augmentation_graph} (right) for an illustration of the relationship between the eigenvector matrix and the representations learned by minimizing this loss.

We formally define the positive and negative pairs to introduce the loss.  Let $\bar{x}\sim \pndata$ be a random natural datapoint and draw $x\sim\aug{\bar{x}}$ and $ x^+\sim\aug{\bar{x}}$ independently to form a positive pair $(x,x^+)$. 
Draw $\bar{x}' \sim \pndata$ and $x^-\sim \aug{\bar{x}'}$ independently with $\bar{x}, x, x^+$.  We call $(x,x^-)$ a negative pair.\footnote{Though $x$ and $x^-$ are simply two independent draws, we call them negative pairs following the literature~\cite{arora2019theoretical}.}  


\begin{lemma}[Spectral contrastive loss]\label{lem:spectral-contrastive-loss}
	Recall that $u_x$ is the $x$-th row of $F$. Let $u_x = w_x^{1/2} f(x)$ for some function $f$. 
	Then, the loss function $\Lossmc{F}$ is equivalent to the following loss function for $f$, called spectral contrastive loss,  up to an additive constant: 
	\begin{align}\label{equation:spectral_loss}
	\Lossmc{F} & = 	    \Loss{f} + \textup{const} \nonumber\\
	\textup{where }  &~\Loss{f} \triangleq -2\cdot \Exp{x, x^+} \big[f(x)^\top f(x^+) \big]
	+ \Exp{x, x^-}\left[\left(f(x)^\top f(x^-) \right)^2\right]
	\end{align}
\end{lemma}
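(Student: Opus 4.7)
The plan is to expand $\Lossmc{F} = \|\norA - FF^\top\|_F^2$ term by term and identify each piece with a component of $\Loss{f}$ after the reparameterization $u_x = w_x^{1/2} f(x)$. Writing
\[
\Lossmc{F} = \|\norA\|_F^2 - 2\,\langle \norA, FF^\top\rangle + \|FF^\top\|_F^2,
\]
the first term $\|\norA\|_F^2$ depends only on the graph and becomes the additive constant.

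For the cross term, I would use $\norA_{xx'} = w_{xx'}/\sqrt{w_x w_{x'}}$ and $(FF^\top)_{xx'} = u_x^\top u_{x'} = \sqrt{w_x w_{x'}}\, f(x)^\top f(x')$, so the $\sqrt{w_x w_{x'}}$ factors cancel, giving $\langle \norA, FF^\top\rangle = \sum_{x,x'} w_{xx'}\, f(x)^\top f(x')$. The key observation is that, by definition, $w_{xx'} = \Exp{\bar x\sim \pndata}[\augp{x}{\bar x}\augp{x'}{\bar x}]$ is exactly the joint probability of drawing the positive pair $(x, x^+)$, so this sum equals $\Exp{x,x^+}[f(x)^\top f(x^+)]$.

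For the third term, I would write $\|FF^\top\|_F^2 = \sum_{x,x'}(u_x^\top u_{x'})^2 = \sum_{x,x'} w_x w_{x'}\, (f(x)^\top f(x'))^2$. Here the small algebraic check is that $w_x = \sum_{x'} w_{xx'} = \Exp{\bar x}[\augp{x}{\bar x}]$ is the marginal probability of drawing $x$ as an augmented sample, so $w_x w_{x'}$ is the joint probability of drawing the negative pair $(x, x')$ (two independent draws of an augmented sample). Therefore the sum equals $\Exp{x,x'}[(f(x)^\top f(x'))^2]$.

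Combining the three pieces yields $\Lossmc{F} = \|\norA\|_F^2 + \Loss{f}$, which is the claim with $\textup{const} = \|\norA\|_F^2$. There isn't really a hard step here --- the whole proof is an expansion of the Frobenius norm together with the identification of the two elementary probabilities ($w_{xx'}$ as the positive-pair law and $w_x w_{x'}$ as the negative-pair law); the only place where one needs to be careful is tracking that the $w_x^{1/2}$ factors from the reparameterization exactly cancel the $D^{-1/2}$ factors in $\norA$ for the cross term (but not for the quadratic term, which is precisely why the quadratic term turns into a negative-pair expectation rather than into something involving $\norA$).
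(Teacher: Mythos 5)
Your proof is correct and is essentially the same argument as the paper's: both expand $\|\norA - FF^\top\|_F^2$, identify $\|\norA\|_F^2$ as the constant, and read off the cross term as the positive-pair expectation via $w_{xx'}$ and the quadratic term as the negative-pair expectation via $w_x w_{x'}$. The only difference is cosmetic --- you organize the expansion with Frobenius inner products while the paper expands the square pointwise --- so there is nothing to add.
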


\begin{proof}[Proof of Lemma~\ref{lem:spectral-contrastive-loss}]
	We can expand $\Lossmc{F}$ and obtain
	\begin{align}\label{equation:loss_ma_derive}
	\Lossmc{{F}} &= \sum_{x, x'\in \adata} \left(\frac{\wpair{x}{x'}}{\sqrt{\wnode{x} \wnode{x'}}} - u_x^\top u_{x'}\right)^2\nonumber\\
	&= \sum_{x, x'\in \adata} \left(\frac{\wpair{x}{x'}^2}{\wnode{x} \wnode{x'}} -2\cdot \wpair{x}{x'} \cdot f(x)^\top f(x')  + w_x w_{x'}\cdot\left( f(x)^\top f(x')\right)^2\right)
	\end{align}
	
	Notice that the first term is a constant that only depends on the graph but not the variable $f$.  By the definition of augmentation graph, $\wpair{x}{x'}$ is the probability of a random positive pair being $(x, x')$ while $\wnode{x}$ is the probability of a random augmented datapoint being $x$. We can hence rewrite the sum of last two terms in Equation~\eqref{equation:loss_ma_derive} as Equation~\eqref{equation:spectral_loss}.
\end{proof}

We note that spectral contrastive loss is similar to many popular contrastive losses~\cite{oord2018representation, chen2020simple, sohn2016improved, wu2018unsupervised}. 
For instance, the contrastive loss in SimCLR~\cite{chen2020simple} can be rewritten as (with simple algebraic manipulation)
\begin{align*}
-f(x)^\top f(x^+) + \log \left(\exp\left( f(x)^\top f(x^+) \right) + \sum_{i=1}^n \exp\left(f(x)^\top f(x_i)\right)\right)\,.
\end{align*}

Here $x$ and $x^+$ are a positive pair and $x_1,\cdots, x_n$ are augmentations of other data.
Spectral contrastive loss can be seen as removing $f(x)^\top f(x^+)$ from the second term, and replacing the log sum of exponential terms with the average of the squares of $f(x)^\top f(x_i)$. 
We will show in Section~\ref{section:experiments} that our loss has a similar empirical performance as SimCLR without requiring a large batch size.

\subsection{Theoretical guarantees for spectral contrastive loss on population data}\label{section:data_assumption}

In this section, we introduce the main assumptions on the data and state our main theoretical guarantee for spectral contrastive learning on population data. 

To formalize the idea that $G$ cannot be partitioned into too many disconnected sub-graphs, we introduce the notions of \textit{Dirichlet conductance} and \textit{sparsest $m$-partition}, which are standard in spectral graph theory. Dirichlet conductance represents the fraction of edges from $S$ to its complement:
\begin{definition}[Dirichlet conductance]\label{definition:dirichlet_conductance}
	For a graph $G=(\adata, \wnode{})$ and a subset $S\subseteq \adata$, we define the Dirichlet conductance of $S$ as
	\begin{align*}
	\phi_G(S) := \frac{\sum_{x\in S, x'\notin S} \wpair{x}{x'}}{\sum_{x\in S}\wnode{x}}.
	\end{align*}
\end{definition}
We note that when $S$ is a singleton, there is $\phi_G(S)=1$ due to the definition of $\wnode{x}$. For $i\in\mathbb{Z}^+$, we introduce the sparsest $i$-partition to represent the number of edges between $i$ disjoint subsets. 
\begin{definition}[Sparsest $i$-partition]\label{definition:multi_way_expansion_constant}
	Let $G=(\adata, \wnode{})$ be the augmentation graph. For an integer $i \in [2, |\adata|]$, we define the sparsest $i$-partition as
	\begin{align*}
	\rho_i := \min_{S_1, \cdots, S_i}\max\{\phi_G(S_1),\dots, \phi_G(S_i)\}
	\end{align*}
	where $S_1, \cdots, S_i$ are non-empty sets that form a partition of $\adata$.  
\end{definition}

We note that $\rho_i$ increases as $i$ increases.\footnote{To see this, consider $3\le i\le |\adata|$. Let $S_1, \cdots, S_{i}$ be the partition of $\adata$ that minimizes the RHS of Definition~\ref{definition:multi_way_expansion_constant}
Define set $S_{i-1}' :=S_i\cup S_{i-1}$. It is easy to see that $\phi_G(S_{i-1}') = \frac{\sum_{x\in S_{i-1}', x'\notin S_{i-1}'} \wpair{x}{x'}}{\sum_{x\in S_{i-1}'}\wnode{x}} \le \frac{\sum_{j=i-1}^{i}\sum_{x\in S_j, x'\notin S_j} \wpair{x}{x'}}{\sum_{j=i-1}^{i}\sum_{x\in S_j}\wnode{x}} \le \max\{\phi_G(S_{i-1}), \phi_G(S_{i})\}$. Notice that $S_1, \cdots, S_{i-2}, S_{i-1}'$ are $i-1$ non-empty sets that form a partition of $\adata$, by Definition~\ref{definition:multi_way_expansion_constant} we have $\rho_{i-1}\le \max\{\phi_G(S_1), \cdots, \phi_G(S_{i-2}), \phi_G(S_{i-1}')\}\le \max\{\phi_G(S_{1}), \cdots, \phi_G(S_{i})\}=\rho_{i}$. }
When $r$ is the number of underlying classes, we might expect $\rho_r\approx 0$ since the augmentations from different classes almost compose a disjoint $r$-way partition of $\adata$. 
However, for $i>r$, we can expect $\rho_i$ to be much larger. For instance, in the extreme case when $i=|\adata|=N$, every set $S_j$ is a singleton, which implies that $\rho_{N}=1$. More generally, as we will show later (Lemma~\ref{proposition:rho_lower_bound}), $\rho_i$ can be expected to be at least inverse polynomial in data dimension when $i$ is larger than the number of underlying semantic classes in the data. 

\begin{assumption}[at most $m$ clusters]\label{assumption:at_most_m_clusters}
	We assume that  $\rho_{m+1}\ge \rho$. A prototypical case would be that there are at most $m$ clusters in the population augmentation graph, and each of them cannot be broken into two subsets both with conductance less than $\rho$. 
\end{assumption}

When there are $m$ clusters that have sufficient inner connections (corresponding to, e.g., $m$ semantically coherent subpopulations), we expect $\rho_{m+1}$ to be much larger than $\rho_m$ because any $m+1$ partition needs to break one sub-graph into two pieces and incur a large conductance. 
In other words, suppose the graph is consists of $m$ clusters, the quantity $\rho$ is characterizing the level of internal connection within each cluster.
Furthermore, in many cases we expect $\rho_{m+1}$ to be inverse polynomial in dimension. 
In the running example of Section~\ref{section:augmentation_graph} (where augmentation is adding Gaussian noise), $\rho$ is related to the Cheeger constant or the isoperimetric number of the data manifolds, which in many cases is believed to be at least inverse polynomial in dimension (e.g., see~\citet{bobkov1997isoperimetric} for the Cheeger constant of the Gaussian distribution.) Indeed, in Section~\ref{section:gaussian_example} we will formally lowerbound $\rho_{m+1}$ by the product of the augmentation strength and the Cheeger constant of the subpopulation distributions (Proposition~\ref{proposition:rho_lower_bound}), and lowerbound the Cheeger constant by inverse polynomial for concrete settings where the data come from a mixture of manifolds (Theorem~\ref{theorem:gaussian_example}). 




Assumption~\ref{assumption:at_most_m_clusters} also implies properties of the graph spectrum. Recall that $\gamma_i$ is the $i$-th largest eigenvalue of the normalized adjacency matrix $\norA$ and $\gamma_1 = 1$. 
According to Cheeger's inequality (Lemma~\ref{lemma:higher_order_cheeger}), Assumption~\ref{assumption:at_most_m_clusters} implies that $\gamma_{2m}\le 1-\Omega(\rho^2/\log{m})$, which suggests that there is a gap between $\gamma_1$ and $\gamma_{2m}$ and will be useful in our analysis. 

Next, we formalize the assumption that very few edges cross different ground-truth classes. It turns out that it suffices to assume that the labels are recoverable from the augmentations (which is also equivalent to that two examples in different classes can rarely be augmented into the same point).
\begin{assumption}[Labels are recoverable from augmentations]\label{definition:accurate_partition} 
	Let $\bar{x}\sim \pndata$ and $y(\bar{x})$ be its label. Let the augmentation $x\sim \aug{\bar{x}}$. We assume that there exists a classifier $g$ that can predict $y(\bar{x})$ given $x$ with error at most $\alpha$. That is, $g(x)=y(\bar{x})$ with probability at least $1-\alpha$.
\end{assumption}

A small $\alpha$ in Assumption~\ref{definition:accurate_partition} means that different classes are ``separated'' in the sense that data from different classes have very few (at most $O(\alpha)$) shared augmentations. Alternatively, one can think of this assumption as assuming that the augmentation graph can be partitioned into $r$ clusters each corresponding to augmentations from one class, and there are at most $O(\alpha)$ edges across the clusters. This is typically true for real-world image data like ImageNet, since for any two images from different classes (e.g., images of a Husky and a Birman cat), using the typical data augmentations such as adding noise and random cropping can rarely (with exponentially small probability) lead to the same augmented image. 

Typically, both $\rho$ in Assumption~\ref{assumption:at_most_m_clusters} and $\alpha$ in Assumption~\ref{assumption:realizable} are small positive values that are much less than 1. 
However, $\rho$ can be much larger than $\alpha$. 
Recall that $\rho$ can be expected to be at least inverse polynomial in dimension.
In contrast, $\alpha$ characterizes the separation between classes and are expected to be exponentially small in typical cases. For example, in the running example of Section~\ref{section:augmentation_graph} with Gaussian perturbation augmentation, if $\sigma\sqrt{d}$ is smaller than the minimum distance between two subpopulations, we can rarely augment two datapoints from distinct subpopulations into a shared augmentation, and therefore $\alpha$ is expected to exponentially small. 
Our analysis below operates in the reasonable regime where $\rho^2$ is larger than $\alpha$, which intuitively means that the internal connection within the cluster is bigger than the separation between the clusters. 


We also introduce the following assumption which states that some minimizer of the population spectral contrastive loss can be realized by the hypothesis class.

\begin{assumption}[Expressivity of the hypothesis class]\label{assumption:realizable} 
	Let $\hyp$ be a hypothesis class containing functions from $\adata$ to $\Real^k$. We assume that at least one of the global minima of $\Loss{f}$ belongs to $\hyp$. 
\end{assumption}

Our main theorem bound from above the linear probe error of the feature learned by minimizing the \textit{population} spectral contrastive loss. In Theorem~\ref{theorem:end_to_end} we extend this result to the case where both the feature and the linear head are learned from empirical datasets.
\begin{theorem}[Main theorem for infinite/population pretraining data case]\label{thm:combine_with_cheeger_simplified} 
	Assume the representation dimension $k\ge 2r$ and Assumption~\ref{definition:accurate_partition} holds for $\alpha>0$. Let $\hyp$ be a hypothesis class that satisfies Assumption~\ref{assumption:realizable} and let $\globalminf\in\hyp$ be a minimizer of $\Loss{f}$.
	Then, we have
	\begin{align*}
	\eval(\globalminf)
	\le \widetilde{O}\left(\alpha/\rho^2_{\lfloor k/2\rfloor}\right).
	\end{align*}
    In particular, if Assumption~\ref{assumption:at_most_m_clusters} also holds and $k > 2m$, we have $\eval(\globalminf)
    \le \widetilde{O}\left(\alpha/\rho^2\right)$.
\end{theorem}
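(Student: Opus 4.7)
The plan is to show that the minimizer $\globalminf$ of the population spectral contrastive loss yields an embedding that encodes the class structure of the augmentation graph in a linearly decodable form. I would proceed in four conceptual steps.

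\textbf{Step 1 (Reduction to the top-$k$ eigenvectors of $\norA$).} By Lemma~\ref{lem:spectral-contrastive-loss}, $\Loss{f} = \Lossmc{F}$ up to an additive constant, where $F\in \Real^{N\times k}$ has rows $u_x = w_x^{1/2}f(x)$. The Eckart--Young--Mirsky theorem characterizes any global minimizer $\hatF$ of $\|\norA - FF^\top\|_F^2$ as $\hatF = \eigF\cdot \diag([\sqrt{\eigvadj_1},\dots,\sqrt{\eigvadj_k}])\,Q$ for some orthogonal $Q$, where $\eigF$ collects the top-$k$ eigenvectors of $\norA$. Invoking Lemma~\ref{lemma:prediction_same_with_matrix} with the positive diagonal $\diag(\{w_x^{-1/2}\})$ on the left and the invertible $\diag([\sqrt{\eigvadj_i}])Q$ on the right converts the rows of $[f(x)]_x$ (the embedding matrix whose rows are $f(x)$) into the rows of $\eigF$, so $\eval(\globalminf) = \eval(\eigF)$. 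Hence it suffices to exhibit a single linear classifier $B\in\Real^{k\times r}$ that acts well on the rows of $\eigF$.

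\textbf{Step 2 (The class partition has small total conductance).} Let $g$ be the augmented-data classifier promised by Assumption~\ref{definition:accurate_partition} and set $S_i = g^{-1}(i)\subseteq \adata$; the $\{S_i\}_{i\in[r]}$ partition $\adata$. For a positive pair $(x,x^+)$, $g(x) = y(\bar x) = g(x^+)$ fails with probability at most $2\alpha$, so $\sum_{i}\sum_{x\in S_i,\,x'\notin S_i} \wpair{x}{x'} \le 2\alpha$. Introducing the normalized indicator $\chi_i\in\Real^{N}$ with $\chi_i(x)=\sqrt{w_x}/\bigl(\sum_{x'\in S_i} w_{x'}\bigr)^{1/2}$ for $x\in S_i$ and $0$ elsewhere, the standard Rayleigh identity yields $\chi_i^\top (I-\norA)\chi_i = \phi_G(S_i)$, and the class-mass-weighted sum satisfies $\sum_i \bigl(\sum_{x\in S_i} w_x\bigr)\,\phi_G(S_i) \le 2\alpha$.

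\textbf{Step 3 (Higher-order Cheeger places $\chi_i$ close to the top-$k$ eigenspace).} Let $0=\lambda_1\le\dots\le\lambda_N$ be the eigenvalues of the normalized Laplacian $L=I-\norA$ and let $V_k$ be the span of eigenvectors for $\lambda_1,\dots,\lambda_k$ (equivalently, the top-$k$ eigenspace of $\norA$). The higher-order Cheeger inequality of Lee--Oveis Gharan--Trevisan gives $\lambda_m \ge \widetilde\Omega(\rho_m^2)$, where the $\widetilde\Omega$ absorbs factors polynomial in $m$. Applied at $m = \lfloor k/2\rfloor+1$, which exceeds $r$ since $k\ge 2r$---so there are at least $\lfloor k/2\rfloor$ ``slack'' eigenvectors between the class-indicator level and the cutoff---this yields $\lambda_{\lfloor k/2\rfloor+1}\ge\widetilde\Omega(\rho^2_{\lfloor k/2\rfloor})$. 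Decomposing $\chi_i = \chi_i^{\parallel}+\chi_i^{\perp}$ along $V_{\lfloor k/2\rfloor}$ and its orthogonal complement and using $\|\chi_i^{\perp}\|^2 \le \chi_i^\top L\chi_i /\lambda_{\lfloor k/2\rfloor+1}$ gives $\|\chi_i^{\perp}\|^2 \le \widetilde O\bigl(\phi_G(S_i)/\rho_{\lfloor k/2\rfloor}^2\bigr)$; since $V_{\lfloor k/2\rfloor}\subseteq V_k$, the same bound controls the distance of $\chi_i$ to $V_k$.

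\textbf{Step 4 (From subspace approximation to linear probe error).} As each $\chi_i^{\parallel}$ lies in the column span of $\eigF$, write $\chi_i^{\parallel}=\eigF b_i$ for $b_i\in\Real^k$ and take $B=[b_1,\dots,b_r]$ as the candidate linear classifier. The $i$-th coordinate of the $x$-th row of $\eigF B$ is then $\chi_i^{\parallel}(x)$, whose target value is $\chi_i(x)\propto \sqrt{w_x}\cdot \id{g(x)=i}$. Summing $\|\chi_i^{\perp}\|^2 = \sum_x\bigl(\chi_i^{\parallel}(x)-\chi_i(x)\bigr)^2$ across $i$ and applying a Markov-type argument weighted by $w_x$ shows that $\pred_{f,B}$ disagrees with $g$ on only a $\widetilde O(\alpha/\rho_{\lfloor k/2\rfloor}^2)$-fraction of the augmented-data mass. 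Combining this with the $\alpha$-accuracy of $g$ from Assumption~\ref{definition:accurate_partition} and passing to the ensembled predictor $\npred_{f,B}$ on $\pndata$ yields $\eval(\globalminf)\le \widetilde O(\alpha/\rho_{\lfloor k/2\rfloor}^2)$.

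\textbf{Main obstacle.} The delicate step will be Step~4: subspace proximity is an $\ell_2$ statement on vectors, whereas the linear probe error is an $\argmax$-based classification quantity. Standard spectral clustering analyses sidestep this via $k$-means rounding, which is nonlinear and generally yields loose constants. The argument must instead exploit the explicit algebraic form of $\chi_i$---in particular that $\chi_i(x)-\chi_j(x)$ has a clean $\sqrt{w_x}$-proportional gap whenever $g(x)\in\{i,j\}$---together with the averaging built into the ensembled predictor $\npred_{f,B}$, to convert the $\ell_2$-error bound into a pointwise classification bound. I expect this is where the $\lfloor k/2\rfloor$ rather than $k$, and the polynomial-in-$k$ factors hidden inside $\widetilde O$, naturally arise.
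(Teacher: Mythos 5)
Your high-level plan is the same as the paper's: reduce to the top-$k$ eigenvectors of $\norA$ via Eckart--Young and Lemma~\ref{lemma:prediction_same_with_matrix}, form indicator vectors for the partition induced by $g$, control their distance from the top eigenspace via Rayleigh quotient and higher-order Cheeger, and round $\ell_2$ error to $0$-$1$ error by a Markov-type argument. Steps~1 and~2 are sound. There are, however, two places in Steps~3--4 where the argument as written does not close, and in both cases the paper makes a different choice that you would need to adopt.

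First, the normalization of your indicator vectors $\chi_i$ is the wrong one. With $\|\chi_i\|_2=1$, the Rayleigh quotient/eigengap bound gives $\|\chi_i^\perp\|^2 \lesssim \phi_G(S_i)/\lambda$, and summing over $i$ produces $\sum_i \phi_G(S_i)/\lambda$. But $\sum_i \phi_G(S_i)$ is \emph{not} controlled by $\alpha$ --- it can be as large as $r$ even when the \emph{mass-weighted} sum $\sum_i \bigl(\sum_{x\in S_i}w_x\bigr)\phi_G(S_i)=\phi^{\hat y}\le 2\alpha$ is tiny (e.g., a vanishingly small class whose conductance is $\Theta(1)$). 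You derive the mass-weighted bound in Step~2 but then discard it in Steps~3--4. The fix is the paper's choice: use the \emph{unnormalized} $u_i(x)=\sqrt{w_x}\,\id{x\in S_i}$, for which $\|u_i\|_2^2=\sum_{x\in S_i}w_x$, so that $\sum_i\|u_i^\perp\|^2\le \sum_i\bigl(\sum_{x\in S_i}w_x\bigr)\phi_G(S_i)/\lambda\le 2\alpha/\lambda$ and, after undoing the $\sqrt{w_x}$ scaling, the linear target is a genuine one-hot vector with a class-independent margin of $1$, so the Markov rounding threshold is a constant. Your $\chi_i$ would additionally introduce class-size-dependent margins $1/\sqrt{m_i}$ that further complicate Step~4.

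Second, your application of higher-order Cheeger in Step~3 does not give a $\widetilde O$ in the sense the theorem uses (constants and $\log k$ only). Projecting onto $V_{\lfloor k/2\rfloor}$ and then asserting $\lambda_{\lfloor k/2\rfloor+1}\ge\widetilde\Omega(\rho_{\lfloor k/2\rfloor}^2)$ with only poly-logarithmic loss is not available; the known bounds either compare $\lambda_m$ with $\rho_m$ at the cost of $\poly(m)$ (you acknowledge this, but that is precisely what $\widetilde O$ is \emph{not} allowed to hide), or compare $\lambda_{(1+\zeta)m}$ with $\rho_m$ at the cost of $\poly(1/\zeta)\log m$. The paper takes the second route (Proposition~1.2 of Louis--Makarychev, quoted as Lemma~\ref{lemma:higher_order_cheeger}): project onto the full $V_k$ so the denominator is $\lambda_{k+1}$, then set $t=\lfloor k/2\rfloor$ and $(1+\zeta)t=k+1$ so $\zeta=\Theta(1)$, yielding $\lambda_{k+1}\gtrsim \rho_{\lfloor k/2\rfloor}^2/\log(k+1)$. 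So the $\lfloor k/2\rfloor$ comes from the Cheeger index slack, not (as you conjecture) from the $\ell_2$-to-$0$-$1$ conversion; the role of $k\ge 2r$ is only to ensure $\lfloor k/2\rfloor\ge r$ so that $\rho_{\lfloor k/2\rfloor}$ is not trivially driven by the inter-class cuts.
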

Here we use $\widetilde{O}(\cdot)$ to hide universal constant factors and logarithmic factors in $k$.  We note that $\alpha=0$ when augmentations from different classes are perfectly disconnected in the augmentation graph, in which case the above theorem guarantees the exact recovery of the ground truth. Generally, we expect $\alpha$ to be an extremely (exponentially) small constant independent of $k$, whereas $\rho_{\lfloor k/2\rfloor}$ increases with $k$ and can be at least inverse polynomial when $k$ is reasonably large, hence much larger than $\sqrt{\alpha}$. 
We characterize the $\rho_k$'s growth on more concrete distributions in the next subsection. When $k > 2m$, as argued below Assumption~\ref{definition:accurate_partition}, we expect that $\alpha \ll \rho^2 \le \rho^2_{m+1}$ and thus the error $\alpha/\rho^2$ is sufficiently small. 

Previous works on graph partitioning~\cite{lee2014multiway, arora2009expander, leighton1999multicommodity} often analyze the rounding algorithms that conduct clustering based on the representations of unlabeled data and do not analyze the performance of linear probe (which has access to labeled data). These results provide guarantees on the approximation ratio---the ratio between the conductance of the obtained partition to the best partition---which may depend on graph size~\cite{arora2009expander} that can be exponentially large in our setting. The approximation ratio guarantee does not lead to a guarantee on the representations' performance on downstream tasks. Our guarantees are on the linear probe accuracy on the downstream tasks and independent of the graph size. We rely on the formulation of the downstream task's labeling function (Assumption~\ref{definition:accurate_partition}) as well as a novel analysis technique that characterizes the linear structure of the representations. In Section~\ref{section:proof_for_main_result}, we provide the proof of  Theorem~\ref{thm:combine_with_cheeger_simplified} as well as its more generalized version where $k/2$ is relaxed to be any constant fraction of $k$. 
A proof sketch of Theorem~\ref{thm:combine_with_cheeger_simplified} is given in Section~\ref{section:proof_sketch}.

\subsection{Provable instantiation of Theorem~\ref{thm:combine_with_cheeger_simplified} to mixture of manifold data}\label{section:gaussian_example}

In this section, we exemplify Theorem~\ref{thm:combine_with_cheeger_simplified} 
on examples where the natural data distribution is a mixture of manifolds. 

\newcommand{\cheeger}{\tau}
We first show that in the running example given in Section~\ref{section:augmentation_graph}, Assumption~\ref{assumption:at_most_m_clusters} holds for some $\rho$ that is closely related to the Cheeger constant of the data manifolds. Recall that the Cheeger constant or isoperimetric number~\cite{buser1982note} of a distribution $\mu$ with density $p$ over $\R^d$ is defined as  
\begin{align}
	h_\mu := \inf_{S\subset \R^d} \frac{\int_{\partial S}p(x) dx}{\min\{\int_S p(x)dx, \int_{\R^d \backslash S} p(x)dx \}}.
\end{align}
Here the denominator is the smaller one of volumes of $S$ and $\Real^d\backslash S$, whereas the numerator is the surface area of $S$. (See e.g.,\cite{chen2021almost} for the precise definition of the boundary measure $\partial S$.) 
The following proposition (proved in Section~\ref{section:proof_rho_lower_bound}) shows that $\rho$ scales linearly in the augmentation strength and  the Cheeger constant. 
\begin{proposition}\label{proposition:rho_lower_bound}
	Suppose the natural data distribution $\pndata$ is a mixture of $m$ distributions $P_1, \cdots, P_m$ supported on disjoint subsets of $\R^d$, and the data augmentation is Gaussian perturbation sampled from $\mathcal{N}(0, \sigma^2 \cdot I_{d\times d})$. Then, 
	\begin{align}
		\lim_{\sigma \rightarrow 0^+}\frac{\rho_{m+1}}{\sigma} \gtrsim  \min_{i\in[m]} h_{P_i}
	\end{align}
	That is, $\rho_{m+1}$ is at least linear in the augmentation size $\sigma$ and the Cheeger constants of subpopulations. 
\end{proposition}

In many cases, the Cheeger constant is at least inverse polynomial in the data dimension~\cite{chen2021almost, lee2016eldan}. 
When the manifolds $P_i$ are spherical Gaussian with unit identity covariance, the Cheeger constant is $\Omega(1)$\cite{bobkov1997isoperimetric},  and thus the distribution $\pndata$ in Proposition~\ref{proposition:rho_lower_bound} satisfies Assumption~\ref{assumption:at_most_m_clusters} with $\rho\gtrsim \sigma$.
Furthermore, when the distribution is transformed by a function with Lipschitzness $\kappa>0$, the Cheeger constant changes by a factor at most $\kappa$. Therefore, Proposition~\ref{proposition:rho_lower_bound} also applies to a mixture of manifolds setting defined below. 

 In the rest of this section, we instantiate Theorem~\ref{thm:combine_with_cheeger_simplified} on a mixture of manifolds example where the data is generated from a Lipschitz transformation of a mixture of Gaussian distributions, and give an error bound for the downstream classification task.
\begin{example}[Mixture of manifolds]\label{example:gaussian}
	Suppose $\pndata$ is mixture of $r\le d$ distributions $P_1,\cdots, P_r$, where each $P_i$ is generated by some $\kappa$-bi-Lipschitz\footnote{A $\kappa$ bi-Lipschitz function satisfies $\frac{1}{\kappa}\norm{f(x)-f(y)}_2\le \norm{x-y}_2 \le \kappa\norm{f(x)-f(y)}_2$.} generator $Q:\Real^{d'}\rightarrow\Real^d$ on some latent variable $z\in\Real^{d'}$ with $d'\le d$ which as a mixture of Gaussian distribution:
	\begin{align*}
	x\sim P_i \iff x = Q(z), z\sim \mathcal{N}(\mu_i, \frac{1}{d'}\cdot I_{d'\times d'}).
	\end{align*}
	Let the data augmentation of a natural data sample $\bar{x}$ be $\bar{x}+\xi$ where $\xi\sim\mathcal{N}(0, \frac{\sigma^2}{d} \cdot I_{d\times d})$ is isotropic Gaussian noise with $0<\sigma\lesssim \frac{1}{\sqrt{d}}$. We also assume $\min_{i\ne j}\norm{\mu_i-\mu_j}_2\gtrsim \frac{\kappa\cdot\sqrt{\log d}}{\sqrt{d'}}$. 
	
	Let $\bar{y}(x)$ be the most likely mixture index $i$ that generates $x$: $\bar{y}(x) := \arg\max_{i}P_i(x)$. The simplest downstream task can have label $y(x) = \bar{y}(x)$. More generally, 
	let $r'\le r$ be the number of labels, and the label $y(x)\in [r']$ in the downstream task be  equal to $\pi(\bar{y}(x))$ where $\pi$ is a function that maps $[r]$ to $[r']$.
\end{example}

We note that the intra-class distance in the latent space is on the scale of $\Omega(1)$, which can be much larger than the distance between class means which is assumed to be $\gtrsim \frac{\kappa\cdot\sqrt{\log d}}{\sqrt{d'}}$. Therefore, distance-based clustering algorithms do not apply. Moreover, in the simple downstream tasks, the label for $x$ could be just the index of the mixture where $x$ comes from. We also allow downstream tasks that merge the $r$ components into $r'$ labels as long as each mixture component gets the same label. 
We apply Theorem~\ref{thm:combine_with_cheeger_simplified} and get the following theorem:
\begin{theorem}[Theorem for the mixture of manifolds example]\label{theorem:gaussian_example}
	When $k\ge2r+2$, Example~\ref{example:gaussian} satisfies Assumption~\ref{definition:accurate_partition} with $\alpha\le\frac{1}{\textup{poly}(d)}$, and has $\rho_{\lfloor k/2\rfloor} \gtrsim \frac{\sigma}{\kappa\sqrt{d}}$. As a consequence, the error bound is $\eval(\globalminf)\le\widetilde{O}\left(\frac{\kappa^2}{\sigma^2 \cdot \textup{poly}(d)}\right)$. 
\end{theorem}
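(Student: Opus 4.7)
The plan is to verify the three quantities appearing in Theorem~\ref{thm:combine_with_cheeger_simplified}—the classifier error $\alpha$ from Assumption~\ref{definition:accurate_partition}, the sparsest-$m$-partition constant $\rho_{\lfloor k/2\rfloor}$, and then simply combine them. I expect Step 2 (the conductance lower bound) to be the main obstacle, because it requires translating Gaussian isoperimetry through the bi-Lipschitz generator $Q$ and relating geometric surface area to graph conductance under a ``Gaussian-blurred'' adjacency.

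For Step 1, I would construct an explicit classifier $g$ acting on an augmented sample $x = Q(z) + \xi$ as follows: project $x$ onto the manifold $M = Q(\R^{d'})$ to obtain $\hat{x}$, set $\hat{z} = Q^{-1}(\hat{x})$, and return $g(x) = \argmin_i \|\hat{z} - \mu_i\|_2$. Two applications of Gaussian concentration take care of the error. First, $\|\xi\|_2 \lesssim \sigma\sqrt{\log d}$ with probability $1 - 1/\poly(d)$, so bi-Lipschitzness of $Q$ yields $\|\hat{z} - z\|_2 \leq \kappa\|\hat{x}-\bar x\|_2 \leq \kappa\|\xi\|_2 \lesssim \kappa/\sqrt d$ up to log factors. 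Second, since $z - \mu_{y(\bar{x})} \sim \cN(0, I_{d'}/d')$, the identity $\|z-\mu_j\|_2^2 - \|z-\mu_{y(\bar x)}\|_2^2 = \|\mu_{y(\bar x)}-\mu_j\|_2^2 + 2\langle z-\mu_{y(\bar x)}, \mu_{y(\bar x)}-\mu_j\rangle$ together with Gaussian tail bounds on the inner product and the assumed separation $\min_{i\neq j}\|\mu_i-\mu_j\|_2 \gtrsim \kappa\sqrt{\log d}/\sqrt{d'}$ imply that $z$ is closer to $\mu_{y(\bar x)}$ than to any other $\mu_j$ by a margin much larger than the $O(\kappa/\sqrt{d})$ error in $\hat{z}$, except on an event of probability $1/\poly(d)$. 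A union bound over the $r\le d$ classes yields $\alpha \le 1/\poly(d)$.

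For Step 2, since $k\ge 2r+2$ we have $\lfloor k/2\rfloor\ge r+1$, so in any partition $S_1,\ldots,S_{\lfloor k/2\rfloor}$ of $\adata$ the pigeonhole principle forces some class $P_i$ to split into at least two pieces $A$ and $\adata_i\setminus A$, both of positive probability under the class. It therefore suffices to lower-bound $\phi_G(S_j) \gtrsim \sigma/(\kappa\sqrt d)$ whenever $S_j$ carves off a nontrivial subset of a single class. This is the technical heart: pulling $S_j$ back through $Q^{-1}$ converts the question into bounding the conductance of a subset of $\R^{d'}$ under a Markov chain whose transition kernel $p(z'|z)\propto\int p_i(\bar z)\,\cN(Q(z);Q(\bar z),\tfrac{\sigma^2}{d}I)\,\cN(Q(z');Q(\bar z),\tfrac{\sigma^2}{d}I)\,d\bar z$ is a double convolution of Gaussians with the class density. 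Bi-Lipschitzness of $Q$ shows this kernel stochastically dominates (up to $\kappa^{O(1)}$ factors) an isotropic Gaussian transition in $z$-space with step size $\sim\sigma/\kappa$. Combining the standard Gaussian isoperimetric inequality on $(\R^{d'},\cN(\mu_i,I/d'))$ with the estimate ``Markov conductance $\gtrsim$ (step size) $\cdot\,\mu^+(A)/\mu(A)$'' delivers the bound $\phi_G(S_j)\gtrsim \sigma/\kappa \cdot \sqrt{d'}\cdot\varphi(\Phi^{-1}(p))/p$, which even in the worst case $d'=1$ dominates $\sigma/(\kappa\sqrt d)$. Crossing-edges between distinct classes contribute an additional $1/\poly(d)$ term (by Step 1) that is absorbed.

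Step 3 is routine: plugging $\alpha\le 1/\poly(d)$ and $\rho_{\lfloor k/2\rfloor}\gtrsim \sigma/(\kappa\sqrt d)$ into Theorem~\ref{thm:combine_with_cheeger_simplified} gives $\eval(\globalminf)\le\widetilde O(\alpha/\rho^2_{\lfloor k/2\rfloor}) \le \widetilde O\!\left(\kappa^2/(\sigma^2\cdot\poly(d))\right)$. The hardest part will indeed be making the conductance estimate rigorous, in particular justifying the reduction to a clean Gaussian random walk in $z$-space: one must show that on a high-probability event the manifold projection $x\mapsto\hat x$ is well-defined and stable, and that edges crossing from one class to another contribute negligibly to both the boundary and the volume in the definition of $\phi_G$.
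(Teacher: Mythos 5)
Your overall plan (bound $\alpha$, bound $\rho_{\lfloor k/2\rfloor}$, then invoke Theorem~\ref{thm:combine_with_cheeger_simplified}) is exactly the paper's plan, and your Step~1 and Step~3 match the paper's argument in substance. Step~2, however, has a genuine gap in the structural combinatorial part, which is what you dismissed as ``pigeonhole.''

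You argue that with $\lfloor k/2\rfloor > r$ cells, some class $P_i$ must split into two nontrivial pieces, and you then try to lower-bound $\phi_G(S_j)$ for a cell $S_j$ that ``carves off a nontrivial subset of a single class.'' This is not enough. The conductance $\phi_G(S_j) = \big(\sum_{x\in S_j,\,x'\notin S_j}w_{xx'}\big)/\big(\sum_{x\in S_j}w_x\big)$ has a denominator that counts all mass in $S_j$, across all classes. If $S_j$ contains a small slice of class $P_i$ but essentially all of some other class $P_\ell$, the numerator (boundary mass) stays tiny while the denominator is $\Omega(1)$, so $\phi_G(S_j)$ can be arbitrarily small. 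The pigeonhole-style fact you need is \emph{stronger}: since each class can have at most one ``majority'' cell (a cell that captures $>1/2$ of its augmentation mass), and there are $\lfloor k/2\rfloor > r$ cells, there is a cell $S_t$ with $\Pr_{\bar{x}\sim P_i,\, x\sim\aug{\bar{x}}}(x\in S_t)\le 1/2$ \emph{for every} class $i$. Only with this $S_t$ in hand can you use $\phi_G(S_t)\ge\min_j\frac{\E_{\bar{x}\sim P_j}[\Pr(S_t\mid\bar{x})(1-\Pr(S_t\mid\bar{x}))]}{\E_{\bar{x}\sim P_j}[\Pr(S_t\mid\bar{x})]}$ and have every term in the $\min$ under control (in particular the numerator does not collapse to zero for any $j$). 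The $\le 1/2$ bound also feeds directly into the isoperimetric step: the paper bounds $\Pr_{P_j}(\{x:\Pr(S_t\mid x)\ge 2/3\})\le 3/4$ before applying Gaussian isoperimetry, which is what keeps $c_{\tau/\kappa}$ bounded away from $0$. Without it, the ``big'' set $R$ could carry almost all of $P_j$'s mass and the isoperimetric profile degenerates.

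A secondary issue: your phrase ``Markov conductance $\gtrsim$ (step size)$\cdot\mu^+(A)/\mu(A)$'' hides the actual technical work. Because $\Pr(S_t\mid x)$ is a $[0,1]$-valued smooth function rather than an indicator, one needs an explicit continuity estimate showing that if $\Pr(S_t\mid x)\ge 2/3$ and $\|x-x'\|_2\le\tau$ then $\Pr(S_t\mid x')\gtrsim\exp\!\left(-\frac{2c_\sigma\tau+\tau^2}{2\sigma^2/d}\right)$ (the paper's Claim~\ref{claim:gaussian_example_claim1}), combined with a two-case split on whether the ``large-probability'' region $R$ or its complement carries the mass. Your sketch assumes away the smooth-probability issue. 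Once you add the ``no majority cell'' structural lemma and the continuity estimate, the rest of your Step~2 (Gaussian isoperimetry in latent space transported through the $\kappa$-bi-Lipschitz $Q$) is the right idea and matches the paper. Your Step~1, projecting onto the manifold before inverting $Q$, is actually a cleaner handling of the fact that $Q(z)+\xi$ need not lie in the image of $Q$ than the paper's own notation suggests.
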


The theorem above guarantees small error even when $\sigma$ is polynomially small. In this case, the augmentation noise has a much smaller scale than the data (which is at least on the order of $1/\kappa$). This suggests that contrastive learning can non-trivially leverage the structure of the underlying data and learn good representations with relatively weak augmentation. To the best of our knowledge, it is difficult to apply the theorems in previous works~\citep{arora2019theoretical,lee2020predicting,tosh2020contrastive,tosh2021contrastive,wei2020theoretical} to this example and get similar guarantees with polynomial dependencies on $d,\sigma,\kappa$. The work of~\citet{wei2020theoretical} can apply to the setting where $r$ is known and the downstream label is equal to $\bar{y}(x)$, but cannot handle the case when $r$ is unknown or when two mixture component can have the same label. 
We refer the reader to the related work section for more discussions and comparisons.
The proof can be found in Section~\ref{section:proof_Gaussian_thm}.

\section{Finite-sample generalization bounds}\label{section:finite_sample}

\subsection{Unlabeled sample complexity for pretraining}
In Section~\ref{section:framework}, we provide guarantees for spectral contrastive learning on population data. In this section, we show that these guarantees can be naturally extended to the finite-sample regime with standard concentration bounds.
In particular, given a unlabeled pretraining dataset $\{\bar{x}_1, \bar{x}_2, \cdots, \bar{x}_\npt\}$ with $\bar{x}_i\sim\pndata$, we learn a feature extractor by minimizing the following \textit{empirical spectral contrastive loss}:
\begin{align*}
\eLoss{\npt}{f} := -\frac{2}{n}\sum_{i=1}^{\npt}\Exp{ {x}\sim\aug{\bar{x}_i}\atop {x}^+\sim\aug{\bar{x}_i}}\left[f({x})^\top f(x^+)\right] + \frac{1}{\npt(\npt-1)}\sum_{i\ne j}\Exp{{x}\sim \aug{\bar{x}_i}\atop  {x}^-\sim\aug{\bar{x}_j}}\left[\left(f({x})^\top f({x}^-)\right)^2\right].
\end{align*}

It is worth noting that $\eLoss{\npt}{f}$ is an unbiased estimator of the population spectral contrastive loss $\Loss{f}$. (See Claim~\ref{claim:eloss_equal_loss} for a proof.)
Therefore, we can derive generalization bounds via off-the-shelf concentration inequalities. Let $\hyp$ be a hypothesis class containing feature extractors from $\adata$ to $\Real^k$. We extend Rademacher complexity to function classes with high-dimensional outputs and define the Rademacher complexity of $\hyp$ on $n$ data as 
$
\rad{n}(\hyp) := \max_{x_1, \cdots, x_n \in \adata} \Exp{\sigma}\left[\sup_{f\in\hyp, i\in[k]} \frac{1}{n} \left(\sum_{j=1}^n \sigma_jf_i(x_j)\right)\right],
$
where $\sigma$ is a uniform random vector in $\{-1, 1\}^n$ and $f_i(z)$ is the $i$-th dimension of $f(z)$. 

Recall that $\globalminf\in\hyp$ is a minimizer of $\Loss{f}$. The following theorem with proofs in Section~\ref{section:proof_for_nn_generalization} bounds the population loss of a feature extractor trained with finite data:
\begin{theorem}[Excess contrasitve loss]\label{theorem:nn_generalization}
	For some $\boundf>0$, assume $\norm{f(x)}_\infty\le \boundf$ for all $f\in\hyp$ and $x\in\adata$.
	Let $\globalminf\in\hyp$ be a minimizer of the population loss $\Loss{f}$. Given a random dataset of size $\npt$, let $\empminf\in\hyp$ be a minimizer of empirical loss $\eLoss{\npt}{f}$. 
	Then, when Assumption~\ref{assumption:realizable} holds, with probability at least $1-\delta$ over the randomness of data, we have
	\begin{align*}
	\Loss{\empminf} \le \Loss{\globalminf} + c_1\cdot \rad{\npt/2}(\hyp) + c_2\cdot\left(\sqrt{\frac{\log 2/\delta}{\npt}} +{\delta}\right),
	\end{align*}
	where constants $c_1\lesssim k^2\boundf^2+k\boundf$ and $c_2 \lesssim k\boundf^2 + k^2\boundf^4$. 
\end{theorem}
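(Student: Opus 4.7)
The plan is to apply the standard ERM-via-Rademacher-complexity template, with the twist that the empirical loss $\eLoss{n}{f}$ contains a degree-2 U-statistic. First, I would invoke (or verify in the style of) Claim~\ref{claim:eloss_equal_loss} to confirm that $\E\bigl[\eLoss{n}{f}\bigr] = \Loss{f}$ for every fixed $f$; this makes $\eLoss{n}{f}$ an unbiased estimator of the population loss. Then the usual ``add and subtract'' argument gives
\begin{align*}
\Loss{\empminf} - \Loss{\globalminf}
&\le \bigl(\Loss{\empminf} - \eLoss{n}{\empminf}\bigr) + \bigl(\eLoss{n}{\empminf} - \eLoss{n}{\globalminf}\bigr) + \bigl(\eLoss{n}{\globalminf} - \Loss{\globalminf}\bigr) \\
&\le 2 \sup_{f\in\hyp} \bigl|\eLoss{n}{f} - \Loss{f}\bigr|,
\end{align*}
where the middle term is nonpositive because $\empminf$ minimizes $\eLoss{n}{\cdot}$. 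So the whole job reduces to a uniform-convergence bound on $\eLoss{n}{f}$.

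Next I would split $\eLoss{n}{f}$ into its two pieces: the ``linear'' piece $L_1(f) := -\frac{2}{n}\sum_i \E_{x,x^+\sim\aug{\bar{x}_i}}[f(x)^\top f(x^+)]$, which is an average of i.i.d.\ bounded random variables in $[-k\boundf^2, k\boundf^2]$, and the ``quadratic'' piece $L_2(f) := \frac{1}{n(n-1)}\sum_{i\ne j} \E_{x\sim\aug{\bar{x}_i}, x'\sim\aug{\bar{x}_j}}\!\bigl[(f(x)^\top f(x'))^2\bigr]$, a degree-2 U-statistic with summands bounded by $k^2\boundf^4$. For $L_1$, standard symmetrization plus Talagrand's contraction lemma (the map $(a,b)\mapsto a^\top b$ is $O(k\boundf)$-Lipschitz on $[-\boundf,\boundf]^{2k}$) yields $\E\sup_f |L_1(f) - \E L_1(f)| \lesssim k\boundf \cdot \rad{n}(\hyp)$.

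The main obstacle is the U-statistic $L_2$, for which I would use the classical Hoeffding trick: write $L_2(f)$ as the average, over all $n!$ permutations $\pi$ of the indices, of the ``paired'' estimator $\widetilde L_2^\pi(f) := \frac{2}{n}\sum_{i=1}^{n/2} h_f(\bar x_{\pi(2i-1)}, \bar x_{\pi(2i)})$ where $h_f(\bar u,\bar v) := \E_{x\sim\aug{\bar u}, x'\sim\aug{\bar v}}[(f(x)^\top f(x'))^2]$. By Jensen's inequality, $\sup_f |L_2(f) - \E L_2(f)| \le \E_\pi \sup_f |\widetilde L_2^\pi(f) - \E\widetilde L_2^\pi(f)|$, and each $\widetilde L_2^\pi$ is an average of $n/2$ i.i.d.\ bounded random variables. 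Symmetrization plus contraction of the $O(k\boundf^2)$-Lipschitz map $(a,b)\mapsto(a^\top b)^2$ then gives $\E\sup_f|L_2(f) - \E L_2(f)| \lesssim k^2\boundf^2 \cdot \rad{n/2}(\hyp)$. Combining the two pieces accounts precisely for the $c_1 \asymp k^2\boundf^2 + k\boundf$ scaling and the $\rad{n/2}(\hyp)$ (rather than $\rad{n}(\hyp)$) factor in the statement.

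Finally, to pass from an expected to a high-probability bound I would apply McDiarmid's bounded-differences inequality: changing a single $\bar x_i$ perturbs $L_1(f)$ by at most $O(k\boundf^2/n)$ and $L_2(f)$ by at most $O(k^2\boundf^4/n)$ (the summands of $L_2$ involving $\bar x_i$ total at most $2(n-1)$ out of $n(n-1)$), giving a sub-Gaussian tail of width $\lesssim (k\boundf^2 + k^2\boundf^4)\sqrt{\log(2/\delta)/n}$ after a union bound over the two terms. I expect the stray additive $\delta$ in $c_2\cdot(\sqrt{\log(2/\delta)/n} + \delta)$ to originate from a Markov-style step used to promote the $\E_\pi$ bound on the U-statistic to a guarantee that holds jointly with the McDiarmid event (e.g., discarding a probability-$\delta$ event where the random permutation/split is atypical). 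The main conceptual hurdle is purely the U-statistic handling; everything else is routine symmetrization and contraction.
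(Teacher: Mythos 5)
Your plan is correct and targets the same technical obstacle as the paper's proof---the U-statistic structure of the quadratic term of $\eLoss{n}{f}$---but the route is genuinely different. The paper does not split $\eLoss{n}{f}$ into $L_1+L_2$; it instead constructs (Definition~\ref{definition:tuple}) a single estimator $\sLoss{f}$ from a \emph{randomly sampled} permutation $\pi$ and \emph{freshly sampled} augmentations $z_i,z_i^+,z_i'$, which is an average of $n/2$ i.i.d.\ bounded triples, applies one Rademacher/contraction bound (Lemma~\ref{lemma:Rademacher}) to $\sLoss{f}$ over those tuples, and then uses a Markov-type argument to transfer the resulting high-probability statement over the joint randomness of $(\edata,S)$ into a statement over $\edata$ alone. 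That transfer is the source of the additive $\delta$: your guess that the $\delta$ is a Markov artifact is correct about the paper's proof, though it covers both the random permutation \emph{and} the random augmentations embedded in $\sLoss{f}$, not just the pairing. Your proposal instead uses Hoeffding's deterministic average over all $n!$ pairings, Jensen, and McDiarmid applied directly to $\sup_f|\eLoss{n}{f}-\Loss{f}|$ as a function of $\edata$; carried out carefully this is cleaner and would in fact \emph{avoid} the stray $\delta$ entirely. One detail you should make explicit: both $L_1$ and your kernel $h_f$ keep an inner expectation over augmentations, so after symmetrizing over the natural data you cannot immediately apply Talagrand's contraction to $(a,b)\mapsto a^\top b$ or $(a,b)\mapsto(a^\top b)^2$; you need an additional Jensen/desymmetrization step that pulls actual samples $x\sim\aug{\bar x}$ outside the supremum before contracting---which is exactly what the paper's $\sLoss{f}$ builds in by sampling augmentations up front. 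With that step supplied, your argument recovers the paper's constants $c_1\lesssim k^2\boundf^2+k\boundf$ and $c_2\lesssim k\boundf^2+k^2\boundf^4$.
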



The Rademacher complexity usually looks like $\rad{n}(\hyp) = \sqrt{{R}/{n}}$ where $R$ measures the complexity of $\hyp$ (hence only depends on $\hyp$). This suggests that when $\kappa$ is $O(1)$, the sample complexity for acheiving suboptimality $\epsilon$ on population loss is $O(k^4R/\epsilon^2)$.
We can apply Theorem~\ref{theorem:nn_generalization} to any hypothesis class $\hyp$ of interest (e.g., deep neural networks) and plug in off-the-shelf Rademacher complexity bounds. For instance, in Section~\ref{section:proof_of_nn_norm_contral} we give a corollary of Theorem~\ref{theorem:nn_generalization} when $\hyp$ contains deep neural networks with ReLU activation.

The theorem above shows that we can achieve near-optimal population loss by minimizing empirical loss up to some small excess loss. The following theorem characterizes how the error propagates to the linear probe performance mildly under some spectral gap conditions.

\begin{theorem}[Minimum downstream error]\label{theorem:linear_probe_suboptimal}
	In the setting of Theorem~\ref{theorem:nn_generalization}, suppose Assumption~\ref{assumption:at_most_m_clusters} holds for $\rho>0$, Assumption~\ref{definition:accurate_partition} holds for $\alpha>0$, Assumption~\ref{assumption:realizable} holds, and the representation dimension $k\ge \max\{4r+2, 2m\}$,.
	Then, with $1-\delta$ probability over the randomness of data, for any $\empminf\in\hyp$ that minimizes the empirical loss $\eLoss{\npt}{f}$, we have that 
	\begin{align*}
		\eval(\empminf) \lesssim \frac{\alpha}{\rho^2}\cdot\log k + \frac{ck}{\Delta_\eigvadj^2}  \left(\rad{\npt/2}(\hyp) + \sqrt{\frac{\log 2/\delta}{\npt}} +{\delta}\right),
	\end{align*}
	where $c\lesssim (k\kappa+k\kappa^2+1)^2$, and $\Delta_\eigvadj:=\eigvadj_{\lfloor{3k}/4\rfloor} - \eigvadj_{k}$ is the eigenvalue gap between the $\lfloor{3k}/4\rfloor$-th and the $k$-th eigenvalue.
\end{theorem}



This theorem shows that the error on the downstream task only grows linearly with the excess loss during pretraining. Roughly speaking, one can think of $\Delta_\eigvadj$ as on the order of $1-\gamma_k$, hence by Cheeger's inequality it's larger than $\rho^2$. When  $\rad{\npt/2}(\hyp) = \sqrt{{2R}/{\npt}}$ and $\kappa\le O(1)$, we have that the number of unlabeled samples required to achieve $\epsilon$ downstream error is $O(m^6R/\epsilon^2\rho^4)$. 
We can relax Assumption~\ref{assumption:realizable} to approximate realizability in the sense that $\hyp$ contains some sub-optimal feature extractor under the population spectral loss and pay an additional error term in the linear probe error bound.
The proof of Theorem~\ref{theorem:linear_probe_suboptimal} can be found in Section~\ref{section:roof_of_suboptimal_representation_bound}.




\subsection{Labeled sample complexity for linear probe}\label{section:linear_probe_finite_sample}

In this section, we provide sample complexity analysis for learning a linear probe with \textit{labeled} data. Theorem~\ref{thm:combine_with_cheeger_simplified} guarantees the existence of a linear probe that achieves a small downstream classification error. However, a priori it is unclear how large the margin of the linear classifier can be, so it is hard to apply margin theory to provide generalization bounds for 0-1 loss.
We could in principle control the margin of the linear head, but using capped quadratic loss turns out to suffice and mathematically more convenient.
We learn a linear head with the following \textit{capped quadratic loss}:
given a tuple $(z, y(\bar{x}))$ where $z\in\Real^k$ is a representation of augmented datapoint $x\sim\aug{\bar{x}}$ and $y(\bar{x})\in[r]$ is the label of $\bar{x}$, for a linear probe $B\in\Real^{k\times r}$ we define loss 
$
\ell((z, y(\bar{x})), B) := \sum_{i=1}^r \min\big\{\left(B^\top z - \vec{y}(\bar{x})\right)_i^2,1\big\},
$
where $\vec{y}(\bar{x})$ is the one-hot embedding of $y(\bar{x})$ as a $r$-dimensional vector ($1$ on the $y(\bar{x})$-th dimension, $0$ on other dimensions).
This is a standard modification of quadratic loss in statistical learning theory that ensures the boundedness of the loss for the ease of analysis~\cite{mohri2018foundations}.

The following Theorem~\ref{theorem:end_to_end} provides a generalization guarantee for the linear classifier that minimizes capped quadratic loss on a labeled downstream dataset of size $\nds$.
The key challenge of the proof is showing the existence of a small-norm linear head $B$ that gives small population quadratic loss, which is not obvious from Theorem~\ref{theorem:linear_probe_suboptimal} where only small 0-1 error is guaranteed. Given a labeled dataset $\{(\bar{x}_i, y(\bar{x}_i))\}_{i=1}^\nds$ where $\bar{x}_i\sim\pndata$ and $y(\bar{x}_i)$ is its label, we sample ${x}_i\sim\aug{\bar{x}_i}$ for $i\in[\nds]$.  Given a norm bound $C_k>0$, we learn a linear probe $\widehat{B}$ by minimizing the capped quadratic loss subject to a norm constraint:
\begin{align}\label{eqn:linear_head_finite_data}
\widehat{B} \in \argmin_{\norm{B}_F\le C_k}\sum_{i=1}^\nds \ell((\empminf(x_i), y(\bar{x}_i)), B).
\end{align}


\begin{theorem}[End-to-end error bounds with finite pretraining and downstream samples]\label{theorem:end_to_end}
	In the setting of Theorem~\ref{theorem:linear_probe_suboptimal}, choose $C_k>0$ such that $C_k\ge\frac{2(k+1)}{\gamma_{k}}$. 
	Then, with probability at least $1-\delta$ over the randomness of data, for any $\empminf\in\hyp$ that minimizes the empirical pre-training loss $\eLoss{\npt}{f}$ and a linear head $\widehat{B}$ learned from Equation~\eqref{eqn:linear_head_finite_data}, we have
	\begin{align*}
		\eval(\empminf, \widehat{B}) \lesssim  \frac{\alpha}{\rho_{\lfloor k/2\rfloor}^2}\cdot\log k + \frac{ck}{\Delta_\eigvadj^2}  \left(\rad{\npt/2}(\hyp) + \sqrt{\frac{\log 2/\delta}{\npt}} +{\delta}\right) + \left(rC_k\sqrt{\frac{k}{\nds} } + \sqrt{\frac{\log 1/\delta}{\nds}}\right).
	\end{align*}
\end{theorem}

Here the first term is an error caused by the property fo the population data, which is unavoidable even with infinite pretraining and downstream samples (but it can be small as argued in Section~\ref{section:data_assumption}). The second term is caused by finite pretraining samples, and the third term is caused by finite samples in the linear classification on the downstream task.

Typically, the Rademacher complexity is roughly $\rad{\npt/2}(\hyp) = \sqrt{{2R}/{\npt}}$ where $R$ is captures the complexity of the model architecture. 
Thus, to achieve final linear probe error no more than $O(\epsilon)$, we would need to select $k$ such that $\frac{\alpha}{\rho_{\lfloor k/2\rfloor}^2} \cdot \log k \le \epsilon$, and we need $\poly(k, \frac{1}{\Delta_\eigvadj}, R, \frac{1}{\epsilon})$ pretraining samples and $\poly(k, r, \frac{1}{\gamma_k}, \frac{1}{\epsilon})$ downstream samples.  

When $r<k$, 
the eigengap $\Delta_\eigvadj$ is on the order of $1-\gamma_{k}$ which is larger than $\rho^2$ by Cheeger inequality. Recall that $\rho$ is at least inverse polynomial in $d$ as argued in Section~\ref{section:gaussian_example}, one can expect $\frac{1}{\Delta_\eigvadj}$ to be at most $\poly(d)$. On the other hand, $\gamma_k\approx 1$ so $\frac{1}{\gamma_{k}}$ can be thought of as a constant. Thus, the final required number of pretraining samples is $\npt = \poly(k,d,R, \frac{1}{\epsilon})$ and number of downstream samples is $\nds = \poly(r, k, \frac{1}{\epsilon})$. We note that the downstream sample complexity doesn't depend on the complexity of the hypothesis class $R$, suggesting that pretraining helps reduce the sample complexity of the supervised downstream task. 

The proof of Theorem~\ref{theorem:end_to_end} is in Section~\ref{section:proof_linear_probe_finite_sample}.

\begin{figure*}
	\setlength{\lineskip}{0pt}
	\centering
	\begin{noverticalspace}
		\includegraphics[width=0.95\textwidth]{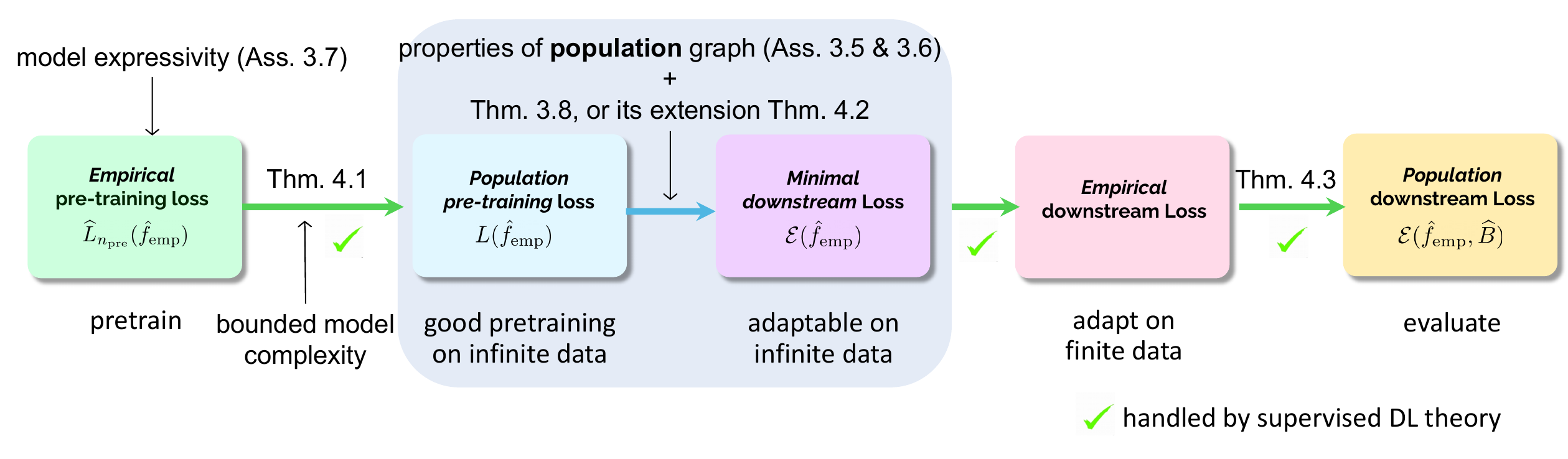}%
	\end{noverticalspace} 
	\caption[..]{A diagram of our analysis framework.  We decompose the problem into a core step that shows a small population pretraining loss implies a small minimal downstream loss (Theorem~\ref{thm:combine_with_cheeger_simplified}, or its extension Theorem~\ref{theorem:linear_probe_suboptimal}) and a few other somewhat standard steps that link empirical losses to population losses (Theorems~\ref{theorem:nn_generalization} and Theorem~\ref{theorem:end_to_end}). 
	}\label{figure:framework}
\end{figure*}
\section{Analysis Framework and Proof Sketch}
\label{section:proof}

As discussed before and suggested by the structured of Section~\ref{section:framework} and~\ref{section:finite_sample}, our analysis framework decompose the problem into a key step about the population cases (Section~\ref{section:framework}) and a few other somewhat standard steps that link empirical losses to population losses (Section~\ref{section:finite_sample}). As depicted in Figure~\ref{figure:framework}, the core step (Theorem~\ref{thm:combine_with_cheeger_simplified}, or its extension Theorem~\ref{theorem:linear_probe_suboptimal}) is to show that a small population pretraining loss implies the existence of a linear classifier for the downstream task, that is, a small minimal downstream loss.


We first remark that a feature of our analysis framework is that we link the population pretraining data case to the finite sample case by showing the empirical and population pretraining losses are similar when the feature extractors are a \textit{parameterized} family of models with capacity bounds (the first arrow in Figure~\ref{figure:framework}). 
Hypothetically, suppose such a connection between population and empirical data case was built through the relationship between the population and empirical graphs, e.g., by proving that the empirical graph has similar spectral properties as the population graph, then the sample complexity will be exponential. 
Intuitively, this is because the population graph is very sparse, and the empirical graph is with high probability empty if the number of samples is only polynomial in dimension (e.g. consider the case when the augmentation simply adds small perturbation, as in the running example in Section~\ref{section:augmentation_graph}). The empirical graph essentially follows the well-studied random geometric graph model~\citep{penrose2003random}, and tends to have no structure in high dimension~\cite{bubeck2016testing,liu2021testing,brennan2020phase}. 
The fundamental difference between this hypothetical and our framework is that the empirical graph's definition does not involve any parameterization, and thus the resemblance between the empirical and population graphs does not leverage the extrapolation (or inductive bias) of the model parameterization as our framework does for the pretraining losses. 



We note that the inductive bias of the parameterized model is indeed used in the analysis for finite-sample case. We assume that the model family $\mathcal{F}$ can express the eigenfunctions/eigenvectors of the graph  (Assumption~\ref{assumption:realizable}) and also implicitly assume bounds on its Rademacher complexity (in Theorem~\ref{theorem:end_to_end}). 

Once we obtained that the existence of a linear classifier, the remaining steps (the third and fourth arrows in Figure~\ref{figure:framework}) follow from standard supervised learning theory. 


In the rest of this section, we will give a proof sketch of the population case, which is the more challenging step. 

\subsection{Proof Sketch of Theorem~\ref{thm:combine_with_cheeger_simplified}}\label{section:proof_sketch}
\newcommand{\globalminF}{F_{\textup{pop}}^*}
In this section, we give a proof sketch of Theorem~\ref{thm:combine_with_cheeger_simplified} in a simplified binary classification setting where there are only two classes in the downstream task. 

Recall that $N$ is  the size of $\adata$. Recall that $w_x$ is the total weight associated with an augmented datapoint $x\in\adata$, which can also be thought of as the probability mass of $x$ as a randomly sampled augmented datapoint. In the scope of this section, for demonstrating the key idea, we also assume that $x$ has uniform distribution, i.e., $w_x=\frac{1}{N}$ for any $x\in\adata$. 

Let $g: \adata\rightarrow\{0, 1\}$ be the Bayes optimal classifier for predicting the label given an augmented datapoint. By Assumption~\ref{definition:accurate_partition}, $g$ has an error at most $\alpha$ (which is assumed to be small). Thus, we can think of it as the ``target'' classifier that we aim to recover. 
We will show that $g$ can be approximated by a linear function on top of the learned features. 
Recall that ${\eigv}_1, {\eigv}_2, \cdots, {\eigv}_k$ are the top-$k$ unit-norm eigenvectors of $\norA$ and the feature $u_x^*$ for $x$ is the $x$-th row of the eigenvector matrix $\eigF=[{\eigv}_1, {\eigv}_2, \cdots, {\eigv}_k]\in\Real^{\sizead\times k}$. As discussed in Section~\ref{section:spectral_contrastive_loss}, the spectral contrastive loss was designed to compute a variant of the eigenvector matrix $F^*$ up to row scaling and right rotation. More precisely, letting $\globalminF\in \Real^{N\times k}$ be the matrix whose rows contain all the learned embeddings, Section~\ref{section:spectral_contrastive_loss} shows that $\globalminF = D\cdot F^*\cdot R$ for a positive diagonal matrix $D$ and an orthonormal matrix $R$, and Lemma~\ref{lemma:prediction_same_with_matrix} shows that these transformations do not affect the feature quality. Therefore, in the rest of the section, it suffices to show that linear models on top of $F^*$ gives the labels of ${g}$. 
Let $\vec{g}\in\{0,1\}^N$ be the vector that contains the labels of all the data under the optimal $g$, i.e., $\vec{g}_x=g(x)$.  Given a linear head $b$, note that $F^*b$ gives the prediction (before the threshold function)  for all examples. Therefore, it suffices to show the existence of a vector $b$  such that 
\begin{align}
F^*b \approx \vec{g}
\end{align}

Let $\laplacian\triangleq {I} - \norA$ be the normalized Laplacian matrix. Then, $v_i$'s are the $k$ smallest unit-norm eigenvectors of $\laplacian$ with eigenvalues $\lambda_i = 1-\gamma_i$.  Elementary derivations can give a well-known, important property of the Laplacian matrix $L$: the quadratic form $\vec{g}^\top L\vec{g}$ captures the amount of edges across the two groups that are defined by the binary vector $\vec{g}$~\cite[section 1.2]{chung1997spectral}:
\begin{align}
\vec{g}^\top \laplacian \vec{g} &= \frac{1}{2}\cdot \sum_{x, x'\in\adata}\frac{w_{xx'}}{\sqrt{w_xw_{x'}}} \left(\vec{g}_x-\vec{g}_{x'}\right)^2\label{eq:diff_g}
\end{align}
With slight abuse of notation, suppose $(x,x^+)$ is the random variable for a positive pair. Using that $w$ is the density function for the positive pair and the simplification that $w_x= 1/N$, we can rewrite equation~\eqref{eq:diff_g} as 
\begin{align}
\vec{g}^\top \laplacian \vec{g} & = \frac{N}{2}\cdot \Exp{x, x^+}[\left(\vec{g}_x-\vec{g}_{x^+}\right)^2], 
\end{align}
Note that $\Exp{x, x^+}[\left(\vec{g}_x-\vec{g}_{x^+}\right)^2]$ is the probability that a positive pair have different labels under the Bayes optimal classifier $g$. 
Because Assumption~\ref{definition:accurate_partition} assumes that the labels can be almost determined by the augmented data, we can show that two augmentations of the same datapoint should rarely produce different labels under the Bayes optimal classifier. We will prove in Lemma~\ref{remark:edge_between_sets} via simple calculation that 
\begin{align}
\vec{g}^\top \laplacian \vec{g} \le N\alpha \label{eqn:11}
\end{align}
(We can sanity-check the special case when $\alpha=0$, that is, the label is determined by the augmentation. In this case, $g(x)=g(x^+)$ for a positive pair $(x, x^+)$ w.p. 1, which implies $\vec{g}^\top \laplacian \vec{g}=\frac{N}{2}\cdot \Exp{x, x^+}[\left(\vec{g}_x-\vec{g}_{x^+}\right)^2]=0$.)

Next, we use equation~\eqref{eqn:11} to link $\vec{g}$ to the eigenvectors of $L$.  Let $\lambda_{k+1}\le \dots \lambda_N$ be the rest of eigenvalues with unit-norm eigenvectors $v_{k+1},\dots, v_N$. Let $\Pi \triangleq \sum_{i=1}^k v_iv_i^\top$ and $\Pi_\perp \triangleq \sum_{i=k+1}^N v_iv_i^\top$ be the projection operators onto the subspaces spanned by the first $k$ and the last $N-k$ eigenvectors, respectively. Equation~\eqref{eqn:11} implies that $\vec{g}$ has limited projection to the subspace of $\Pi_\perp$: 
\begin{align}
N\alpha\ge \vec{g}^\top \laplacian \vec{g} =\left(\Pi\vec{g} + \Pi_\perp\vec{g}\right)^\top \laplacian \left(\Pi\vec{g} + \Pi_\perp\vec{g}\right) \ge \left(\Pi_\perp\vec{g}\right)^\top \laplacian\left(\Pi_\perp\vec{g}\right)
\ge \lambda_{k+1} \norm{\Pi_\perp\vec{g}}_2^2,
\end{align} 
where the first inequality follows from dropping the $\|(\Pi\vec{g})^\top L \Pi\vec{g}\|_2^2$ and using $\Pi_\perp L \Pi = 0$,  and the second inequality is because that $\Pi_\perp$ only contains eigenvectors with eigenvalue at least $\lambda_{k+1}$.  

Note that $\Pi \vec{g}$ is in the span of eigenvectors $v_1,\dots, v_k$, that is, the column-span of $F^*$. Therefore, there exists $b\in \Real^k$ such that $\Pi \vec{g} = F^* b$. As a consequence, 
\begin{align}
\|\vec{g} - F^* b \|_2^2 =  \norm{\Pi_\perp\vec{g}}_2^2 \le \frac{N\alpha}{\lambda_{k+1}}
\end{align}
By higher-order Cheeger inequality (see Lemma~\ref{lemma:higher_order_cheeger}), we have that $\lambda_{k+1}\gtrsim \rho_{\lceil k/2\rceil}^2$. Then, we obtain the mean-squared  error bound: 
\begin{align}
\frac{1}{N} \|\vec{g} - F^* b\|_2^2 \le {\alpha}/{\rho_{\lceil k/2\rceil}^2}
\end{align}
The steps above demonstrate the gist of the proofs, which are formalized in more generality in Section~\ref{section:proof_main_theorem_generalized}. We will also need two minor steps to complete the proof of Theorem~\ref{thm:combine_with_cheeger_simplified}. First, we can convert the mean-squared error bound to classification error bound: because $F^*b$ is close to the binary vector $\vec{g}$ in mean-squared error, $\id{F^*b>1/2}$ is close to $\vec{g}$ in 0-1 error. (See Claim~\ref{claim:loss_to_error} for the formal argument.) Next, $F^*b$ only gives the prediction of the model given the augmented datapoint. We will show in Section~\ref{sec:proof_of_edge_between_sets_remark} that averaging the predictions on the augmentations of a data ponit will not increase the classification error. 

\section{Experiments}\label{section:experiments}
We test spectral contrastive learning on benchmark vision datasets. We minimize the empirical spectral contrastive loss with an encoder network $f$ and sample fresh augmentation in each iteration. The pseudo-code for the algorithm and more implementation details can be found in Section~\ref{section:experiment_detail}.

\noindent\textbf{Encoder / feature extractor.} The encoder $f$ contains three components: a backbone network, a projection MLP and a projection function. The backbone network is a standard ResNet architecture. The projection MLP is a fully connected network with BN applied to each layer, and ReLU activation applied to each except for the last layer. The projection function takes a vector and projects it to a sphere ball with radius $\sqrt{\ballr}$, where $\ballr>0$ is a hyperparameter that we tune in experiments. 
We find that using a projection MLP and a projection function improves the performance.

\noindent\textbf{Linear evaluation protocol.} Given the pre-trained encoder network, we follow the standard linear evaluation protocol~\cite{chen2020exploring} and train a supervised linear classifier on frozen representations, which are from the ResNet’s global average pooling layer. 

\noindent\textbf{Results.} We report the accuracy on CIFAR-10/100~\cite{krizhevsky2009learning} and Tiny-ImageNet~\cite{le2015tiny} in Table~\ref{table:results}. Our empirical results show that spectral contrastive learning achieves better performance than two popular baseline algorithms SimCLR~\cite{chen2020simple} and SimSiam~\cite{chen2020exploring}. In Table~\ref{table:results_imagenet} we report results on ImageNet~\cite{imagenet_cvpr09} dataset, and show that our algorithm achieves similar performance as other state-of-the-art methods. We note that our algorithm is much more principled than previous methods and doesn't rely on large batch sizes (SimCLR~\cite{chen2020simple}), momentum encoders (BYOL~\cite{grill2020bootstrap} and MoCo~\cite{he2020momentum}) or additional tricks such as stop-gradient (SimSiam~\cite{chen2020exploring}).

\begin{table*}[!htb]
	\begin{center}
		{\renewcommand{\arraystretch}{1.1}
			\begin{tabular}{l|r r r|r r r|r r r}
				\toprule
				Datasets & \multicolumn{3}{c |}{CIFAR-10} & \multicolumn{3}{c |}{CIFAR-100} &  \multicolumn{3}{c}{Tiny-ImageNet} \\
				\midrule
				Epochs & \multicolumn{1}{c}{200} & \multicolumn{1}{c}{400} & \multicolumn{1}{c |}{800} & \multicolumn{1}{c}{200} & \multicolumn{1}{c}{400} & \multicolumn{1}{c |}{800} & \multicolumn{1}{c}{200} & \multicolumn{1}{c}{400} & \multicolumn{1}{c}{800} \\
				\midrule
				SimCLR (repro.) & 83.73 & 87.72 & 90.60 & 54.74 & 61.05 & 63.88 & \textbf{43.30} & \textbf{46.46}  & 48.12 \\
				SimSiam  (repro.)  & 87.54 & \textbf{90.31} & 91.40
				& 61.56 & 64.96 & 65.87
				& 34.82 & 39.46 & 46.76\\
				\midrule
				Ours    & \textbf{88.66} & 90.17& \textbf{92.07}
				& \textbf{62.45} & \textbf{65.82} & \textbf{66.18}
				& 41.30 & 45.36 & \textbf{49.86}\\
				\bottomrule
		\end{tabular}}
	\end{center}
	\caption{Top-1 accuracy under linear evaluation protocal. \vspace{10pt}
		\label{table:results}
	}

\end{table*}

\begin{table}[!htb]
	\centering
	\small
	\tablestyle{6pt}{1.1}
	\begin{tabular}{c|ccccc}
		& SimCLR & BYOL& MoCo v2 & SimSiam &  Ours \\
		\shline
		acc. (\%) & 66.5 & 66.5 & 67.4 & 68.1 & 66.97\\ 
	\end{tabular}
	\vspace{.5em}
	\caption{ImageNet linear evaluation accuracy with 100-epoch pre-training. All results but ours are reported from~\cite{chen2020exploring}. We use batch size $384$ during pre-training. 
		\label{table:results_imagenet}
	}
\end{table}

\section{Conclusion}
In this paper, we present a novel theoretical framework of self-supervised learning and provide provable guarantees for the learned representation on downstream linear classification tasks. 
We hope the framework could facilitate future theoretical analyses of self-supervised pretraining losses and inspire new methods. It does not capture the potential implicit bias of optimizers but does take into account the inductive bias of the models. By abstracting away the effect of optimization, we can focus on the effect of pretraining losses and their interaction with the structure of the population data. 
Future directions may include designing better pretraining losses and analyzing more fine-grained properties of the learned representations (e.g., as in recent follow-up works~\cite{shen2022connect, haochen2022beyond}),  by potentially leveraging more advanced techniques from spectral graph theory. 

%

\section*{Acknowledgements}
We thank Margalit Glasgow, Ananya Kumar, Jason D. Lee, Sang Michael Xie, and Guodong Zhang for helpful discussions. 
CW acknowledges support from an NSF Graduate Research Fellowship. 
TM acknowledges support of Google Faculty Award and NSF IIS 2045685. 
We also acknowledge the support of HAI and the Google Cloud. Toyota Research Institute ("TRI")  provided funds to assist the authors with their research but this article solely reflects the opinions and conclusions of its authors and not TRI or any other Toyota entity.

\bibliographystyle{plainnat}
\bibliography{all}

\newpage
\appendix

\section{Experiment details}\label{section:experiment_detail}

The pseudo-code for our empirical algorithm is summarized in Algorithm~\ref{alg:spectral_contrastive_learning}.

\begin{algorithm}[!thb]\caption{Spectral Contrastive Learning}\label{alg:spectral_contrastive_learning}
	\begin{algorithmic}[1]
		\Require{batch size $N$, structure of encoder network $f$}		
		\For{sampled minibatch $\{\bar{x}_i\}_{i=1}^N$}
		\For{$i\in\{1, \cdots, N\}$}
		\State draw two augmentations ${x}_i= \textup{aug}(\bar{x}_i)$ and ${x}'_i  =  \textup{aug}(\bar{x}_i)$.	
		\State compute $z_i = f({x}_i)$ and $z_i' = f({x}_i')$.
		\EndFor
		
		\State compute loss 
		$
		\mathcal{L} = -\frac{2}{N} \sum_{i=1}^N z_i^\top z_i' + \frac{1}{N(N-1)} \sum_{i\ne j} (z_i^\top z_j')^2
		$
		\State update $f$ to minimize $\mathcal{L}$
		\EndFor		
		\State \Return encoder network $f(\cdot)$
	\end{algorithmic}
\end{algorithm}

Our results with different hyperparameters on CIFAR-10/100 and Tiny-ImageNet are listed in Table~\ref{table:full_results}.
\begin{table*}[!thb]
	\begin{center}
		{\renewcommand{\arraystretch}{1.25}
			\begin{tabular}{l|r r r|r r r|r r r}
				\toprule
				Datasets & \multicolumn{3}{c |}{CIFAR-10} & \multicolumn{3}{c |}{CIFAR-100} &  \multicolumn{3}{c}{Tiny-ImageNet} \\
				\midrule
				Epochs & \multicolumn{1}{c}{200} & \multicolumn{1}{c}{400} & \multicolumn{1}{c |}{800} & \multicolumn{1}{c}{200} & \multicolumn{1}{c}{400} & \multicolumn{1}{c |}{800} & \multicolumn{1}{c}{200} & \multicolumn{1}{c}{400} & \multicolumn{1}{c}{800} \\
				\midrule
				SimCLR (repro.) & 83.73 & 87.72 & 90.60 & 54.74 & 61.05 & 63.88 & \textbf{43.30} & \textbf{46.46}  & 48.12 \\
				SimSiam  (repro.)  & 87.54 & \textbf{90.31} & 91.40
				& 61.56 & 64.96 & 65.87
				& 34.82 & 39.46 & 46.76\\
				\midrule
				Ours  ($\mu=1$)         & 86.47 & 89.90 & \textbf{92.07}
				& 59.13 & 63.83 & 65.52
				& 28.76 & 33.94 & 40.82\\
				Ours   ($\mu=3$)      & 87.72 & 90.09 & 91.84
				& 61.05 & 64.79 & \textbf{66.18}
				& 40.06 & 42.52 & \textbf{49.86}\\
				Ours   ($\mu=10$)      & \textbf{88.66} & 90.17 & 91.01
				& \textbf{62.45} & \textbf{65.82} & 65.16
				& 41.30 & 45.36 & 47.84\\				
				\bottomrule
		\end{tabular}}
	\end{center}
	\caption{Top-1 accuracy under linear evaluation protocal. \vspace{10pt}
		\label{table:full_results}
	}
\end{table*}

\textbf{Additional details about the encoder.}
For the backbone network, we use the CIFAR variant of ResNet18 for CIFAR-10 and CIFAR-100 experiments and use ResNet50 for Tiny-ImageNet and ImageNet experiments. For the projection MLP, we use a 2-layer MLP with hidden and output dimensions 1000 for CIFAR-10, CIFAR100, and Tiny-ImageNet experiments. We use a 3-layer MLP with hidden and output dimension 8192 for ImageNet experiments. We set $\mu=10$ in the ImageNet experiment, and set $\mu\in\{1, 3, 10\}$ for the CIFAR-10/100 and Tiny-ImageNet experiments.

\textbf{Training the encoder.} We train the neural network using SGD with momentum 0.9. The learning rate starts at 0.05 and decreases to 0 with a cosine schedule. On CIFAR-10/100 and Tiny-ImageNet we use weight decay 0.0005 and train for 800 epochs with batch size 512. On ImageNet we use weight decay 0.0001 and train for 100 epochs with batch size 384. We use 1 GTX 1080 GPU for CIFAR-10/100 and Tiny-ImageNet experiments, and use 8 GTX 1080 GPUs for ImageNet experiments.

\textbf{Linear evaluation protocol.} 
We train the linear head using SGD with batch size 256 and weight decay 0 for 100 epochs, learning rate starts at 30.0 and is decayed by 10x at the 60th and 80th epochs.

\textbf{Image transformation details.} We use the same augmentation strategy as described in~\cite{chen2020exploring}.

\section{Proofs for Section~\ref{section:framework}}\label{section:proof_for_main_result}

We first prove a more generalized version of Theorem~\ref{thm:combine_with_cheeger_simplified} in section~\ref{section:proof_main_theorem_generalized}, and then prove Theorem~\ref{thm:combine_with_cheeger_simplified} in Section~\ref{sec:proof_of_edge_between_sets_remark}. 

\subsection{A generalized version of Theorem~\ref{thm:combine_with_cheeger_simplified}}\label{section:proof_main_theorem_generalized}
For the proof we will follow the convention in literature~\cite{lee2014multiway} and define the \textit{normalized Laplacian matrix} as follows:
\begin{definition}
	Let $G =(\adata, w)$ be the augmentation graph defined in Section~\ref{section:augmentation_graph}. The \textit{normalized Laplacian matrix} of the graph is defined as $\laplacian = {I}-{D}^{-1/2}{A}{D}^{-1/2}$, where ${A}$ is the adjacency matrix with ${A}_{xx'} = \wpair{x}{x'}$ and ${D}$ is a diagonal matrix with ${D}_{xx} = \wnode{x}$.
\end{definition}

It is easy to see that $\laplacian = I - \norA$ where $\norA$ is the normalized adjacency matrix defined in Section~\ref{section:augmentation_graph}. Therefore, when $\lambda_i$ is the $i$-th smallest eigenvalue of $\laplacian$, $1-\lambda_i$ is the $i$-th largest eigenvalue of $\norA$.

We call a function defined on augmented data $\hat{y}: \adata \rightarrow [r]$ an \textit{extended labeling function}. Given an extended labeling function, we define the following quantity that describes the difference between extended labels of two augmented data of the same natural datapoint:
\begin{align}\label{equation:define_phi_y}
\phi^{\hat{y}} :=  \sum_{x, x'\in \adata}\wpair{x}{x'}\cdot\id{\hat{y}(x)\ne\hat{y}(x')}.
\end{align}
We also define the following quantity that describes the difference between extended label of an augmentated datapoint and the ground truth label of the corresponding natural datapoint:
\begin{align}\label{equation:define_delta}
\Delta(y, \hat{y}) := \Pr_{x\sim \pndata, \tilde{x}\sim \aug{x}} \left(\hat{y}(\tilde{x})\ne y(x)\right).
\end{align}

Recall the spectral contrastive loss defined in Section~\ref{section:spectral_contrastive_loss} is:
\begin{align*}
\Loss{f} := \Exp{x_1\sim\pndata, x_2\sim\pndata,\atop
	x\sim\aug{x_1}, x^+\sim\aug{x_1}, x'\sim\aug{x_2}} \left[- 2\cdot f(x)^\top f(x^+) + \left(f(x)^\top f(x')\right)^2\right] .
\end{align*}

We first state a more general version of Theorem~\ref{thm:combine_with_cheeger_simplified} as follows.
\begin{theorem}\label{thm:combine_with_cheeger}
	Assume the set of augmented data $\adata$ is finite. Let $\globalminf \in \arg\min_{f: \adata\rightarrow\Real^k}\Loss{f}$ be a minimizer of the population spectral contrastive loss $\Loss{f}$ with $k\in\mathcal{Z}^+$. Let $k'\ge r$ such that $k+1=(1+\zeta)k'$, where $\zeta\in(0, 1)$ and $k'\in\mathcal{Z}^+$.
	Then, there exists a linear probe $\matrixw^* \in \Real^{r\times k}$ and a universal constant $c$ such that the linear probe predictor satisfies
	\begin{align*}
	\Exp{\bar{x}\sim\pndata, {x}\sim\aug{\bar{x}}}\left[\norm{\vec{{y}}(\bar{x}) - \matrixw^*\globalminf({x})}_2^2\right]\le  c \cdot \left(\textup{poly}(1/\zeta)\cdot\log(k+1)\cdot \frac{\phi^{\hat{y}}}{\rho_{k'}^2} + \Delta(y, \hat{y})\right),
	\end{align*} 
	where $\vec{{y}}(\bar{x})$ is the one-hot embedding of ${y}(\bar{x})$ and $\rho_{k'}$ is the sparsest $m$-partition defined in Definition~\ref{definition:multi_way_expansion_constant}.
	Furthermore, the error of the linear probe predictor can be bounded by
	\begin{align*}
	\Pr_{\bar{x}\sim \pndata, {x}\sim\aug{\bar{x}}} \left(\pred_{\globalminf, \matrixw^* }({x}) \ne y(\bar{x})\right) \le 2c \cdot \left(\textup{poly}(1/\zeta)\cdot\log(k+1)\cdot \frac{\phi^{\hat{y}}}{\rho_{k'}^2} + \Delta(y, \hat{y})\right). 
	\end{align*}
	
	Also, if we let $\lambda_i$ be the $i$-th smallest eigenvalue of the normalized Laplacian matrix of the graph of the augmented data, we can find a matrix $B^*$ satisfying the above equations with norm bound $\norm{\matrixw^* }_F\le 1/(1-\lambda_{k})$.
\end{theorem}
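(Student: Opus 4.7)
The plan is to reduce the theorem to a statement about eigenvectors of the normalized adjacency $\norA$, and then show that an indicator-based ``label embedding'' can be recovered by a bounded linear map on the top-$k$ eigenspace. By Lemma~\ref{lem:spectral-contrastive-loss} and the Eckart--Young--Mirsky theorem, any minimizer of $\Loss{f}$ satisfies $\wnode{x}^{1/2} \globalminf(x) = (\eigF)_x \cdot \diag(\sqrt{\eigvadj_1},\ldots,\sqrt{\eigvadj_k})\cdot Q$ for some orthogonal $Q$, where $(\eigF)_x$ denotes the $x$-th row of the matrix $\eigF$ collecting the top-$k$ unit eigenvectors $v_1,\ldots,v_k$ of $\norA$ with eigenvalues $\eigvadj_i=1-\lambda_i$. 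Lemma~\ref{lemma:prediction_same_with_matrix} then lets me absorb the positive-diagonal factor $\wnode{x}^{-1/2}$ and the invertible $\diag(\sqrt{\eigvadj_i})\cdot Q$ into the linear probe, so it suffices to exhibit a head $\matrixw^\star\in\R^{k\times r}$ applied directly to the rows of $\eigF$.

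Next, I introduce the degree-weighted class indicators $\chi_i\in\R^{\sizead}$ defined by $(\chi_i)_x = \wnode{x}^{1/2}\id{\hat{y}(x)=i}$ for $i\in[r]$. Using $\wnode{x}=\sum_{x'}\wpair{x}{x'}$, a direct calculation gives
\begin{align*}
\sum_{i=1}^{r}\chi_i^\top \laplacian \chi_i \;=\; \sum_{x,x'\in\adata}\wpair{x}{x'}\,\id{\hat{y}(x)\ne \hat{y}(x')} \;=\; \phi^{\hat{y}},
\end{align*}
where $\laplacian = I-\norA$ is the normalized Laplacian, while $\sum_i\|\chi_i\|_2^2 = \sum_x \wnode{x} = 1$. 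Hence the $r$-dimensional subspace spanned by $\{\chi_i\}$ lies almost entirely in the low-Rayleigh-quotient region of $\laplacian$.

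The main technical obstacle is to quantify how close the $\chi_i$'s are to the top-$k$ eigenspace of $\norA$ in terms of $\rho_{k'}$. Decomposing $\chi_j = \chi_j^{\le k}+\chi_j^{>k}$ along the eigenbasis of $\laplacian$, orthogonality gives $\lambda_{k+1}\|\chi_j^{>k}\|_2^2 \le \chi_j^\top\laplacian\chi_j$. I will invoke a higher-order Cheeger inequality of Lee--Oveis Gharan--Trevisan type in the regime $k+1 = (1+\zeta)k'$, which yields $\rho_{k'}^2 \lesssim \poly(1/\zeta)\log(k+1)\cdot\lambda_{k+1}$. Summing over $j$ then gives
\begin{align*}
\sum_{j=1}^{r}\|\chi_j^{>k}\|_2^2 \;\lesssim\; \poly(1/\zeta)\log(k+1)\cdot \frac{\phi^{\hat{y}}}{\rho_{k'}^2}.
\end{align*}

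Finally, I take $\matrixw^\star$ to be the linear map whose $(i,j)$ entry in the eigenbasis equals $\langle v_i,\chi_j\rangle$, modulo the harmless scalings peeled off in the first step. A short computation shows that the $\wnode{x}$-weight in the outer expectation cancels the $\wnode{x}^{-1/2}$ from $\globalminf$, so that $\Exp{\bar{x},x}\|\vec{\hat{y}}(\bar{x}) - \matrixw^{\star\top}\globalminf(x)\|_2^2 = \sum_j\|\chi_j^{>k}\|_2^2$. Swapping $\hat{y}$ for $y$ costs at most $2\,\Delta(y,\hat{y})$ because distinct one-hot vectors differ by squared norm $2$, which establishes the first displayed bound (the factor is absorbed into $c$). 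The $0$--$1$ bound follows because an incorrect $\arg\max$ on a one-hot target forces the per-example squared error to be at least $1/2$, costing another factor of $2$. For the norm bound, the coefficients $\langle v_i,\chi_j\rangle$ are controlled by $\|\chi_j\|_2\le 1$, and the absorbed diagonal $\diag(1/\sqrt{\eigvadj_i})$ is at most $1/\sqrt{1-\lambda_k}$ in operator norm, with the stated $1/(1-\lambda_k)$ coming from slightly more careful bookkeeping.
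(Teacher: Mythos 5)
Your proposal is correct and follows essentially the same route as the paper: reduce to the matrix-approximation / Eckart--Young characterization of $\globalminf$, introduce the degree-weighted indicators (the paper's $u_i^{\hat y}$, your $\chi_i$), bound their tail energy via the Rayleigh quotient identity $\sum_i\chi_i^\top L\chi_i=\phi^{\hat y}$, invoke the higher-order Cheeger inequality with $(1+\zeta)k'=k+1$ to replace $1/\lambda_{k+1}$ by $\poly(1/\zeta)\log(k+1)/\rho_{k'}^2$, and finish by the $\hat y\!\to\!y$ switch and the $1/2$-margin argument for the $0$--$1$ loss. The paper just splits this into an intermediate statement (Theorem~\ref{theorem:error_rate_with_eigenvectors}, in terms of $\lambda_{k+1}$) before applying Cheeger, whereas you inline both steps; also note that $\|B^*\|_F\le 1/\sqrt{1-\lambda_k}$ already follows directly from $\|B\|_F\le 1$ and $\|D_\lambda^{-1}\|_{\mathrm{op}}\le 1/\sqrt{1-\lambda_k}$, which is stronger than the stated $1/(1-\lambda_k)$, so no extra bookkeeping is needed there.
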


We provide the proof for Theorem~\ref{thm:combine_with_cheeger} below.

Let $\lambda_1, \lambda_2, \cdots, \lambda_k, \lambda_{k+1}$ be the $k+1$ smallest eigenvalues of the Laplacian matrix $L$. The following theorem gives a theoretical guarantee similar to Theorem~\ref{thm:combine_with_cheeger} except for that the bound depends on $\lambda_{k+1}$:

\begin{theorem}\label{theorem:error_rate_with_eigenvectors}
	Assume the set of augmented data $\adata$ is finite.
	Let $\globalminf\in\arg\min_{f: \adata\rightarrow\Real^k} $ be a minimizer of the population spectral contrastive loss $\Loss{f}$ with $k\in\mathcal{Z}^+$. 
	Then, for any labeling function $\hat{y}: \adata\rightarrow[r]$ there exists a linear probe $\matrixw^* \in \Real^{r\times k}$ with norm $\norm{\matrixw^* }_F\le 1/(1-\lambda_{k})$ such that
	\begin{align*}
	\Exp{\bar{x}\sim\pndata, {x}\sim\aug{\bar{x}}}\left[\norm{\vec{{y}}(\bar{x}) - \matrixw^*\globalminf({x})}_2^2\right]\le  \frac{\phi^{\hat{y}}}{\lambda_{k+1}}+ 4\Delta(y, \hat{y}),
	\end{align*} 
	where $\vec{{y}}(\bar{x})$ is the one-hot embedding of ${y}(\bar{x})$. Furthermore, the error can be bounded by
	\begin{align*}
	\Pr_{\bar{x}\sim \pndata, {x}\sim\aug{\bar{x}}} \left(\pred_{\globalminf, \matrixw^* }({x}) \ne y(\bar{x})\right) \le \frac{2\phi^{\hat{y}}}{\lambda_{k+1}}+ 8\Delta(y, \hat{y}). 
	\end{align*}
\end{theorem}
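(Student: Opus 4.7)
The strategy is to combine Lemma~\ref{lem:spectral-contrastive-loss}, the Eckart--Young--Mirsky theorem, and the variational characterization of the eigenvalues of the normalized Laplacian $L$. First, any $\globalminf\in\arg\min_f\Loss{f}$ induces a minimizer $\hatF$ of $\Lossmc{F}=\norm{\norA-FF^\top}_F^2$ via the identification $\hatF_{xi}=\sqrt{\wnode{x}}\,\globalminf(x)_i$, and Eckart--Young gives $\hatF=\eigF\,\diag(\sqrt{\gamma_1},\dots,\sqrt{\gamma_k})\,R$ for some orthogonal $R\in\Real^{k\times k}$, where $\gamma_i=1-\lambda_i$ is the $i$-th largest eigenvalue of $\norA$ and the columns $\xi_1,\dots,\xi_k$ of $\eigF$ are the corresponding orthonormal eigenvectors (equivalently, the $k$ lowest-frequency eigenvectors of $L$).

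Next, encode the extended labeling as weighted indicators $v_j(x):=\sqrt{\wnode{x}}\cdot\id{\hat{y}(x)=j}\in\Real^{|\adata|}$ for $j\in[r]$. A direct computation with $L=I-\norA$ gives $v_j^\top L v_j=\sum_x\wnode{x}\id{\hat{y}(x)=j}-\sum_{x,x'}\wpair{x}{x'}\id{\hat{y}(x)=j}\id{\hat{y}(x')=j}$; summing over $j$ and using $\sum_{x,x'}\wpair{x}{x'}=1$ yields the clean identity $\sum_j v_j^\top L v_j=\phi^{\hat{y}}$. Writing $v_j=\sum_i a_{ji}\xi_i$ in the $L$-eigenbasis, the variational principle then bounds the high-frequency tail by
\begin{align*}
\sum_j\sum_{i>k}a_{ji}^2 \le \frac{1}{\lambda_{k+1}}\sum_j\sum_{i>k}\lambda_i a_{ji}^2 \le \frac{\phi^{\hat{y}}}{\lambda_{k+1}}.
\end{align*}
Define the probe $\matrixw^*:=A\,\diag(1/\sqrt{\gamma_1},\dots,1/\sqrt{\gamma_k})\,R^{-1}$, where $A\in\Real^{r\times k}$ collects the truncated coefficients $(a_{ji})_{i\le k}$. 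By construction $\sqrt{\wnode{x}}\,(\matrixw^*\globalminf(x))_j=\sum_{i\le k}a_{ji}\xi_i(x)$, hence
\begin{align*}
\Exp{\bar{x},x}\bigl[\norm{\vec{\hat{y}}(x)-\matrixw^*\globalminf(x)}_2^2\bigr] = \sum_j\sum_{i>k}a_{ji}^2 \le \frac{\phi^{\hat{y}}}{\lambda_{k+1}}.
\end{align*}
The Frobenius bound $\norm{\matrixw^*}_F\le 1/(1-\lambda_k)$ follows from $\gamma_i\ge 1-\lambda_k$ for $i\le k$ combined with $\sum_{j,i}a_{ji}^2=\sum_j\norm{v_j}_2^2=\sum_x\wnode{x}=1$.

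Finally, pass from $\vec{\hat{y}}(x)$ to $\vec{y}(\bar{x})$ by expanding $\vec{y}(\bar{x})-\matrixw^*\globalminf(x)=(\vec{y}(\bar{x})-\vec{\hat{y}}(x))+(\vec{\hat{y}}(x)-\matrixw^*\globalminf(x))$ and using the identity $\Exp[\norm{\vec{y}(\bar{x})-\vec{\hat{y}}(x)}_2^2]=2\Delta(y,\hat{y})$ together with Cauchy--Schwarz on the cross term to obtain the first stated $\ell_2$ bound. For the $0$--$1$ error bound, whenever the linear-probe prediction is wrong at $x$, letting $a$ and $b\ge a$ denote the $y(\bar{x})$-th and argmax coordinates of $\matrixw^*\globalminf(x)$ respectively, we have $\norm{\vec{y}(\bar{x})-\matrixw^*\globalminf(x)}_2^2\ge(1-a)^2+b^2\ge(1-a)^2+a^2\ge 1/2$; hence Markov's inequality gives $\Pr[\text{error}]\le 2\Exp[\norm{\cdot}_2^2]$, doubling the first bound. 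I expect the main technical obstacle to be coordinating the rotation $R$ and the $\sqrt{\wnode{x}}$ reweighting so that the probe genuinely acts linearly on $\globalminf(x)$ (rather than on the raw $\hatF_{x\cdot}$), and tracking the constants in the triangle-inequality step carefully enough that $\phi^{\hat{y}}/\lambda_{k+1}$ appears with coefficient $1$ as stated.
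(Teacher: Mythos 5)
Your proposal follows essentially the same route as the paper's proof: reduce $\globalminf$ to a minimizer of $\Lossmc{F}$ via the $\sqrt{\wnode{x}}$ reweighting (Lemma~\ref{lemma:equivalent_between_two_losses}), invoke Eckart--Young--Mirsky to identify the column span of $\widehat{F}$ with the top-$k$ eigenspace, project the weighted class indicators $v_j$ onto that span and bound the residual by a Rayleigh-quotient argument against $\lambda_{k+1}$, then pass from $\vec{\hat{y}}(x)$ to $\vec{y}(\bar{x})$ and from squared loss to $0$--$1$ error via Markov. Your $0$--$1$ argument and the Frobenius-norm bound on $\matrixw^*$ are both correct (noting the last step is $\norm{\matrixw^*}_F^2\le 1/(1-\lambda_k)$, which implies the stated $\norm{\matrixw^*}_F\le 1/(1-\lambda_k)$ since $1-\lambda_k\le 1$). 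One cosmetic slip: from $\widehat{F}=\eigF\,\diag(\sqrt{\gamma})R$ the probe that makes $\sqrt{\wnode{x}}\,(\matrixw^*\globalminf(x))_j=\sum_{i\le k}a_{ji}\xi_i(x)$ is $\matrixw^*=A\,\diag(1/\sqrt{\gamma})\,R$, not $R^{-1}$; since $R$ is orthogonal this does not change $\norm{\matrixw^*}_F$, so it is harmless.

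Your identity $\sum_j v_j^\top L v_j=\phi^{\hat{y}}$ is a slightly cleaner packaging than the paper's route through Claim~\ref{corollary:sparsecut_to_rayleigh_quotient} and a sum of $\phi^{\hat{y}}_i$'s, and it correctly yields $\sum_j\sum_{i>k}a_{ji}^2\le\phi^{\hat{y}}/\lambda_{k+1}$. The constant-tracking worry you flag at the end is real, and in fact cannot be dissolved the way the theorem statement suggests: the triangle step $\norm{a+b}^2\le 2\norm{a}^2+2\norm{b}^2$ inevitably doubles the first term, so from your (correct) intermediate bound you obtain $2\phi^{\hat{y}}/\lambda_{k+1}+4\Delta(y,\hat y)$ and $4\phi^{\hat{y}}/\lambda_{k+1}+8\Delta(y,\hat y)$, not the stated $\phi^{\hat{y}}/\lambda_{k+1}+4\Delta$ and $2\phi^{\hat{y}}/\lambda_{k+1}+8\Delta$. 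The paper's own derivation of Equation~\eqref{equation:error_on_all_data} drops a factor of $2$: summing $\sum_{x,x'}\wpair{x}{x'}\id{(\hat y(x)=i)\ \mathrm{xor}\ (\hat y(x')=i)}$ over $i\in[r]$ gives $2\sum_{x,x'}\wpair{x}{x'}\id{\hat y(x)\neq\hat y(x')}=2\phi^{\hat y}$ (each cross-class pair is counted at $i=\hat y(x)$ and $i=\hat y(x')$), not $\phi^{\hat y}$, so the correct intermediate bound is $\norm{U-F_{\scf}\matrixw^\top}_F^2\le\phi^{\hat y}/\lambda_{k+1}$, matching yours. In short, your argument is right; the factor-of-$2$ you cannot recover is a defect of the stated constant, not of your reasoning, and since every downstream use of this theorem passes through $\widetilde{O}(\cdot)$ (Theorem~\ref{thm:combine_with_cheeger} onward) the discrepancy is immaterial.
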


We defer the proof of Theorem~\ref{theorem:error_rate_with_eigenvectors} to Section~\ref{section:proof_of_error_rate_with_eigenvectors}. 

To get rid of the dependency on $\lambda_{k+1}$, we use following higher-order Cheeger's inequality from~\cite{louis2014approximation}. 
\begin{lemma}[Proposition 1.2  in \cite{louis2014approximation}]\label{lemma:higher_order_cheeger}
	Let $G=(V, w)$ be a weight graph with $|V|=N$.  Then, for any $t\in[N]$ and $\zeta>0$ such that $(1+\zeta)t\in[N]$, there exists a partition $S_1, S_2, \cdots, S_{t}$ of $V$ with 
	\begin{align*}
	\phi_G(S_i)\lesssim \textup{poly}({1}/{\zeta})\sqrt{\lambda_{(1+\zeta)t}\log t},
	\end{align*}
	where $\phi_G(\cdot)$ is the Dirichlet conductance defined in Definition~\ref{definition:dirichlet_conductance}.
\end{lemma}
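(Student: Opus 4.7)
The plan is to prove Lemma~\ref{lemma:higher_order_cheeger} by combining a spectral embedding with a random partitioning step followed by Cheeger-style rounding, mirroring the argument of~\citet{louis2014approximation} (the lemma is quoted as their Proposition~1.2). Set $k = (1+\zeta)t$ and let $v_1, \ldots, v_k$ be orthonormal eigenvectors of the normalized Laplacian $\laplacian$ corresponding to the $k$ smallest eigenvalues $\lambda_1 \le \cdots \le \lambda_k = \lambda_{(1+\zeta)t}$. Form the spectral embedding whose rows $u_x \in \Real^k$ are obtained by stacking these eigenvectors and rescaling by $D^{-1/2}$, so that $\sum_x \wnode{x} \|u_x\|^2 = k$. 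The core spectral fact is the Rayleigh identity $\sum_{x,x'} \wpair{x}{x'} \|u_x - u_{x'}\|^2 = \sum_{i=1}^k \lambda_i \le k\lambda_k$, which makes the embedding ``smooth'' across edges.

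The next step is to convert this smooth embedding into $k$ functions with \emph{disjoint} supports and small Rayleigh quotients. I would use a Gaussian argmax partition: sample $k$ i.i.d.\ Gaussian vectors $g_1, \ldots, g_k \in \Real^k$, assign each vertex to $i(x) = \argmax_{j \in [k]} \langle g_j, u_x\rangle / \|u_x\|$, and define localized functions $f_i(x) = \|u_x\| \cdot \id{i(x) = i}$. A maximum-of-Gaussians anti-concentration bound gives that the probability of an edge $(x,x')$ being cut (i.e.\ $i(x) \ne i(x')$) is at most $O(\sqrt{\log k}) \cdot \|u_x - u_{x'}\|/\max(\|u_x\|, \|u_{x'}\|)$ up to technical terms. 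Combined with the Rayleigh identity, this yields (in expectation over $g$) an average Rayleigh quotient of the $f_i$ bounded by $O(\lambda_k \log k)$, so a fixed realization of the $g_j$'s produces $k$ disjointly supported functions $f_1, \ldots, f_k$ with $\sum_i R(f_i) \lesssim k \lambda_k \log k$, where $R(f) = \sum_{x,x'} \wpair{x}{x'}(f(x)-f(x'))^2 / \sum_x \wnode{x} f(x)^2$.

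I then apply the weighted Cheeger inequality to each $f_i$: there exists a level set $T_i \subseteq \textup{supp}(f_i)$ with $\phi_G(T_i) \lesssim \sqrt{R(f_i)}$. Since the $f_i$ have disjoint supports, so do the $T_i$. By Markov's inequality applied to the list $R(f_1), \ldots, R(f_k)$, at least $t = k/(1+\zeta)$ of the indices satisfy $R(f_i) \le \textup{poly}(1/\zeta) \cdot \lambda_{(1+\zeta)t} \log t$, hence $\phi_G(T_i) \lesssim \textup{poly}(1/\zeta) \sqrt{\lambda_{(1+\zeta)t} \log t}$. To obtain a \emph{partition} of all of $\adata$ (rather than just disjoint sets), any leftover vertices in $\adata \setminus \bigcup_{i=1}^t T_i$ are absorbed into, say, $T_1$; this can only decrease $\phi_G(T_1)$ since we are enlarging a set (or one may equivalently work with the Dirichlet conductance of arbitrary subsets, which is what Definition~\ref{definition:multi_way_expansion_constant} requires anyway, with the understanding that the leftover is rolled into the partition).

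The main obstacle is the random-partition analysis: bounding the edge-cut probability by $O(\sqrt{\log k})\cdot \|u_x - u_{x'}\|/\|u_x\|$ requires a careful Gaussian comparison argument and the multiplicative $\sqrt{\log k}$ loss is unavoidable (it is the same $\sqrt{\log k}$ that appears in the higher-order Cheeger bound). The $\textup{poly}(1/\zeta)$ factor enters in two places: (i) Markov's inequality over the $k = (1+\zeta)t$ sets needs slack to retain $t$ good indices, and (ii) renormalizing truncated functions whose $\ell^2$ mass is only a $1/(1+\zeta)$-fraction of the total mass incurs a $1/\zeta$-type blow-up in $R(\cdot)$. Quantifying both steps precisely is the heart of the Louis--Raghavendra--Tetali--Vempala argument, but since Lemma~\ref{lemma:higher_order_cheeger} is invoked as a black box, the proof would ultimately defer to their paper.
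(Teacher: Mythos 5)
This lemma is not proved in the paper at all: it is quoted verbatim as Proposition~1.2 of \cite{louis2014approximation} and invoked as a black box in the proof of Theorem~\ref{thm:combine_with_cheeger}, so there is no in-paper argument against which to compare your sketch. As a summary of the Louis et al.\ route your account is essentially right: spectral embedding by the bottom $k=(1+\zeta)t$ eigenvectors, a Gaussian argmax partition giving an $O(\sqrt{\log k})$ edge-cut probability, then weighted Cheeger rounding on the localized functions, with the $\textup{poly}(1/\zeta)$ losses coming from discarding the $\zeta t$ worst indices via Markov's inequality and from the restriction of mass to a $1/(1+\zeta)$ fraction of the embedding.

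One step in your closing argument, however, is genuinely wrong. Having obtained $t$ \emph{disjoint} sets $T_1,\dots,T_t$, you absorb the leftover vertices into $T_1$ and claim this ``can only decrease $\phi_G(T_1)$ since we are enlarging a set.'' Dirichlet conductance is not monotone under set inclusion: enlarging $S$ raises the volume $\sum_{x\in S}\wnode{x}$, but the added vertices bring in their own boundary edges, and if the added vertices have small degree with most of their weight crossing out of the enlarged set, the conductance goes up. A concrete example: take a degree-one vertex $v$ outside $S$ whose single edge goes to some other vertex in $V\setminus(S\cup\{v\})$; adding $v$ to $S$ adds that full edge weight to the numerator while adding only the tiny $\wnode{v}$ to the denominator. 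Converting $t$ disjoint sets with small conductance into an honest $t$-way partition of $V$ with comparable conductance is exactly part of the technical payload that Proposition~1.2 of \cite{louis2014approximation} delivers, and it cannot be obtained from a monotonicity claim. Since the paper uses the lemma purely as an external citation this does not affect any downstream result, but this step of your sketch should be replaced by a deferral to the cited reference.
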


Now we prove Theorem~\ref{thm:combine_with_cheeger} by combining Theorem\ref{theorem:error_rate_with_eigenvectors} and Lemma~\ref{lemma:higher_order_cheeger}.
\begin{proof}[Proof of Theorem~\ref{thm:combine_with_cheeger}]
	Let $G = (\adata, w)$ be the augmentation graph. In Lemma~\ref{lemma:higher_order_cheeger} let $(1+\zeta)t=k+1$ and $t=k'$ we have: there exists partition $S_1,\cdots, S_{k'}\subset \adata$ such that $\phi_G(S_i)\lesssim \textup{poly}(1/\zeta)\sqrt{\lambda_{k+1}\log {(k+1)}}$ for $\forall i \in[k']$. 
	By Definition~\ref{definition:multi_way_expansion_constant}, we have $\rho_{k'} \le \max_{i\in [k']} \phi_G(S_i)  \lesssim \textup{poly}(1/\zeta)\sqrt{\lambda_{k+1}\log {(k+1)}}$, which leads to $\frac{1}{\lambda_{k+1}}\lesssim \textup{poly}(1/\zeta)\cdot{log(k+1)}\cdot \frac{1}{\rho_{k'}^2}$. Plugging this bound to  Theorem~\ref{theorem:error_rate_with_eigenvectors} finishes the proof.
\end{proof}

\subsection{Proof of Theorem~\ref{thm:combine_with_cheeger_simplified}}\label{sec:proof_of_edge_between_sets_remark}
We will use the following lemma which gives a connection between $\phi^{\hat{y}}$, $\Delta(y, \hat{y})$ and Assumption~\ref{definition:accurate_partition}.
\begin{lemma}\label{remark:edge_between_sets}
	Let $G=(\adata, w)$ be the augmentation graph, $r$ be the number of underlying classes. Let $S_1, S_2, \cdots, S_r$ be the partition induced by the classifier $g$ in Assumption~\ref{definition:accurate_partition}. Then, there exists an extended labeling function $\hat{y}$ such that 
	\begin{align*}
	\Delta(y, \hat{y}) \le \alpha
	\end{align*}
	and 
	\begin{align*}
	\phi^{\hat{y}}  = \sum_{x, x'\in \adata} \wpair{x}{x'}\cdot \id{\hat{y}(x)\ne \hat{y}(x')}\le 2\alpha.
	\end{align*}
\end{lemma}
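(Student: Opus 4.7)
}
The plan is to simply take $\hat{y} := g$, where $g$ is the classifier furnished by Assumption~\ref{definition:accurate_partition}, and verify the two inequalities directly. Since $S_1,\dots,S_r$ is defined to be the partition induced by $g$ (so $\hat{y}(x)=i$ iff $x\in S_i$), this is the natural candidate, and no further construction is needed.

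For the first inequality, $\Delta(y,\hat{y})\le\alpha$ is just a rewriting of Assumption~\ref{definition:accurate_partition}: by definition
\begin{align*}
\Delta(y,\hat{y}) = \Pr_{\bar{x}\sim\pndata,\,x\sim\aug{\bar{x}}}(\hat{y}(x)\neq y(\bar{x})) = \Pr_{\bar{x}\sim\pndata,\,x\sim\aug{\bar{x}}}(g(x)\neq y(\bar{x})) \le \alpha.
\end{align*}

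For the second inequality, the key step is to recognize that $w_{xx'}=\E_{\bar{x}\sim\pndata}[\augp{x}{\bar{x}}\augp{x'}{\bar{x}}]$ is exactly the probability that a random positive pair equals $(x,x')$, so
\begin{align*}
\phi^{\hat{y}} = \Pr_{\bar{x}\sim\pndata,\,x,x^+\sim\aug{\bar{x}}}\bigl(\hat{y}(x)\neq \hat{y}(x^+)\bigr).
\end{align*}
Then I would observe that the event $\{\hat{y}(x)\neq\hat{y}(x^+)\}$ implies $\{\hat{y}(x)\neq y(\bar{x})\}\cup\{\hat{y}(x^+)\neq y(\bar{x})\}$, so a union bound together with the fact that $x$ and $x^+$ are identically distributed (both $\sim\aug{\bar{x}}$) gives the desired $2\alpha$ bound.

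There is essentially no obstacle here; the argument is a direct union bound combined with the interpretation of the edge weights as the law of a random positive pair. The only thing to be careful about is making explicit that $x$ and $x^+$ have the same marginal distribution (conditional on $\bar{x}$), so that each of the two ``mislabeled'' events contributes at most $\alpha$ to the bound.
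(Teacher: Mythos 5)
Your proposal is correct and matches the paper's own proof: you take $\hat{y}=g$, note the first bound is immediate from the assumption, and obtain the second via the union bound $\{\hat{y}(x)\neq\hat{y}(x^+)\}\subseteq\{\hat{y}(x)\neq y(\bar{x})\}\cup\{\hat{y}(x^+)\neq y(\bar{x})\}$ together with the identical conditional law of $x$ and $x^+$. The paper writes this same argument in sum/expectation notation (expanding $w_{xx'}$ and using the pointwise indicator inequality), but the content is identical.
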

\begin{proof}[Proof of Lemma~\ref{remark:edge_between_sets}]
	We define function $\hat{y}:\adata\rightarrow [r]$ as follows: for an augmented data $x\in\adata$,  we use function $\hat{y}(x)$ to represent the index of set that $x$ is in, i.e., $x\in S_{\hat{y}(x)}$. By Assumption~\ref{definition:accurate_partition} it is easy to see $\Delta(y, \hat{y})\le \alpha$. 
	On the other hand, we have
	\begin{align*}
	\phi^{\hat{y}} = &\sum_{x, x'\in\adata}\wpair{x}{x'} \id{\hat{y}(x)\ne \hat{y}(x')}\\
	=&\sum_{x, x'\in \adata}\Exp{\bar{x}\sim\pndata}\left[  \augp{x}{\bar{x}} \augp{x'}{\bar{x}} \cdot \id{\hat{y}(x)\ne \hat{y}(x')}\right]\\
	\le & \sum_{x, x'\in \adata}\Exp{\bar{x}\sim\pndata}\left[ \augp{x}{\bar{x}} \augp{x'}{\bar{x}} \cdot \left(\id{\hat{y}(x)\ne y(\bar{x})} + \id{\hat{y}(x')\ne y(\bar{x})}\right)\right]\\
	= & 2\cdot \Exp{\bar{x}\sim\pndata}\left[ \augp{x}{\bar{x}} \cdot \id{\hat{y}(x)\ne y(\bar{x})}\right]\\
	= & 2\cdot \Pr_{\bar{x}\sim\pndata, x\sim \aug{\bar{x}}}\left(x\notin S_{y(\bar{x})}\right) = 2\alpha.
	\end{align*}
	Here the inequality is because when $\hat{y}(x)\ne \hat{y}(x')$, there must be $\hat{y}(x)\ne y(\bar{x})$ or $\hat{y}(x')\ne y(\bar{x})$.
\end{proof}

Now we give the proof of Theorem~\ref{thm:combine_with_cheeger_simplified} using Lemma~\ref{remark:edge_between_sets} and Theorem~\ref{thm:combine_with_cheeger}.

\begin{proof}[Proof of Theorem~\ref{thm:combine_with_cheeger_simplified}]
	Let $S_1, S_2, \cdots, S_r$ be the partition of $\adata$ induced by the classifier $g$ given in Assumption~\ref{definition:accurate_partition}.
	Define function $\hat{y}:\adata\rightarrow [r]$ as follows: for an augmented datapoint $x\in\adata$,  we use function $\hat{y}(x)$ to represent the index of set that $x$ is in, i.e., $x\in S_{\hat{y}(x)}$.  Let $k'=\lfloor\frac{k}{2}\rfloor$ in Theorem~\ref{thm:combine_with_cheeger}, we have $
	\Pr_{\bar{x}\sim \pndata, {x}\sim\aug{\bar{x}}} \left(\pred_{\globalminf, \matrixw^* }({x}) \ne y(\bar{x})\right) \lesssim \log(k)\cdot \frac{\phi^{\hat{y}}}{\rho_{\lfloor k/2\rfloor}^2} + \Delta(y, \hat{y}). 
	$ By Lemma~\ref{remark:edge_between_sets} we have $\phi^{\hat{y}}\le2\alpha$ and $\Delta(y, \hat{y}) \le \alpha$, so we have $
	\Pr_{\bar{x}\sim \pndata, {x}\sim\aug{\bar{x}}} \left(\pred_{\globalminf, \matrixw^* }({x}) \ne y(\bar{x})\right) \lesssim \frac{\alpha}{\rho_{\lfloor k/2\rfloor}^2}\cdot \log(k). 
	$ Notice that by definition of ensembled linear probe predictor, $\npred_{\globalminf, \matrixw ^*}(\bar{x})\ne y(\bar{x})$ happens only if more than half of the augmentations of $\bar{x}$ predicts differently from $y(\bar{x})$, so we have $\Pr_{\bar{x}\sim\pndata}\left(\npred_{\globalminf, \matrixw ^*}(\bar{x})\ne y(\bar{x})\right) \le 2\Pr_{\bar{x}\sim \pndata, {x}\sim\aug{\bar{x}}} \left(\pred_{\globalminf, \matrixw^* }({x}) \ne y(\bar{x})\right) \lesssim \frac{\alpha}{\rho_{\lfloor k/2\rfloor}^2}\cdot \log(k)$.
\end{proof}

\subsection{Proof of Theorem~\ref{theorem:error_rate_with_eigenvectors}}\label{section:proof_of_error_rate_with_eigenvectors}

The proof of Theorem~\ref{theorem:error_rate_with_eigenvectors} contains two steps. First, we show that when the feature extractor is composed of the minimal eigenvectors of the normalized Laplacian matrix $L$, we can achieve good linear probe accuracy. Then we show that minimizing $\Loss{f}$ gives us a feature extractor equally good as the eigenvectors.

For the first step, we use the following lemma which shows that the smallest eigenvectors of $\laplacian$ can approximate any function on $\adata$ up to an error proportional to the Rayleigh quotient of the function.

\begin{lemma}\label{lemma:approximate_vector_of_small_rayleight_quotient}
	Let $\laplacian$ be the normalized Laplacian matrix of some graph $G$. Let $\sizead=|\adata|$ be total number of augmented data, ${\eigv}_i$ be the $i$-th smallest unit-norm eigenvector of $\laplacian$ with eigenvalue $\lambda_i$ (make them orthogonal in case of repeated eignevalues). Let $R({u}) := \frac{{u}^\top \laplacian {u}}{{u}^\top {u}}$ be the Rayleigh quotient of a vector ${u}\in \Real^{\sizead}$ . Then, for any $k\in \mathcal{Z}^+$ such that $ k<\sizead$ and $\lambda_{k+1}>0$, there exists a vector ${\vectorw}\in \Real^{k}$ with norm $\norm{\vectorw}_2\le \norm{{u}}_2$ such that 
	\begin{align*}
	\norm{u - \sum_{i=1}^k {\vectorw}_{i}{\eigv}_i }_2^2 \le \frac{R({u})}{\lambda_{k+1}} \norm{{u}}_2^2.
	\end{align*} 
\end{lemma}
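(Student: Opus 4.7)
The plan is to exploit the spectral decomposition of the normalized Laplacian $L$, which is symmetric and positive semidefinite, so its eigenvectors $v_1,\dots,v_N$ form an orthonormal basis of $\mathbb{R}^N$ and the eigenvalues satisfy $0\le \lambda_1\le \lambda_2\le\dots\le\lambda_N$. The natural candidate for $b$ will be simply the projection of $u$ onto the span of the first $k$ eigenvectors; the problem then reduces to controlling the tail mass of $u$ in the higher-eigenvalue subspace.

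First I would expand $u=\sum_{i=1}^N c_i v_i$ with $c_i=v_i^\top u$, so that orthonormality gives $\|u\|_2^2=\sum_{i=1}^N c_i^2$ and a direct calculation gives $u^\top L u=\sum_{i=1}^N \lambda_i c_i^2$. I then take $b_i=c_i$ for $i\le k$, which makes $\sum_{i=1}^k b_i v_i$ the orthogonal projection of $u$ onto $\mathrm{span}(v_1,\dots,v_k)$. The norm bound $\|b\|_2\le \|u\|_2$ is immediate from $\|b\|_2^2=\sum_{i=1}^k c_i^2\le \sum_{i=1}^N c_i^2=\|u\|_2^2$, and the squared residual equals the tail sum $\sum_{i=k+1}^N c_i^2$ by Pythagoras.

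Second, I bound this tail sum using the ordering of eigenvalues. Since $\lambda_i\ge 0$ for all $i$ and $\lambda_i\ge \lambda_{k+1}$ for $i\ge k+1$, we get
\begin{align*}
u^\top L u \;=\; \sum_{i=1}^N \lambda_i c_i^2 \;\ge\; \sum_{i=k+1}^N \lambda_i c_i^2 \;\ge\; \lambda_{k+1}\sum_{i=k+1}^N c_i^2.
\end{align*}
Dividing by $\lambda_{k+1}>0$ and using the definition $R(u)=u^\top L u/\|u\|_2^2$ yields $\sum_{i=k+1}^N c_i^2 \le R(u)\|u\|_2^2/\lambda_{k+1}$, which is exactly the desired inequality.

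There is essentially no substantive obstacle here; the lemma is a textbook consequence of the spectral theorem combined with the variational characterization of eigenvalues. The only minor care needed is to note that $L$ is indeed symmetric PSD (so the orthonormal eigenbasis and non-negativity of $\lambda_i$ are valid) and that choosing the projection onto the bottom-$k$ eigenspace is optimal, which is why the simple choice $b_i=c_i$ both satisfies the norm constraint and minimizes the residual.
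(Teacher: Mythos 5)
Your proof is correct and follows essentially the same approach as the paper: decompose $u$ in the orthonormal eigenbasis of $L$, take $b$ to be the first $k$ coefficients, and bound the tail mass by lower-bounding $u^\top L u$ using $\lambda_i \ge \lambda_{k+1}$ for $i \ge k+1$. You simply spell out the final inequality chain that the paper compresses into a single "Noticing that" step.
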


\begin{proof}[Proof of Lemma~\ref{lemma:approximate_vector_of_small_rayleight_quotient}]
	We can decompose the vector ${u}$ in the eigenvector basis as: 
	\begin{align*}
	{u} = \sum_{i=1}^{\sizead} \zeta_i {\eigv}_i.
	\end{align*}
	We have 
	\begin{align*}
	R({u}) = \frac{\sum_{i=1}^{\sizead} \lambda_i \zeta_i^2}{\norm{{u}}_2^2}.
	\end{align*}
	Let ${\vectorw}\in\Real^k$ be the vector such that ${\vectorw}_{ i}=\zeta_i$. Obviously we have $\norm{{\vectorw}}_2^2 \le \norm{{u}}_2^2$. Noticing that 
	\begin{align*}
	\norm{{u} - \sum_{i=1}^{k}{\vectorw}_{i}{\eigv}_i}_2^2 = \sum_{i=k+1}^{\sizead} \zeta_i^2 \le \frac{R({u})}{\lambda_{k+1}} \norm{{u}}_2^2,
	\end{align*}
	which finishes the proof.
\end{proof}

We also need the following claim about the Rayleigh quotient $R({u})$ when ${u}$ is a vector defined by an extended labeling function $\hat{y}$.
\begin{claim}\label{corollary:sparsecut_to_rayleigh_quotient}
	In the setting of Lemma~\ref{lemma:approximate_vector_of_small_rayleight_quotient}, let $\hat{y}$ be an extended labeling function. Fix $i\in [r]$. Define function $u_i^{\hat{y}}(x) := \sqrt{\wnode{x}} \cdot \id{\hat{y}(x) = i} $ and ${u}_i^{\hat{y}}$ is the corresponding vector in $\Real^{\sizead}$. Also define the following quantity:
	\begin{align*}
	\phi_i^{\hat{y}} := \frac{
		\sum_{x, x'\in \adata} \wpair{x}{x'}\cdot \id{(\hat{y}(x) = i\land \hat{y}(x')\ne i) \text{ or }(\hat{y}(x) \ne i \land \hat{y}(x')= i)} 
	}{
		\sum_{x\in \adata} \wnode{x}\cdot  \id{\hat{y}(x) = i}
	}.
	\end{align*}
	Then, we have 
	\begin{align*}
	R(u_i^{\hat{y}})=\frac{1}{2}\phi_i^{\hat{y}}.
	\end{align*}
\end{claim}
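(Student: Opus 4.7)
My plan is to verify the identity by direct computation: expand the Rayleigh quotient $R(u_i^{\hat{y}}) = \frac{(u_i^{\hat{y}})^\top \laplacian \, u_i^{\hat{y}}}{(u_i^{\hat{y}})^\top u_i^{\hat{y}}}$ in terms of the defining sums and match them against the numerator and denominator of $\phi_i^{\hat{y}}$. This should be a routine calculation, since the vector $u_i^{\hat{y}}$ is essentially the indicator of the set $\{x:\hat{y}(x)=i\}$ weighted by $\sqrt{w_x}$, i.e., exactly the vector for which the classical spectral-graph-theoretic identity ``Rayleigh quotient on the normalized Laplacian equals (half of) the conductance'' holds.

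First, I would handle the denominator: since $u_i^{\hat{y}}(x)^2 = \wnode{x}\cdot\id{\hat{y}(x)=i}$, we immediately get $(u_i^{\hat{y}})^\top u_i^{\hat{y}} = \sum_{x\in\adata}\wnode{x}\cdot\id{\hat{y}(x)=i}$, which is exactly the denominator of $\phi_i^{\hat{y}}$. Next, for the numerator, I would write $\laplacian = I - D^{-1/2}AD^{-1/2}$ and substitute $v = D^{-1/2}u_i^{\hat{y}}$, so that $v(x) = \id{\hat{y}(x)=i}$. This gives $(u_i^{\hat{y}})^\top D^{-1/2}AD^{-1/2}u_i^{\hat{y}} = v^\top A v = \sum_{x,x'}\wpair{x}{x'}\id{\hat{y}(x)=i}\id{\hat{y}(x')=i}$. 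Combining with the identity part and using $\wnode{x} = \sum_{x'}\wpair{x}{x'}$, the numerator becomes
\begin{align*}
(u_i^{\hat{y}})^\top \laplacian \, u_i^{\hat{y}} = \sum_{x,x'\in\adata}\wpair{x}{x'}\cdot\id{\hat{y}(x)=i}\cdot\id{\hat{y}(x')\ne i}.
\end{align*}

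Finally, I would invoke the symmetry $\wpair{x}{x'}=\wpair{x'}{x}$, which is immediate from the definition $\wpair{x}{x'} = \Exp{\bar{x}\sim\pndata}[\augp{x}{\bar{x}}\augp{x'}{\bar{x}}]$, to symmetrize the above expression. The sum above equals half of $\sum_{x,x'}\wpair{x}{x'}\cdot\id{(\hat{y}(x)=i\land\hat{y}(x')\ne i)\text{ or }(\hat{y}(x)\ne i\land\hat{y}(x')= i)}$, which is exactly $\frac{1}{2}$ times the numerator of $\phi_i^{\hat{y}}$. Dividing by the denominator yields $R(u_i^{\hat{y}}) = \frac{1}{2}\phi_i^{\hat{y}}$.

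I do not anticipate a real obstacle; this is a bookkeeping exercise. The only thing to be careful about is the factor of $\frac{1}{2}$ arising from the symmetrization step, and making sure the ``identity'' part of $\laplacian$ is correctly rewritten as a double sum via $\wnode{x} = \sum_{x'}\wpair{x}{x'}$ so that both terms in the numerator can be combined into a single double sum before symmetrization.
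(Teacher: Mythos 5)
Your proposal is correct and follows essentially the same route as the paper: the paper computes the general identity $u^\top L u = \tfrac{1}{2}\sum_{x,x'} w_{xx'}\big(f(x)-f(x')\big)^2$ for $u(x)=\sqrt{w_x}f(x)$ and then substitutes $f(x)=\id{\hat y(x)=i}$, whereas you substitute the indicator first and symmetrize at the end, but the algebraic steps (rewriting $w_x=\sum_{x'} w_{xx'}$ to merge the diagonal term into the double sum, and using the symmetry of $w_{xx'}$ to produce the factor $\tfrac{1}{2}$) are the same.
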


\begin{proof}[Proof of Claim~\ref{corollary:sparsecut_to_rayleigh_quotient}]
	Let $f$ be any function $\adata \rightarrow \Real$, define function $u(x) := \sqrt{\wnode{x}} \cdot f(x)$. Let ${u}\in\Real^{\sizead}$ be the vector corresponding to $u$. Let ${A}$ be the adjacency matrix with ${A}_{xx'} = \wpair{x}{x'}$ and ${D}$ be the diagonal matrix with ${D}_{xx} = \wnode{x}$. By definition of Laplacian matrix, we have
	\begin{align*}
	{u}^\top \laplacian {u} &=  \norm{{u}}_2^2 - {u}^\top  {D}^{-1/2}{A}{D}^{-1/2} {u}\\
	&= \sum_{x\in\adata}\wnode{x} f(x)^2 - \sum_{x, x'\in \adata}\wpair{x}{x'}  f(x) f(x')\\
	&= \frac{1}{2}\sum_{x, x'\in \adata}\wpair{x}{x'}\cdot \left(f(x)-f(x')\right)^2.
	\end{align*}
	Therefore we have
	\begin{align*}
	R({u}) &= \frac{{u}^\top \laplacian {u}}{{u}^\top {u}}\\
	&= \frac{1}{2}\cdot \frac{\sum_{x, x'\in \adata}\wpair{x}{x'}\cdot \left(f(x)-f(x')\right)^2}{\sum_{x\in\adata}\wnode{x}\cdot f(x)^2}.
	\end{align*}
	Setting $f(x) = \id{\hat{y}(x) = i}$ finishes the proof.
\end{proof}

To see the connection between the feature extractor minimizing the population spectral contrastive loss $\Loss{f}$ and the feature extractor corresponding to eigenvectors of the Laplacian matrix, we use the following lemma which states that the minimizer of the matrix approximation loss defined in Section~\ref{section:spectral_contrastive_loss} is equivalent to the minimizer of population spectral contrastive loss up to a data-wise scaling. 

\begin{lemma}\label{lemma:equivalent_between_two_losses}
	Let ${f}: \adata\rightarrow\Real^k$ be a feature extractor, matrix
	${{F}}\in\Real^{N\times k}$ be such that its $x$-th row is $\sqrt{\wnode{x}}\cdot {f}(x)$. Then, ${{F}}$ is a minimizer of $\Lossmc{{F}}$ if and only if 
	$f$ is a minimizer of the population spectral contrastive loss $\Loss{f}$. 
\end{lemma}

%

\begin{proof}[Proof of Lemma~\ref{lemma:equivalent_between_two_losses}]
	Notice that
	\begin{align}\label{equation:derive_loss}
	\Lossmc{{F}} &= \norm{({I}-\laplacian)-{F}{F}^\top}_F^2\nonumber\\
	&= \sum_{x, x'\in \adata} \left(\frac{\wpair{x}{x'}}{\sqrt{\wnode{x}\wnode{x'}}} -\sqrt{w_xw_{x'}} f(x)^\top f(x')\right)^2\nonumber\\
	&= \sum_{x, x'\in\adata} w_xw_{x'}\left(f(x)^\top f(x')\right)^2 - 2\sum_{x, x'\in \adata} {\wpair{x}{x'}} f(x)^\top f(x') + \norm{{I}-\laplacian}_F^2.
	\end{align}
	
	Recall that the definition of spectral contrastive loss is
	\begin{align*}
	\Loss{f} := -2\cdot \Exp{x, x^+} \left[f(x)^\top f(x^+) \right]
	+ \Exp{x, x^-}\left[\left(f(x)^\top f(x^-) \right)^2\right],
	\end{align*}   
	where $(x, x^+)$ is a random positive pair, $(x, x^-)$ is a random negative pair. We can rewrite the spectral contrastive loss as
	\begin{align}\label{equation:derive_lossmc}
	\Loss{f} &= -2 \sum_{x, x'\in \adata}\wpair{x}{x'} \cdot f(x)^\top f(x') + \sum_{x, x'\in \adata} \wnode{x}\wnode{x'}\cdot\left(f(x)^\top f(x') \right)^2.
	\end{align}
	
	Compare Equation~\eqref{equation:derive_loss} and Equation~\eqref{equation:derive_lossmc}, we see they only differ by a constant, which finishes the proof.
\end{proof}

%

Note that the minimizer of matrix approximation loss is exactly the largest eigenvectors of ${I}-{L}$ (also the smallest eigenvectors of ${L}$) due to Eckart–Young–Mirsky theorem, Lemma~\ref{lemma:equivalent_between_two_losses} indicates that the minimizer of $\Loss{f}$ is equivalent to the smallest eigenvectors of $\laplacian$ up to data-wise scaling.

The following claim shows the relationship between quadratic loss and prediction error. 

\begin{claim}\label{claim:loss_to_error}
  Let $f:\adata\rightarrow\Real^k$ be a feature extractor, $B\in\Real^{k\times k}$ be a linear head. Let $g_{f, B}$ be the predictor defined in Section~\ref{section:framework}. Then, for any $x\in\adata$ and label $y\in[k]$, we have 
  \begin{align*}
  	\norm{\vec{y}-Bf(x)}_2^2 \ge \frac{1}{2}\cdot\id{y\ne g_{f, B}(x)},
  \end{align*}
  where $\vec{y}$ is the one-hot embedding of $y$.
\end{claim}

\begin{proof}
	When $y\ne g_{f, B}(x)$, by the definition of $g_{f, B}$ we know that there exists another $y'\ne y$ such that $(Bf(x))_{y'}\ge(Bf(x))_{y}$. In this case,
	\begin{align}
	\norm{\vec{y}-Bf(x)}_2^2 &\ge \left(1-(Bf(x))_{y}\right)^2 + (Bf(x))_{y'}^2\\
	&\ge \frac{1}{2} \left(1-(Bf(x))_{y}+(Bf(x))_{y'}\right)^2\\
	&\ge \frac{1}{2},
	\end{align}
	where the first inequality is by omitting all the dimensions in the $\ell_2$ norm other than the $y$-th and $y'$-th dimensions, the second inequality is by Jensen's inequality, and the third inequality is because $(Bf(x))_{y'}\ge(Bf(x))_{y}$. 
	This proves the inequality in the claim when $y\ne g_{f, B}(x)$.
	Finally, we finish the proof by noticing that the inequality in this claim obviously holds when $y=g_{f, B}(x)$.
\end{proof}

Now we are ready to prove Theorem~\ref{theorem:error_rate_with_eigenvectors} by combining  Lemma~\ref{lemma:approximate_vector_of_small_rayleight_quotient}, Claim~\ref{corollary:sparsecut_to_rayleigh_quotient}, Lemma~\ref{lemma:equivalent_between_two_losses} and Claim~\ref{claim:loss_to_error}.

\begin{proof}[Proof of Theorem~\ref{theorem:error_rate_with_eigenvectors}]	
	Let ${F}_{\scf}=[{\eigv}_1, {\eigv}_2, \cdots, {\eigv}_k]$ be the matrix that contains the smallest $k$ eigenvectors of $\laplacian$ as columns. For each $i\in [r]$, we define function $u_i^{\hat{y}}(x) := \sqrt{\wnode{x}} \cdot \id{\hat{y}(x) = i} $ and ${u}_i^{\hat{y}}$ be the corresponding vector in $\Real^{\sizead}$.	By Lemma~\ref{lemma:approximate_vector_of_small_rayleight_quotient}, there exists a vector ${\vectorw}_i\in \Real^k$ with norm bound $\norm{{\vectorw}_i}_2\le \norm{{u}_i^{\hat{y}}}_2$ such that 
	\begin{align}\label{equation:bound_on_indicator_vector}
	\norm{{u}_i^{\hat{y}} - {F}_{\scf} \vectorw_i}_2^2 &\le \frac{R({u}_i^{\hat{y}})}{\lambda_{k+1}} \norm{{u}_i^{\hat{y}}}_2^2.
	\end{align}
	
	By Claim~\ref{corollary:sparsecut_to_rayleigh_quotient}, we have 
	\begin{align*}
	R({u}_i^{\hat{y}}) = \frac{1}{2}\phi_i^{\hat{y}} =\frac{1}{2}\cdot \frac{
		\sum_{x, x' \in \adata} \wpair{x}{x'}\cdot \id{(\hat{y}(x) = i\land \hat{y}(x')\ne i) \text{ or }(\hat{y}(x) \ne i \land \hat{y}(x')= i)} 
	}{
		\sum_{x\in \adata} \wnode{x}\cdot  \id{\hat{y}(x) = i}
	}.
	\end{align*}
	So we can rewrite Equation~\eqref{equation:bound_on_indicator_vector} as:
	\begin{align}\label{equation:bound_on_indicator_vector_plus}
	\norm{{u}_i^{\hat{y}} - {F}_{\scf} {\vectorw}_i}_2^2 &\le \frac{\phi_i^{\hat{y}}}{2\lambda_{k+1}}  \cdot \sum_{x\in \adata} \wnode{x}\cdot \id{\hat{y}(x)=i} \nonumber\\
	&=  \frac{1}{2\lambda_{k+1}} \sum_{x, x' \in \adata}\wpair{x}{x'}\cdot \id{(\hat{y}(x) = i\land \hat{y}(x')\ne i) \text{ or }(\hat{y}(x) \ne i \land \hat{y}(x')= i)}.
	\end{align}
	
	Let matrix ${U} = [{u}_1^{\hat{y}}, \cdots, {u}_r^{\hat{y}}]$ contains all ${u}_i^{\hat{y}}$ as columns, and let $u:\adata\rightarrow\Real^r$ be the corresponding feature extractor. 
	Define matrix $\matrixw \in\Real^{\sizead\times k}$ such that
	$\matrixw ^\top = [{\vectorw}_1, \cdots, {\vectorw}_r]$. Summing Equation~\eqref{equation:bound_on_indicator_vector_plus} over all $i\in [r]$ and by the definition of $\phi^{\hat{y}}$ we have
	\begin{align}\label{equation:error_on_all_data}
	\norm{{U} - {F}_{\scf}\matrixw ^\top}_F^2 \le \frac{1}{2\lambda_{k+1}} \sum_{x, x' \in \adata}\wpair{x}{x'}\cdot \id{\hat{y}(x) \ne \hat{y}(x')} = \frac{\phi^{\hat{y}}}{2\lambda_{k+1}},
	\end{align}
	where 
	\begin{align*}
	\norm{\matrixw }_F^2 = \sum_{i=1}^{r}\norm{{\vectorw}_i}_2^2 \le \sum_{i=1}^{r}\norm{{u}_i^{\hat{y}}}_2^2 = \sum_{x\in\adata}\wnode{x}=1.
	\end{align*}
	
	Now we come back to the feature extractor $\globalminf$ that minimizes the spectral contrastive loss function $\Loss{f}$. By Lemma~\ref{lemma:equivalent_between_two_losses}, matrix ${F}^*$ that contains $\sqrt{\wnode{x}}\cdot \globalminf(x)$ as its $x$-th row is a minimizer of $\Lossmc{F}$. By Eckard-Young-Mirsky theorem, we have 
	\begin{align*}
	{F}^* = {F}_{\scf} {D}_\lambda {Q},
	\end{align*}
	where ${Q}$ is an orthonormal matrix and 
	\begin{align*}
	{D}_\lambda = \begin{bmatrix}
	\sqrt{1-\lambda_1} & & & \\
	& \sqrt{1-\lambda_2} & & \\
	& & \cdots & \\
	& & & \sqrt{1-\lambda_k}
	\end{bmatrix}.
	\end{align*}
	
	Let 
	\begin{align*}
	\matrixw ^* = \matrixw {D}_\lambda^{-1}{Q}^{-1},
	\end{align*}
	and let $\vec{{y}}(\bar{x})$ be the one-hot embedding of ${y}(\bar{x})$, $\vec{\hat{y}}({x})$ be the one-hot embedding of $\hat{y}({x})$, we have
	\begin{align*}
	&\Exp{\bar{x}\sim\pndata, {x}\sim\aug{\bar{x}}}\left[\norm{\vec{{y}}(\bar{x}) - \matrixw ^*\globalminf({x})}_2^2\right]\\
	\le &2 \Exp{\bar{x}\sim\pndata, {x}\sim\aug{\bar{x}}}\left[\norm{\vec{\hat{y}}({x}) - \matrixw ^*\globalminf({x})}_2^2\right] + 2\Exp{\bar{x}\sim\pndata, {x}\sim\aug{\bar{x}}}\left[\norm{\vec{\hat{y}}({x}) - \vec{y}(\bar{x})}_2^2\right] \\
	=& 2\sum_{{x}\in\adata} \wnode{{x}}\cdot  \norm{\vec{\hat{y}}({x}) - \matrixw ^*\globalminf({x})}_2^2 + 4\Delta(y, \hat{y}) \tag{because $\wnode{x}$ is the probability of $x$}\\
	=& 2\norm{{U}-{F}^* {\matrixw ^*}^\top}_F^2+ 4\Delta(y, \hat{y})\tag{rewrite in matrix form}\\
	=& 2\norm{{U} - {F}_{\scf}\matrixw ^\top}_F^2 + 4\Delta(y, \hat{y})\tag{by definition of $\matrixw^*$}\\
	\le & \frac{\phi^{\hat{y}}}{\lambda_{k+1}}+ 4\Delta(y, \hat{y}).\tag{by Equation~\eqref{equation:error_on_all_data}}
	\end{align*}
	
	To bound the error rate, we first notice that Claim~\ref{claim:loss_to_error} tells us that for any ${x}\in\adata$,
	\begin{align}\label{equation:error_one_data}
	\norm{\vec{{y}}(\bar{x}) - \matrixw^* \globalminf({x})}_2^2\ge  \frac{1}{2}  \cdot \id{\pred_{\globalminf, \matrixw^* }({x}) \ne {y}(\bar{x})}.
	\end{align}
	
	Now we bound the error rate on $\adata$ as follows:
	\begin{align*}
	&\Pr_{\bar{x}\sim \pndata, {x}\sim\aug{\bar{x}}} \left(\pred_{\globalminf, \matrixw^* }({x}) \ne y(\bar{x})\right)\\
	\le & 2 \Exp{\bar{x}\sim\pndata, {x}\sim\aug{\bar{x}}}\left[\norm{\vec{{y}}(\bar{x}) - \matrixw ^*\globalminf({x})}_2^2\right] \tag{by Equation~\eqref{equation:error_one_data}}\\
	\le &  \frac{2\phi^{\hat{y}}}{\lambda_{k+1}} + 8\Delta(y, \hat{y}).
	\end{align*}
	
	Finally we bound the norm of $B^*$ as
	\begin{align*}
	\norm{B^*}_F^2 = Tr\left(B^*{B^*}^\top\right) = Tr\left(\matrixw D_\lambda^{-2} \matrixw^\top\right) \le \frac{1}{1-\lambda_{k}}\norm{\matrixw}_F^2= \frac{1}{1-\lambda_{k}}.
	\end{align*}
\end{proof}

\section{Proofs for Section~\ref{section:gaussian_example}}\label{section:proof_of_gaussian_example}

\subsection{Proof of Proposition~\ref{proposition:rho_lower_bound}}\label{section:proof_rho_lower_bound}

\begin{proof}[Proof of Proposition~\ref{proposition:rho_lower_bound}]
	Let $B_\sigma$ be the uniform distribution over a ball with radius $\sigma$.
	Let $S_1, S_2, \cdots, S_{m+1}$ be a partition of the Euclidean space. There must be some $i\in[m+1]$ such that $\Pr_{{x}\sim P_j, \xi\sim B_\sigma}[{x}+\xi \in S_i]\le\frac{1}{2}$ for all $j\in[m]$. Thus, we know that 
	\begin{align}\label{eqn:rho_lower_bound_1}
		\rho_{m+1}\ge \phi_G(S_i)\ge \min_{j\in[m]} \frac{\Pr_{{x}\sim P_j, \xi \sim B_\sigma, \xi' \sim B_\sigma}[{x}+\xi \in S_i \land {x}+\xi' \notin S_i]}{\Pr_{{x}\sim P_j, \xi\sim B_\sigma}[{x}+\xi \in S_i]}.
	\end{align}
	For $x\in\Real^d$,  we use $P(S_i|x)$ as a shorthand for $\Pr_{\xi\sim B_\sigma}(x+\xi\in S_i)$. Let
	\begin{align}
		R := \left\{x \bigg\vert  \Pr(S_i|x) \ge \frac{2}{3}\right\}.
	\end{align}
	On one hand, suppose $\int_{x\notin R} P_j(x) P(S_i|x)dx\ge \frac{1}{2} \Pr_{{x}\sim P_j, \xi\sim B_\sigma}[{x}+\xi \in S_i]$, we can lower bound the numerator in the RHS of Equation~\eqref{eqn:rho_lower_bound_1} as
	\begin{align}
		\Pr_{{x}\sim P_j, \xi \sim B_\sigma, \xi' \sim B_\sigma}[{x}+\xi \in S_i \land {x}+\xi' \notin S_i]  &\ge \int_{x\notin R} P_j(x) P(S_i|x) (1-P(S_i|x)) dx \\
		&\ge \frac{1}{6} \Pr_{{x}\sim P_j, \xi\sim B_\sigma}[{x}+\xi \in S_i],
	\end{align}
	hence the RHS of Equation~\eqref{eqn:rho_lower_bound_1} is at least $1/6$.
	
	On the other hand, suppose $\int_{x\notin R} P_j(x) P(S_i|x)dx< \frac{1}{2} \Pr_{{x}\sim P_j, \xi\sim B_\sigma}[{x}+\xi \in S_i]$, we have
	\begin{align}
		\int_{x\in R} P_j(x) P(S_i|x)dx =\Pr_{{x}\sim P_j, \xi\sim B_\sigma}[{x}+\xi \in S_i] - \int_{x\notin R} P_j(x) P(S_i|x)dx \ge \frac{1}{2}\Pr_{{x}\sim P_j, \xi\sim B_\sigma}[{x}+\xi \in S_i],
	\end{align}
	hence the denominator of the RHS of Equation~\eqref{eqn:rho_lower_bound_1} can be upper bounded by 
	\begin{align}\label{eqn:rho_lower_bound_2}
		\Pr_{{x}\sim P_j, \xi\sim B_\sigma}[{x}+\xi \in S_i] \le 2 \int_{x\in R} P(S_i|x)P_j(x)dx \le 2\int_{x\in R}P_j(x)dx.
	\end{align}
	Define 
	\begin{align}
		N(R) := \left\{x \bigg \vert \norm{x-a}_2 \le \frac{\sigma}{6} \text{ for some } a\in R\right\}.
	\end{align}
	For two Gaussian distributions with variance $\sigma^2\cdot \mathcal{I}_{d\times d}$ and centers at most $\frac{\sigma}{6}$ far from each other, their TV-distance is at most $\frac{1}{6}$ (see the first equation on Page 5  of~\cite{devroye2018total}), hence for any $x\in N(R)$, we have $P(S_i|x) \ge \frac{2}{3}-\frac{1}{6} = \frac{1}{2}$. We can now lower bound the numerator in the RHS of Equation~\eqref{eqn:rho_lower_bound_1} as:
	\begin{align}\label{eqn:rho_lower_bound_3}
		\Pr_{{x}\sim P_j, \xi \sim B_\sigma, \xi' \sim B_\sigma}[{x}+\xi \in S_i \land {x}+\xi' \notin S_i]  &\ge \int_{x\in N(R)\backslash R} P_j(x)P(S_i|x)(1-P(S_i|x))dx\nonumber\\
		&\ge \frac{1}{6} \int_{x\in N(R)\backslash R} P_j(x) dx.
	\end{align} 
	Notice that $\Pr_{{x}\sim P_j, \xi\sim B_\sigma}[{x}+\xi \in S_i]\le\frac{1}{2}$ and by the definition of $R$, we know $\int_{x\in R} P_j(x)dx\le \frac{3}{4}$, thus
	\begin{align}\label{eqn:rho_lower_bound_4}
		\int_{x\in R} P_j(x)dx\le \frac{3}{4} \le 3 \int_{x\notin R} P_j(x)dx.
	\end{align}
	Combine Equation~\eqref{eqn:rho_lower_bound_2}, Equation~\eqref{eqn:rho_lower_bound_3} and Equation~\eqref{eqn:rho_lower_bound_4} gives:
	\begin{align}
		\frac{\Pr_{{x}\sim P_j, \xi \sim B_\sigma, \xi' \sim B_\sigma}[{x}+\xi \in S_i \land {x}+\xi' \notin S_i]}{\Pr_{{x}\sim P_j, \xi\sim B_\sigma}[{x}+\xi \in S_i]} \ge \frac{1}{36} \frac{\int_{x\in N(R)\backslash R} P_j(x) P(S_i|x)dx}{\min\{\int_{x\in R}P_j(x)dx, \int_{x\in R}P_j(x)dx\}}.
	\end{align}
	Notice that (using the definition of surface area~\cite[chapter 4]{guggenheimer1977applicable})
	\begin{align}
		\lim_{\sigma\rightarrow0^+} \frac{1}{\sigma}\cdot  \frac{\int_{x\in N(R)\backslash R} P_j(x) P(S_i|x)dx}{\min\{\int_{x\in R}P_j(x)dx, \int_{x\in R}P_j(x)dx\}} \ge \frac{1}{6} h_{P_j},
	\end{align}
	we have that as $\sigma\rightarrow 0^+$, 
	\begin{align}
		\frac{\rho_{m+1}}{\sigma} \ge \frac{1}{216}\min_{j\in[m]} h_{P_j},
	\end{align}
	which finishes the proof.
	
\end{proof}

\subsection{Proof of Theorem~\ref{theorem:gaussian_example}}\label{section:proof_Gaussian_thm}
In this section, we give a proof of Theorem~\ref{theorem:gaussian_example}. 

The following lemma shows that the augmented graph for Example~\ref{example:gaussian} satisfies Assumption~\ref{definition:accurate_partition} with some bounded $\alpha$.

\begin{lemma}\label{lemma:gaussian_example_lemma1}
	In the setting of Theorem~\ref{theorem:gaussian_example}, the data distribution satisfies Assumption~\ref{definition:accurate_partition} with $\alpha\le\frac{1}{\textup{poly}(d')}$.
\end{lemma}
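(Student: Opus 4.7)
The plan is to exhibit an explicit classifier $g$ that first denoises the augmented input by projecting it onto the data manifold and then performs nearest-mean classification in the latent space. Let $M := Q(\Real^{d'})$ be the image manifold and define
\[
g(x) := \argmin_{i\in[r]} \norm{Q^{-1}(\pi_M(x)) - \mu_i}_2, \qquad \pi_M(x) := \argmin_{y\in M}\norm{y-x}_2.
\]
I will bound $\Pr(g(x)\ne y(\bar x))$ over a random $\bar x \sim \pndata$ and augmentation $x = \bar x + \xi$. Conditioning on $\bar x$ being drawn from mixture component $i$, we have $z := Q^{-1}(\bar x) \sim \mathcal{N}(\mu_i, \tfrac{1}{d'} I_{d'})$, and the ground truth satisfies $y(\bar x) = \argmin_j\norm{z-\mu_j}_2$.

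The first step is to propagate the noise through the projection and the inverse generator. Standard Gaussian $\chi^2$-concentration gives $\norm{\xi}_2 \le 2\sigma$ with probability at least $1 - e^{-\Omega(d)}$. On this event, since $\bar x \in M$ is itself a candidate projection, a triangle-inequality argument yields $\norm{\pi_M(x) - \bar x}_2 \le 2\norm{\xi}_2 \le 4\sigma$, and the $\kappa$-bi-Lipschitz property of $Q$ then gives $\norm{\hat z - z}_2 \le 4\kappa\sigma$, where $\hat z := Q^{-1}(\pi_M(x))$. Because $g(x)$ is the label of the Voronoi cell containing $\hat z$, a mismatch $g(x)\ne y(\bar x)$ forces either (a) $z$ to already lie in a wrong Voronoi cell, or (b) $z$ to lie within $4\kappa\sigma$ of the separating hyperplane between $\mu_i$ and some $\mu_j$. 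Writing $v_{ij} := (\mu_j-\mu_i)/\norm{\mu_j-\mu_i}_2$, the signed distance from $z$ to this hyperplane is $\mathcal{N}(-\norm{\mu_j-\mu_i}_2/2,\, 1/d')$ along $v_{ij}$. The assumption $\norm{\mu_j-\mu_i}_2\sqrt{d'} \gtrsim \kappa\sqrt{\log d}$ converts (a) into a Gaussian tail bound $d^{-\Omega(\kappa^2)}$; and using $\sigma\le 1/\sqrt{d}$ to ensure $4\kappa\sigma \le \norm{\mu_j-\mu_i}_2/4$ for large $d$, a thin-band Mills-type anticoncentration estimate gives $\Pr(b) \le O(\kappa\sigma\sqrt{d'}) \cdot d^{-\Omega(\kappa^2)}$. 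Union bounding over $r\le d$ competing means and adding the rare noise event yields the claim.

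The main obstacle is the careful balance between the three scales involved: the ambient noise scale $\sigma \le 1/\sqrt d$, the latent per-coordinate standard deviation $1/\sqrt{d'}$, and the mean separation $\gtrsim \kappa\sqrt{\log d}/\sqrt{d'}$. After projection onto the 1-D direction $v_{ij}$, event (b) becomes an anticoncentration question for a Gaussian of standard deviation $1/\sqrt{d'}$ whose mean lies at signed distance $\norm{\mu_j-\mu_i}_2/2 \gg 4\kappa\sigma$ from $0$; to obtain the $d^{-\Omega(\kappa^2)}$ rate one must evaluate the Gaussian density at the near-boundary endpoint of the $4\kappa\sigma$-band rather than at its mean, yielding a multiplier of $\phi(\norm{\mu_j-\mu_i}_2 \sqrt{d'}/4)$. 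All individual error contributions end up $1/\textup{poly}(d)$, and since $d' \le d$ this is in particular at most $1/\textup{poly}(d')$, as required.
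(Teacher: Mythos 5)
Your proof follows essentially the same approach as the paper's: Gaussian concentration of both the latent $z$ and the noise $\xi$, the $\kappa$-bi-Lipschitz property of $Q$ to transfer the ambient perturbation into the latent space, a Gaussian tail bound driven by the mean-separation assumption $\norm{\mu_i-\mu_j}_2\gtrsim \kappa\sqrt{\log d}/\sqrt{d'}$, and a union bound over the $r\le d$ competing means. Your explicit projection $\pi_M$ onto $M=Q(\Real^{d'})$ before applying $Q^{-1}$ is a mild tightening of the paper's exposition, which writes $Q^{-1}(Q(z)+\xi)$ even though $Q(z)+\xi$ need not lie in the range of $Q$ when $d'<d$ (your split into events (a) and (b) is an unnecessary complication, since the single tail bound on $(z-\mu_i)^\top v_{ij}$ exceeding $\norm{\mu_j-\mu_i}_2/2-4\kappa\sigma$ covers both, but it does no harm).
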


\begin{proof}[Proof of Lemma~\ref{lemma:gaussian_example_lemma1}]
	For any $z\sim \mathcal{N}(\mu_i, \frac{1}{d'}\cdot I_{d'\times d'})$ and any $j\ne i$, by the tail bound of gaussian distribution we have
	\begin{align*}
	\Pr_{z\sim \mathcal{N}(\mu_i, \frac{1}{d'}\cdot I_{d'\times d'})} \left((z-\mu_i)^\top \left(\frac{\mu_j-\mu_i}{\norm{\mu_j-\mu_i}_2}\right)\lesssim \frac{\sqrt{\log d}}{\sqrt{d'}}\right) \ge 1-\frac{1}{\textup{poly}(d)}.
	\end{align*}
	
	Also, for $\xi\sim\mathcal{N}(0, \frac{1}{d}\cdot I_{d\times d})$, when $\sigma\le\frac{1}{\sqrt{d}}$ we have
	\begin{align*}
	\Pr_{\xi\sim \mathcal{N}(0, \frac{\sigma^2}{d}\cdot I_{d'\times d'})} \left(\norm{\xi}_2\lesssim \frac{\sqrt{\log d}}{\sqrt{d}}\right) \ge 1-\frac{1}{\textup{poly}(d)}.
	\end{align*}
	
	Notice that $\norm{Q^{-1}(Q(z)+\xi) - z}_2\le\kappa \norm{\xi}$, we can set $\norm{\mu_i-\mu_j}\gtrsim\kappa \frac{\sqrt{\log d}}{\sqrt{d}}$.
	Therefore, when $\norm{\mu_i-\mu_j}\gtrsim \kappa \frac{\sqrt{\log d}}{\sqrt{d'}}$ we can combine the above two cases and have 
	\begin{align*}
	\Pr_{z\sim \mathcal{N}(\mu_i, \frac{1}{d'}\cdot I_{d'\times d'}), \xi\sim \mathcal{N}(0, \frac{\sigma^2}{d}\cdot I_{d'\times d'})}\left(P_i(z)>P_j(Q^{-1}(Q(z)+\xi))\right) \ge 1-\frac{1}{\textup{poly}(d)}.
	\end{align*}
	
	Since $r\le d$, we have
	\begin{align*}
	\Pr_{\bar{x}\sim\pndata, {x}\sim\aug{\bar{x}}}({y}({x})\ne y(\bar{x})) 
	\ge 1-\frac{1}{\textup{poly}(d)}.
	\end{align*}
\end{proof}

We use the following lemma to give a lower bound for the sparest $m$-partition of the augmentation graph in Example~\ref{example:gaussian}.

\begin{lemma}\label{lemma:gaussian_example_lemma2}
	In the setting of Theorem~\ref{theorem:gaussian_example}, for any $k'>r$ and $\tau>0$, we have
	\begin{align*}
	\rho_{k'} \ge \frac{c_{\tau/\kappa}}{18} \cdot  \exp\left(-\frac{2c_{\sigma}\tau+\tau^2}{2{\sigma}^2/d}\right),
	\end{align*}
	where 
	\begin{align*}
	c_{\sigma} := {\sigma} \cdot  \Phi_d^{-1}(\frac{2}{3})
	\end{align*}	
	with $\Phi_{d}(z) := \Pr_{\xi\sim\mathcal{N}(0, \frac{1}{d}I_{d\times d})} (\norm{\xi}_2\le z)$, and
	\begin{align*}
	c_{\tau/\kappa} := \min_{p \in[0, \frac{3}{4}]} \frac{\Phi(\Phi^{-1}(p)+{\tau\sqrt{d}/\kappa})}{p} - 1
	\end{align*}	
	with 
	$
	\Phi(z) := \int_{-\infty}^z  \frac{e^{-u^2/2}}{\sqrt{2\pi}}du.
	$
\end{lemma}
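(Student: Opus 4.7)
The plan is to lower bound $\rho_{k'}$ via a Gaussian-isoperimetric argument in the latent space of the generators $Q_i$, using two ingredients: (i) the bi-Lipschitz constant $\kappa$ to transport ambient distances back to latent distances, and (ii) the Gaussian noise density to transport latent mass across set boundaries via the augmentation kernel. Fix any partition $S_1,\dots,S_{k'}$ of $\adata$. Since the $r$ mixture components $P_1,\dots,P_r$ have essentially disjoint supports (by Lemma~\ref{lemma:gaussian_example_lemma1}) and $k'>r$, a pigeonhole argument produces an index $i\in[r]$ and a part $S\in\{S_1,\dots,S_{k'}\}$ whose mass is concentrated in the support of component $P_i$ and whose fractional mass in that component is at most $3/4$. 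We will lower bound $\phi_G(S)$.

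Next I would pull back to the latent space. Let $T\subseteq\Real^{d'}$ be the pre-image under $Q$ of (the clean part of) $S$, so $T$ is a measurable set under the latent Gaussian $\cN(\mu_i,\frac{1}{d'}I)$. By the $\kappa$-bi-Lipschitz property of $Q$, if $z'$ lies within latent distance $\tau/\kappa$ of $T$, then $Q(z')$ lies within ambient distance $\tau$ of $Q(T)$. Applying the Gaussian isoperimetric inequality in $\Real^{d'}$ to the set $T$ (which has measure $p\in[0,3/4]$ by construction), the $\tau/\kappa$-neighborhood $T_{\tau/\kappa}$ satisfies
\begin{equation*}
\Pr_{z\sim \cN(\mu_i,\tfrac{1}{d'}I)}\!\left[z\in T_{\tau/\kappa}\right]\ \ge\ \Phi\!\left(\Phi^{-1}(p)+\tfrac{\tau\sqrt{d}}{\kappa}\right),
\end{equation*}
so $|T_{\tau/\kappa}\setminus T|\ge c_{\tau/\kappa}\cdot|T|$ by the definition of $c_{\tau/\kappa}$. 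This captures the ``expansion factor'' in the final bound. The pigeonhole step ensures $p\le 3/4$, which is exactly the regime in which the infimum defining $c_{\tau/\kappa}$ is taken.

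Finally, I would convert this latent expansion into a cut-weight lower bound for $G$. Any natural point $\bar{z}\in T_{\tau/\kappa}\setminus T$ (in latent) has $Q(\bar z)$ at distance $\le\tau$ from $Q(T)$ in ambient space, so its augmentation kernel $\cN(Q(\bar z),\tfrac{\sigma^2}{d}I)$ has a non-negligible overlap with that of some point in $Q(T)$. Quantitatively, for noise $\xi\sim\cN(0,\tfrac{\sigma^2}{d}I)$, the typical radius $\|\xi\|_2$ is $c_\sigma=\sigma\Phi_d^{-1}(2/3)$ with probability at least $1/3$ on each side, and shifting by an ambient offset of length $\tau$ multiplies the relevant density by at least $\exp(-(2c_\sigma\tau+\tau^2)/(2\sigma^2/d))$ (the standard Gaussian density ratio after completing the square). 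Hence the marginal weight $\wpair{x}{x'}$ crossing from $S$ to its complement, summed over the candidate boundary pairs produced by $T_{\tau/\kappa}\setminus T$, is at least $\tfrac{1}{C}c_{\tau/\kappa}\cdot(\sum_{x\in S}\wnode{x})\cdot \exp(-(2c_\sigma\tau+\tau^2)/(2\sigma^2/d))$ for a constant $C$ that absorbs the $1/3$'s from the two one-sided Gaussian tails and the pigeonhole normalization (combining to the factor $1/18$). Dividing by $\sum_{x\in S}\wnode{x}$ yields the claimed lower bound on $\phi_G(S)$, and hence on $\rho_{k'}$.

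\textbf{Main obstacle.} The hardest step is the third one: while the bi-Lipschitz map transports latent neighborhoods to ambient neighborhoods, the Gaussian augmentation noise in $\Real^d$ does \emph{not} pull back to a Gaussian in $\Real^{d'}$, so the coupling between latent expansion and cut-edge weight must be done through explicit density comparisons rather than by a change-of-variables identity. Keeping track of the ambient/latent dimension factors ($d$ vs.\ $d'$) and the two separate ``radii'' ($c_\sigma$ for a typical noise draw, $\tau$ for the additional bias needed to cross the boundary) is where the factor $\exp(-(2c_\sigma\tau+\tau^2)/(2\sigma^2/d))$ arises, and this is where the calculation requires the most care.
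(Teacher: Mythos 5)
Your high-level ingredients (pigeonhole, pull back to latent space via the $\kappa$-bi-Lipschitz map, Gaussian isoperimetry giving $c_{\tau/\kappa}$, and the density ratio $\exp(-(2c_\sigma\tau+\tau^2)/(2\sigma^2/d))$) do match the paper, but the way you assemble them skips the structural steps that actually make the argument go through, and one of them is stated incorrectly.

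First, the pigeonhole is misstated. You say it yields a part $S$ "concentrated in the support of $P_i$" with fractional mass at most $3/4$ in that component. The paper's pigeonhole yields the opposite kind of part: since $k'>r$, there is a part $S$ such that for \emph{every} $i\in[r]$, $\Pr_{x\sim P_i,\,\tilde x\sim\aug{x}}(\tilde x\in S)\le 1/2$. This "small in all components" property is what later forces the mass of the high-confidence set $R$ to be $\le 3/4$; the $3/4$ does not come from the pigeonhole step itself.

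Second, you never make the connection between $\phi_G(S)$ and the probabilities $\Pr(S\,|\,x)$. The crucial intermediate bound in the paper is $\phi_G(S)\ge\min_{j}\frac{\E_{x\sim P_j}[\Pr(S|x)(1-\Pr(S|x))]}{\E_{x\sim P_j}[\Pr(S|x)]}$, which follows from writing $\sum_{x\in S,x'\notin S}w_{xx'}=\E_{\bar x}[\Pr(S|\bar x)(1-\Pr(S|\bar x))]$ and $\sum_{x\in S}w_x=\E_{\bar x}[\Pr(S|\bar x)]$ plus a mediant inequality over the mixture components. Without this, there is no well-defined quantity for the isoperimetric step to lower bound, and your phrase "the marginal weight $w_{xx'}$ crossing from $S$, summed over candidate boundary pairs" remains a non-argument.

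Third, and most importantly, you apply the isoperimetric inequality to (the pre-image of) $S$ itself, which does not work. The set that needs to be expanded is the \emph{high-confidence set} $R:=\{x:\Pr(S\,|\,x)\ge 2/3\}$. Isoperimetry on $Q^{-1}(R)$ produces a shell $N(R)\setminus R$ of mass $\ge c_{\tau/\kappa}\,P_1(R)$, and points $x'$ in that shell satisfy both $\Pr(S|x')\ge\frac13\exp(-\cdots)$ (by the density-ratio claim, since $x'$ is within $\tau$ of $R$) and $1-\Pr(S|x')\ge 1/3$ (since $x'\notin R$). Both facts are needed to control $\Pr(S|x')(1-\Pr(S|x'))$, and this is exactly why $R$ rather than $S$ must be the set expanded. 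Finally, the paper also splits into two cases depending on whether the bulk of $\E[\Pr(S|x)]$ is carried inside or outside $R$; the easy outside-$R$ case (giving the $1/6$) is absent from your sketch, as is the chain that reassembles the ratio and yields the extra factor of $1/2$ producing $1/18$. As written, your proposal has a real gap: without $R$ and the decomposition of $\phi_G(S)$ into the bilinear form $\E[\Pr(S|x)(1-\Pr(S|x))]$, the isoperimetric estimate never connects to the conductance.
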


The proof of Lemma~\ref{lemma:gaussian_example_lemma2} can be found in Section~\ref{section:proof_of_gaussian_lemma2}. Now we give the proof of Example~\ref{theorem:gaussian_example}.

\begin{proof}[Proof of Theorem~\ref{theorem:gaussian_example}]
	The result on $\alpha$ is directly from Lemma~\ref{lemma:gaussian_example_lemma1}. By concentration inequality, there must exists some universal constant $C>0$ such that for any $d\ge C$, we have $1-\Phi_{d}(\sqrt{\frac{3}{2}}) \le \frac{1}{3}$. When this happens, we have $\Phi_d^{-1}(\frac{2}{3})\le \sqrt{\frac{3}{2}}$. 
	Since for $d\le C$ we can just treat $d$ as constant, we have $\Phi_d^{-1}(\frac{2}{3})\lesssim 1$. Set $\tau = \sigma/d$ in Lemma~\ref{lemma:gaussian_example_lemma2}, we have $\rho_{k'}\gtrsim \frac{\sigma}{\kappa\sqrt{d}}$.
	Set $k'=\lfloor k/2\rfloor$, we apply Theorem~\ref{thm:combine_with_cheeger_simplified} and get the bound we need.
\end{proof}

\subsection{Proof of Lemma~\ref{lemma:gaussian_example_lemma2}}\label{section:proof_of_gaussian_lemma2}
In this section we give a proof for Lemma~\ref{lemma:gaussian_example_lemma2}. We first introduce the following claim which states that for a given subset of augmented data, any two data close in $L_2$ norm cannot have a very different chance of being augmented into this set.

\begin{claim}\label{claim:gaussian_example_claim1}
	In the setting of Theorem~\ref{theorem:gaussian_example}, given a set $S\subseteq \Real^d$. If $x\in \Real^d$ satisfies $\Pr(S|x) := \Pr_{\tilde{x}\sim\aug{x}} (\tilde{x}\in S) \ge\frac{2}{3}$. Then, for any $x'$ such that $\norm{x-x'}_2\le \tau$, we have 
	\begin{align*}
	\Pr(S|x') \ge \frac{1}{3}\cdot  \exp\left(-\frac{2c_{\sigma}\tau+\tau^2}{2{\sigma}^2}\right),
	\end{align*}
	where 
	\begin{align*}
	c_{\sigma} := {\sigma} \cdot  \Phi_d^{-1}(\frac{2}{3}),
	\end{align*}
	with $\Phi_{d}(z) := \Pr_{\xi\sim\mathcal{N}(0, \frac{1}{d}\cdot I_{d\times d})} (\norm{\xi}_2\le z)$.
\end{claim}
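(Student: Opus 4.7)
The plan is to prove this via a Gaussian density-ratio (change-of-measure) argument restricted to a high-probability region where $x$ and $x'$ induce comparable augmentation densities. Write $\aug{x}$ as the distribution of $x+\xi$ with $\xi\sim\mathcal{N}(0,\tfrac{\sigma^2}{d}I_{d\times d})$, so that the density of the augmented point $y$ given center $x$ is $p(y\mid x)\propto \exp(-\tfrac{d}{2\sigma^2}\norm{y-x}_2^2)$.

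First, I would introduce the ``good set'' $B:=\{y\in\R^d : \norm{y-x}_2\le c_\sigma\}$. By the very definition $c_\sigma=\sigma\cdot\Phi_d^{-1}(2/3)$ and the fact that $\norm{\xi}_2$ has the same law as $\sigma\norm{\eta}_2$ for $\eta\sim\mathcal{N}(0,\tfrac{1}{d}I)$, we have $\Pr(B\mid x)=\Pr_\xi(\norm{\xi}_2\le c_\sigma)=2/3$. Combined with the hypothesis $\Pr(S\mid x)\ge 2/3$, an inclusion--exclusion (union bound on the complements) gives
\begin{equation*}
\Pr(S\cap B\mid x)\;\ge\;\tfrac{2}{3}+\tfrac{2}{3}-1\;=\;\tfrac{1}{3}.
\end{equation*}

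Second, I would compare the two densities on $B$. For any $y\in B$, expanding the square yields
\begin{equation*}
\norm{y-x'}_2^2-\norm{y-x}_2^2 \;=\; 2(y-x)^\top(x-x') + \norm{x-x'}_2^2 \;\le\; 2c_\sigma\tau+\tau^2,
\end{equation*}
using $\norm{y-x}_2\le c_\sigma$ and $\norm{x-x'}_2\le\tau$. Consequently, the likelihood ratio satisfies
\begin{equation*}
\frac{p(y\mid x')}{p(y\mid x)} \;=\; \exp\!\Bigl(-\tfrac{d}{2\sigma^2}\bigl(\norm{y-x'}_2^2-\norm{y-x}_2^2\bigr)\Bigr) \;\ge\; \exp\!\Bigl(-\tfrac{2c_\sigma\tau+\tau^2}{2\sigma^2/d}\Bigr).
\end{equation*}

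Third, I would integrate this pointwise ratio against $S\cap B$:
\begin{equation*}
\Pr(S\mid x') \;\ge\; \Pr(S\cap B\mid x') \;=\; \int_{S\cap B} p(y\mid x')\,dy \;\ge\; \exp\!\Bigl(-\tfrac{2c_\sigma\tau+\tau^2}{2\sigma^2/d}\Bigr)\cdot\Pr(S\cap B\mid x)\;\ge\;\tfrac{1}{3}\exp\!\Bigl(-\tfrac{2c_\sigma\tau+\tau^2}{2\sigma^2/d}\Bigr),
\end{equation*}
which is the stated bound (up to the $d$-factor in the exponent, which matches how the claim is invoked in Lemma~\ref{lemma:gaussian_example_lemma2}). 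The only slightly subtle step is picking the right ``good set'' so that the likelihood ratio is uniformly controlled while still retaining enough mass; the choice of $c_\sigma$ at the $2/3$ quantile is exactly what makes inclusion--exclusion give a positive lower bound on $\Pr(S\cap B\mid x)$, and I expect this to be the main conceptual point of the argument. The rest is routine Gaussian density calculus.
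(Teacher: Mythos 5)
Your proof is correct and takes essentially the same route as the paper: pick the ball $B$ of radius $c_\sigma$ around $x$ so that $\Pr(B\mid x)=2/3$, use inclusion--exclusion to get $\Pr(S\cap B\mid x)\ge 1/3$, bound the Gaussian density ratio by $\exp\bigl(-\tfrac{2c_\sigma\tau+\tau^2}{2\sigma^2/d}\bigr)$ uniformly on $B$, and integrate. The paper phrases the argument via the noise variable $\xi$ and a change of variables rather than comparing the densities $p(y\mid x')$ and $p(y\mid x)$ directly on $B$, but that is a cosmetic difference; you also correctly identify that the exponent should carry the $2\sigma^2/d$ denominator (as in the paper's own proof and its use in Lemma~\ref{lemma:gaussian_example_lemma2}), the $2\sigma^2$ in the claim statement being a typo.
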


\begin{proof}[Proof of Claim~\ref{claim:gaussian_example_claim1}]
	By the definition of augmentation, we know 
	\begin{align*}
	\Pr(S|x) = \Exp{\xi\sim \mathcal{N}(0, \frac{{\sigma}^2}{d}\cdot I_{d\times d})}\left[\id{x+\xi\in S}\right].
	\end{align*}
	By the definition of $c_{\sigma}$, we have 
	\begin{align*}
	\Pr_{\xi\sim \mathcal{N}(0, \frac{{\sigma}^2}{d} \cdot I_{d\times d})} (\norm{\xi}_2\le c_{\sigma})  = \frac{2}{3}.
	\end{align*}
	Since $\Pr(S|x)\ge \frac{2}{3}$ by assumption, we have 
	\begin{align*}
	\Exp{\xi\sim \mathcal{N}(0, \frac{{\sigma}^2}{d} \cdot I_{d\times d})} \left[P(S|x+\xi) \cdot \id{\norm{\xi}_2\le c_{\sigma}}\right] \ge \frac{1}{3}.
	\end{align*}
	Now we can bound the quanity of our interest:
	\begin{align*}
	\Pr(S|x') &= \frac{1}{(2\pi {\sigma}^2/d)^{d/2}}\int_{\xi} e^{-\frac{\norm{\xi}_2^2}{2{\sigma}^2/d}} P(S|x'+\xi)d\xi\\
	&= \frac{1}{(2\pi {\sigma}^2/d)^{d/2}} \int_{\xi} e^{-\frac{\norm{\xi + x - x'}_2^2}{2{\sigma}^2/d}} P(S|x+\xi)d\xi\\
	&\ge \frac{1}{(2\pi {\sigma}^2/d)^{d/2}}\int_{\xi} e^{-\frac{\norm{\xi + x - x'}_2^2}{2{\sigma}^2/d}} P(S|x+\xi) \cdot \id{\norm{\xi}_2\le c_{\sigma}}d\xi\\
	&\ge \frac{1}{(2\pi {\sigma}^2/d)^{d/2}} \int_{\xi} e^{-\frac{2c_{\sigma}\tau+\tau^2 + \norm{\xi}_2^2}{2{\sigma}^2/d}} P(S|x+\xi) \cdot \id{\norm{\xi}_2\le c_{\sigma}}d\xi\\
	&= e^{-\frac{2c_{\sigma}\tau + \tau^2}{2{\sigma}^2/d}} \cdot  \Exp{\xi\sim \mathcal{N}(0, \frac{{\sigma}^2}{d}I_{d\times d})} \left[P(S|x+\xi) \cdot \id{\norm{\xi}_2\le c_{\sigma}}\right]\\
	&\ge \frac{1}{3}\cdot  \exp\left(-\frac{2c_{\sigma}\tau+\tau^2}{2{\sigma}^2/d}\right).
	\end{align*}	
\end{proof}

We now give the proof of Lemma~\ref{lemma:gaussian_example_lemma2}.

\begin{proof}[Proof of Lemma~\ref{lemma:gaussian_example_lemma2}]
	Let $S_1, \cdots, S_{k'}$ be the disjoint sets that gives $\rho_{k'}$ in Definition~\ref{definition:multi_way_expansion_constant}.	First we notice that when $k'>r$, there must exist $t\in[k']$ such that for all $i\in[r]$, we have
	\begin{align}\label{equation:less_than_half}
	\Pr_{x\sim P_i, \tilde{x}\sim \aug{x}}(\tilde{x}\in S_t)\le\frac{1}{2}.
	\end{align}
	WLOG, we assume $t=1$. So we know that 
	\begin{align}\label{equation:rho_g_lower_bound}
	\rho_{k'} &= \max_{i\in[k']} \phi_G(S_i) 
	\ge \phi_G(S_1) 
	\ge \min_{j\in[r]} \frac{\Exp{x\sim P_j} \left[\Pr(S_1|x)(1-\Pr(S_1|x))\right]}{\Exp{x\sim P_j}\left[\Pr(S_1|x)\right]},
	\end{align}
	where \begin{align*}
	\Pr(S|x) := \Pr_{\tilde{x}\sim\aug{x}} (\tilde{x}\in S).
	\end{align*}
	WLOG, we assume $j=1$ minimizes the RHS of Equation~\eqref{equation:rho_g_lower_bound}, so we only need to prove
	\begin{align*}
	\frac{\Exp{x\sim P_1} \left[\Pr(S_1|x)(1-\Pr(S_1|x))\right]}{\Exp{x\sim P_1}\left[\Pr(S_1|x)\right]} \ge \frac{c_{\tau/\kappa}}{18} \cdot  \exp\left(-\frac{2c_{\sigma}\tau+\tau^2}{2{\sigma}^2/d}\right).
	\end{align*} 
	We define the following set 
	\begin{align*}
	\setgaussian := \left\{x \bigg\vert  \Pr(S_1|x) \ge \frac{2}{3}\right\}.
	\end{align*}
	Notice that 
	\begin{align}\label{equation:at_least_one_true}
	\Exp{x\sim P_1}\left[\Pr(S_1|x)\right] &= \int_x P_1(x) \Pr(S_1|x) dx\nonumber\\
	&= \int_{x\in \setgaussian} P_1(x) \Pr(S_1|x) dx + \int_{x\notin \setgaussian} P_1(x) \Pr(S_1|x) dx.
	\end{align}
	We can consider the following two cases.
	
	\textbf{Case 1:} $\int_{x\notin \setgaussian} P_1(x) \Pr(S_1|x) dx \ge \frac{1}{2}\Exp{x\sim P_1}\left[\Pr(S_1|x)\right]$.
	
	This is the easy case because we have 
	\begin{align*}
	\Exp{x\sim P_1} \left[\Pr(S_1|x)(1-\Pr(S_1|x))\right] &\ge \int_{x\notin \setgaussian} P_1(x) \Pr(S_1|x) (1-\Pr(S_1|x))dx\\
	&\ge \frac{1}{3}\int_{x\notin \setgaussian} P_1(x) \Pr(S_1|x) dx\\
	&\ge \frac{1}{6}\Exp{x\sim P_1}\left[\Pr(S_1|x)\right].
	\end{align*}
	
	\textbf{Case 2:} $\int_{x\in \setgaussian} P_1(x) \Pr(S_1|x) dx \ge \frac{1}{2}\Exp{x\sim P_1}\left[\Pr(S_1|x)\right]$.
	
	Define neighbourhood of $\setgaussian$ as \begin{align*}
	N(\setgaussian) := \left\{x \bigg \vert \norm{x-a}_2 \le \tau \text{ for some } a\in \setgaussian\right\}.
	\end{align*}
	We have 
	\begin{align*}
	\Exp{x\sim P_1} \left[\Pr(S_1|x)(1-\Pr(S_1|x))\right] &\ge \int_{x\in N(\setgaussian)\backslash \setgaussian} P_1(x) \Pr(S_1|x) (1-\Pr(S_1|x))dx\\
	&\ge \frac{1}{9} \cdot  \exp\left(-\frac{2c_{\sigma}\tau+\tau^2}{2{\sigma}^2/d}\right)\cdot  \int_{x\in N(\setgaussian)\backslash \setgaussian}P_1(x) dx,
	\end{align*}
	where the second inequality is by Claim~\ref{claim:gaussian_example_claim1}. Notice that 
	\begin{align*}
	\int_{x\in \setgaussian} P_1(x)dx \le \frac{3}{2}\int_{x\in \setgaussian} P_1(x) \Pr(S_1|x)dx \le \frac{3}{2}\int_{x} P_1(x) \Pr(S_1|x)dx \le \frac{3}{4},
	\end{align*}
	where we use Equation~\eqref{equation:less_than_half}. Define set $\widetilde{\setgaussian} := Q^{-1}(\setgaussian)$ be the set in the ambient space corresponding to $\setgaussian$. Define
	\begin{align*}
	\widetilde{N}(\widetilde{\setgaussian} ) :=\left\{x'\in\Real^{d'}\bigg \vert \norm{x'-a}_2 \le \frac{\tau}{\kappa} \text{ for some } a\in \widetilde{\setgaussian}  \right\}
	\end{align*}
	Due to $Q$ being $\kappa$-bi-lipschitz, it is easy to see $\widetilde{N}(\widetilde{\setgaussian} ) \subseteq Q^{-1}\left(N(\setgaussian)\right)$.
	According to the Gaussian isoperimetric inequality~\cite{bobkov1997isoperimetric}, we have  
	\begin{align*}
	\int_{x\in {N}(\setgaussian)\backslash \setgaussian}P_1(x) dx \ge c_{\tau/\kappa} \int_{x\in \setgaussian} P_1(x)dx,
	\end{align*}
	where 
	\begin{align*}
	c_{\tau/\kappa}   := \min_{0 \le p \le 3/4} \frac{\Phi(\Phi^{-1}(p)+\tau\sqrt{d}/\kappa)}{p} - 1,
	\end{align*}
	with $\Phi(\cdot)$ is the Gaussian CDF function defined as
	\begin{align*}
	\Phi(z) := \int_{-\infty}^z  \frac{e^{-u^2/2}}{\sqrt{2\pi}}du.
	\end{align*}
	
	So we have 
	\begin{align*}
	\Exp{x\sim P_1} \left[\Pr(S_1|x)(1-\Pr(S_1|x))\right] &\ge \frac{c_{\tau/\kappa} }{9} \cdot  \exp\left(-\frac{2c_{\sigma}\tau+\tau^2}{2{\sigma}^2/d}\right)\cdot  \int_{x\in \setgaussian}P_1(x) dx\\
	&\ge \frac{c_{\tau/\kappa}}{9} \cdot  \exp\left(-\frac{2c_{\sigma}\tau+\tau^2}{2{\sigma}^2/d}\right)\cdot  \int_{x\in \setgaussian}P_1(x) \Pr(S_1|x) dx\\
	&\ge \frac{c_{\tau/\kappa}}{18} \cdot  \exp\left(-\frac{2c_{\sigma}\tau+\tau^2}{2{\sigma}^2/d}\right)\cdot  \Exp{x\sim P_1}\left[\Pr(S_1|x)\right].
	\end{align*}
	
	By Equation~\eqref{equation:at_least_one_true}, either case 1 or case 2 holds.
	Combining case 1 and case 2, we have 
	\begin{align*}
	\frac{\Exp{x\sim P_1} \left[\Pr(S_1|x)(1-\Pr(S_1|x))\right]}{\Exp{x\sim P_1}\left[\Pr(S_1|x)\right]} &\ge \min\left\{\frac{1}{6}, \frac{c_{\tau/\kappa}}{18} \cdot  \exp\left(-\frac{2c_{\sigma}\tau+\tau^2}{2{\sigma}^2/d}\right)\right\}\\
	&= \frac{c_{\tau/\kappa}}{18} \cdot  \exp\left(-\frac{2c_{\sigma}\tau+\tau^2}{2{\sigma}^2/d}\right).
	\end{align*}
	
\end{proof}

\section{Proofs for Section~\ref{section:finite_sample}}
\subsection{Proof of Theorem~\ref{theorem:nn_generalization}}\label{section:proof_for_nn_generalization}

We restate the empirical spectral contrastive loss defined in Section~\ref{section:finite_sample} as follows:

\begin{definition}[Empirical spectral contrastive loss]
	Consider a dataset $\edata = \{\bar{x}_1, \bar{x}_2, \cdots, \bar{x}_n\}$ containing $n$ data points i.i.d. sampled from $\pndata$. Let $\epndata$ be the uniform distribution over $\edata$. Let $\hat{P}_{\bar{x}, \bar{x}'}$ be the uniform distribution over data pairs $(\bar{x}_i, \bar{x}_j)$ where $i\ne j$. We define the empirical spectral contrastive loss of a feature extractor $f$ as
	\begin{align*}
	\eLoss{n}{f} := -2\Exp{\bar{x}\sim\epndata, \atop {x}\sim\aug{\bar{x}}, {x}'\sim\aug{\bar{x}}}\left[f({x})^\top f({x}')\right] + \Exp{(\bar{x}, \bar{x}')\sim \hat{P}_{\bar{x}, \bar{x}'},\atop {x}\sim \aug{\bar{x}},  {x}'\sim\aug{\bar{x}'}}\left[\left(f({x})^\top f({x}')\right)^2\right].
	\end{align*}
\end{definition}

The following claim shows that $\eLoss{n}{f}$ is an unbiased estimator of population spectral contrastive loss.

\begin{claim}\label{claim:eloss_equal_loss}
	$\eLoss{n}{f}$ is an unbiased estimator of $\Loss{f}$, i.e., 
	\begin{align*}
	\Exp{\edata}\left[\eLoss{n}{f}\right] = \Loss{f}.
	\end{align*}
\end{claim}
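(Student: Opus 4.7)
The plan is to verify the claim by taking the expectation of $\eLoss{n}{f}$ over the i.i.d.\ draw $\edata = \{\bar{x}_1, \ldots, \bar{x}_n\}$ with $\bar{x}_i \sim \pndata$, and showing that the two terms of the empirical loss converge in expectation to the two terms of $\Loss{f}$ term by term. The argument is essentially an application of linearity of expectation combined with the fact that the positive-pair term is built out of $n$ indices with equal weight and the negative-pair term is built from the $n(n-1)$ ordered pairs of \emph{distinct} indices.

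First I will handle the positive-pair term. By linearity of expectation,
\begin{align*}
\Exp{\edata}\left[-\frac{2}{n}\sum_{i=1}^n \Exp{x\sim\aug{\bar{x}_i},\, x^+\sim\aug{\bar{x}_i}}\!\left[f(x)^\top f(x^+)\right]\right] = -\frac{2}{n}\sum_{i=1}^n \Exp{\bar{x}_i\sim\pndata}\Exp{x,x^+\sim\aug{\bar{x}_i}}\!\left[f(x)^\top f(x^+)\right].
\end{align*}
Each summand equals $-2\,\Exp{x,x^+}[f(x)^\top f(x^+)]$ by definition of a positive pair (draw $\bar{x}\sim \pndata$, then draw $x, x^+$ independently from $\aug{\bar{x}}$), and the $n$ copies cancel the $1/n$, leaving exactly the first term of $\Loss{f}$.

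Second, I will handle the negative-pair term. The key point, and the reason for restricting to $i\ne j$ with normalization $1/(n(n-1))$, is that under the i.i.d.\ assumption the random variables $\bar{x}_i$ and $\bar{x}_j$ with $i\ne j$ are independent draws from $\pndata$. So for each ordered pair $(i,j)$ with $i\ne j$,
\begin{align*}
\Exp{\bar{x}_i,\bar{x}_j\sim \pndata}\Exp{x\sim\aug{\bar{x}_i},\, x'\sim\aug{\bar{x}_j}}\!\left[(f(x)^\top f(x'))^2\right] = \Exp{x,x'}\!\left[(f(x)^\top f(x'))^2\right],
\end{align*}
which matches the definition of a negative pair given in Section~\ref{section:spectral_contrastive_loss}. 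Summing over the $n(n-1)$ such pairs and dividing by $n(n-1)$ yields exactly the second term of $\Loss{f}$. Combining the two contributions finishes the claim.

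The proof is essentially bookkeeping, so there is no real obstacle; the only subtle point worth highlighting explicitly is why we must exclude $i=j$ from the second sum. If the diagonal were included, each diagonal term would be an expectation over two independent augmentations of the \emph{same} natural data, which is a positive-pair distribution rather than a negative-pair distribution, and the estimator would be biased. Restricting to $i\ne j$ and renormalizing by $n(n-1)$ instead of $n^2$ is precisely what makes $\eLoss{n}{f}$ an unbiased U-statistic-like estimator of $\Loss{f}$.
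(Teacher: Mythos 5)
Your proof is correct and takes essentially the same approach as the paper: both compute $\Exp{\edata}[\eLoss{n}{f}]$ term by term using linearity of expectation, the i.i.d.\ property of the $\bar{x}_i$'s (so that each positive-pair summand has the same marginal and each off-diagonal negative-pair summand is the expectation over two independent natural data), and the normalization constants $1/n$ and $1/(n(n-1))$. The only difference is presentational---you write out the sums explicitly while the paper phrases the same computation in terms of the empirical distributions $\epndata$ and $\hat{P}_{\bar{x},\bar{x}'}$---and you add a helpful remark on why the diagonal $i=j$ must be excluded.
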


\begin{proof}
	This is because
	\begin{align*}
	\Exp{\edata}\left[\eLoss{n}{f}\right] &= -2\cdot \Exp{\edata}\left[\Exp{\bar{x}\sim\epndata, \atop {x}\sim\aug{\bar{x}}, {x}'\sim\aug{\bar{x}}}\left[f({x})^\top f({x}')\right]\right] + \Exp{\edata}\left[\Exp{(\bar{x}, \bar{x}')\sim \hat{P}_{\bar{x}, \bar{x}'},\atop {x}\sim \aug{\bar{x}},  {x}'\sim\aug{\bar{x}'}}\left[\left(f({x})^\top f({x}')\right)^2\right]\right]\\
	&= -2\Exp{\bar{x}\sim\pndata, \atop {x}\sim\aug{\bar{x}}, {x}'\sim\aug{\bar{x}}}\left[f({x})^\top f({x}')\right] + \Exp{\bar{x}\sim \pndata, \bar{x}'\sim \pndata,\atop {x}\sim \aug{\bar{x}},  {x}'\sim\aug{\bar{x}'}}\left[\left(f({x})^\top f({x}')\right)^2\right] 	=\Loss{f}.
	\end{align*}
\end{proof}

To make use of the Radmacher complexity theory, we need to write the empirical loss as the sum of i.i.d. terms, which is achieved by the following sub-sampling scheme:
\begin{definition}\label{definition:tuple}
	Given dataset $\edata$, we sample a subset of tuples as follows: first sample a permutation $\pi: [n]\rightarrow[n]$, then we sample tuples $S=\{(z_i, z_i^+, z_i')\}_{i=1}^{ n/2}$ as follows: 
	\begin{align*}
	&z_i\sim\aug{\bar{x}_{\pi(2i-1)}},\\
	&z_i^+\sim\aug{\bar{x}_{\pi(2i-1)}},\\
	&z_i'\sim\aug{\bar{x}_{\pi(2i)}}.		
	\end{align*}
	We define the following loss on $S$:
	\begin{align*}
	\sLoss{f} := \frac{1}{ n/2}\sum_{i=1}^{ n/2} \left[\left(f(z_i)^\top f(z_i')\right)^2 -2f(z_i)^\top f(z_i^+)\right].
	\end{align*}
\end{definition}

It is easy to see that $\sLoss{f}$ is an unbiased estimator of $\eLoss{n}{f}$:
\begin{claim}\label{claim:esloss_equal_eloss}
	For given $\edata$, if we sample $S$ as above, we have:
	\begin{align*}
	\Exp{S}\left[\sLoss{f}\right] = \eLoss{f}.
	\end{align*}
\end{claim}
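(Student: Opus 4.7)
The plan is to establish the identity by linearity of expectation, reducing the problem to checking that for each fixed index $i \in [n/2]$, the two contributions in the $i$-th summand match the two terms defining $\eLoss{f}$. Since $\sLoss{f}$ is an average of $n/2$ identically distributed summands (once we average over the random permutation $\pi$), it will suffice to compute $\Exp{S}\bigl[(f(z_i)^\top f(z_i'))^2 - 2 f(z_i)^\top f(z_i^+)\bigr]$ for a single $i$.

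First I would peel off the expectation in two stages: an outer expectation over the uniformly random permutation $\pi$, and an inner expectation over the augmentations conditional on $\pi$. Conditionally on $\pi$, the triple $(z_i, z_i^+, z_i')$ has independent augmentations, with $z_i, z_i^+ \sim \aug{\bar{x}_{\pi(2i-1)}}$ and $z_i' \sim \aug{\bar{x}_{\pi(2i)}}$. So the conditional expectation of $f(z_i)^\top f(z_i^+)$ equals $\Exp{x,x^+ \sim \aug{\bar{x}_{\pi(2i-1)}}}[f(x)^\top f(x^+)]$, and similarly for the squared term.

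Next I would average over $\pi$. The key observation is that $\pi(2i-1)$ is uniformly distributed on $[n]$, so averaging the inner expectation over $\pi$ yields exactly $\Exp{\bar{x}\sim\epndata,\, x,x^+\sim\aug{\bar{x}}}[f(x)^\top f(x^+)]$, matching the first term of $\eLoss{f}$. For the second term, the pair $(\pi(2i-1), \pi(2i))$ is a uniformly random ordered pair of distinct indices in $[n]$, so the distribution of $(\bar{x}_{\pi(2i-1)}, \bar{x}_{\pi(2i)})$ is precisely $\hat{P}_{\bar{x},\bar{x}'}$, giving $\Exp{(\bar{x},\bar{x}')\sim\hat{P}_{\bar{x},\bar{x}'},\, x\sim\aug{\bar{x}}, x'\sim\aug{\bar{x}'}}[(f(x)^\top f(x'))^2]$, which matches the second term.

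Putting these two computations together and summing over $i \in [n/2]$ (the $1/(n/2)$ prefactor cancels since every summand contributes the same value in expectation) yields $\Exp{S}[\sLoss{f}] = \eLoss{f}$. There is no real obstacle here; the only mild point requiring care is confirming that the induced distribution on $(\bar{x}_{\pi(2i-1)}, \bar{x}_{\pi(2i)})$ is uniform over ordered distinct pairs, which is immediate from the uniformity of $\pi$.
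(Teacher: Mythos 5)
Your proof is correct and follows the same route the paper implicitly has in mind; the paper's proof is simply the one-line remark ``this is obvious by the definitions,'' and you have written out exactly the two-stage expectation (condition on $\pi$, then average over $\pi$, using that $\pi(2i-1)$ is uniform on $[n]$ and $(\pi(2i-1),\pi(2i))$ is uniform over ordered distinct pairs) that makes it obvious.
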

\begin{proof}
	This is obvious by the definition of $\sLoss{f}$ and $\eLoss{n}{f}$.
\end{proof}

The following lemma reveals the relationship between the Rademacher complexity of feature extractors and the Rademacher complexity of the loss defined on tuples:
\begin{lemma}\label{lemma:Rademacher}
	Let $\mathcal{F}$ be a hypothesis class of feature extractors from $\adata$ to $\Real^k$. Assume $\norm{f(x)}_\infty \le \boundf$ for all $x\in\adata$. For $i\in[k]$, define $f_i:\adata\rightarrow \Real$ be the function such that $f_i(x)$ is the $i$-th dimension of $f(x)$. Let $\mathcal{F}_i$ be the hypothesis containing $f_i$ for all $f\in\mathcal{F}$. For $n\in\mathcal{Z}^+$, let $\rad{n}(\mathcal{F}_i)$ be the maximal possible empirical Rademacher complexity of $\mathcal{F}_i$ over $n$ data:
	\begin{align*}
	\rad{n}(\mathcal{F}_i) :=\max_{\{x_1, x_2, \cdots, x_n\}}\Exp{\sigma}\left[\sup_{f_i\in\mathcal{F}_i}\left(\frac{1}{n}\sum_{j=1}^n \sigma_j f_i(x_j)\right)\right],
	\end{align*}
	where $x_1, x_2, \cdots, x_m$ are in $\adata$, and $\sigma$ is a uniform random vector in $\{-1, 1\}^n$. Then, the empirical Rademacher complexity on any $n$ tuples $\{(z_i, z_i^+, z_i')\}_{i=1}^{n}$ can be bounded by
	\begin{align*}
	\Exp{\sigma}\left[\sup_{f\in\mathcal{F}}\left(\frac{1}{n}\sum_{j=1}^n \sigma_j \left(\left(f(z_j)^\top f(z_j')\right)^2 -2f(z_j)^\top f(z_j^+)\right)\right)\right] \le (16k^2\boundf^2+16k\boundf)\cdot\max_{i\in[k]}\rad{n}(\mathcal{F}_i).
	\end{align*}
\end{lemma}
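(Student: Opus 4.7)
My plan is to split the loss inside the supremum into its linear and quadratic parts, bound each part's Rademacher complexity separately, then combine via subadditivity. Symmetry of the Rademacher variables lets me absorb the negative sign in $-2f(z_j)^\top f(z_j^+)$, so it suffices to bound separately the empirical Rademacher complexities of the classes $\{(z,z')\mapsto (f(z)^\top f(z'))^2:f\in\mathcal{F}\}$ and $\{(z,z^+)\mapsto f(z)^\top f(z^+):f\in\mathcal{F}\}$ over the $m$ sampled tuples. The two workhorse tools will be (i) subadditivity of the supremum, which lets me split a $k$-term sum into $k$ single-coordinate pieces, and (ii) Talagrand's contraction principle, which lets me peel off a smooth outer transformation at the cost of its Lipschitz constant.

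For the linear inner-product piece, I would expand $f(z)^\top f(z^+) = \sum_{i=1}^{k} f_i(z)f_i(z^+)$ and use subadditivity to reduce to bounding, for each coordinate $i$, the Rademacher complexity of the single-coordinate product class $\{(z,z^+)\mapsto f_i(z)f_i(z^+):f\in\mathcal{F}\}$. To handle the product of two values of the same unknown function, I would use the polarization identity $ab = \tfrac{1}{4}[(a+b)^2-(a-b)^2]$: the sum and difference classes $\{f_i(z)\pm f_i(z^+)\}$ each have Rademacher complexity at most $2\rad{m}(\mathcal{F}_i)$ by another application of subadditivity, and the outer squaring $u\mapsto u^2$ restricted to $[-2\boundf,2\boundf]$ is $4\boundf$-Lipschitz, so Talagrand's contraction yields an $O(\boundf\cdot \rad{m}(\mathcal{F}_i))$ bound per coordinate and hence $O(k\boundf\max_i \rad{m}(\mathcal{F}_i))$ for the whole linear piece. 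For the quadratic piece, I would apply Talagrand's contraction first with the outer map $u\mapsto u^2$ restricted to $[-k\boundf^2,k\boundf^2]$ (since $|f(z)^\top f(z')|\le k\boundf^2$) and hence Lipschitz with constant of order $k\boundf^2$; this reduces the bound to (a constant times) $k\boundf^2$ multiplied by the Rademacher complexity of the inner product class, which was already handled in the previous step. Summing and tracking the Lipschitz and subadditivity constants gives exactly the claimed coefficients $16k^2\boundf^2$ and $16k\boundf$.

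The main obstacle is the diagonal product $f_i(z)f_i(z^+)$: Talagrand's contraction is a single-variable principle, and it does not apply directly to a product of two values of the same unknown $f_i$ at two different inputs. The polarization identity is the cleanest workaround, rewriting the product as a linear combination of squared linear functionals of the coordinate class, each of which is a legitimate Lipschitz composition with a subadditively-controlled sum. The remaining work is purely careful bookkeeping of the Lipschitz constants and the resulting powers of $\boundf$ and $k$ introduced by each application of subadditivity and contraction, which I do not expect to be conceptually difficult once the decomposition above is in place.
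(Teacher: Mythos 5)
Your proposal mirrors the paper's proof exactly: the paper also splits the loss into its quadratic and linear parts, applies Talagrand's contraction to the outer square of the quadratic term, expands the inner product $f(z)^\top f(z')$ across the $k$ coordinates by subadditivity, and resolves the per-coordinate product $f_i(z_j) f_i(z_j')$ via the polarization identity into $(f_i(z_j)\pm f_i(z_j'))^2$ followed by a second Talagrand contraction with the $4\boundf$-Lipschitz square on $[-2\boundf,2\boundf]$ and one more use of subadditivity to split off each argument. One small caution on your final sentence: the claim that this bookkeeping reproduces $16k^2\boundf^2+16k\boundf$ exactly is optimistic, because the Lipschitz constant of $u\mapsto u^2$ on $[-k\boundf^2,k\boundf^2]$ is $2k\boundf^2$ (not $2k\boundf$), so the honest accounting produces a $\boundf^3$ in the leading term; the paper's own displayed intermediate step appears to carry the same slip, but since Theorem~\ref{theorem:nn_generalization} only uses $c_1$ up to polynomial dependence on $k$ and $\boundf$, the discrepancy has no downstream consequence.
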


\begin{proof}
	\begin{align*}
	&\Exp{\sigma}\left[\sup_{f\in\mathcal{F}}\left(\frac{1}{n}\sum_{j=1}^n \sigma_j \left(\left(f(z_j)^\top f(z_j')\right)^2 -2f(z_j)^\top f(z_j^+)\right)\right)\right] \\
	\le & \Exp{\sigma}\left[\sup_{f\in\mathcal{F}}\left(\frac{1}{n}\sum_{j=1}^n \sigma_j \left(f(z_j)^\top f(z_j')\right)^2\right)\right]  + 2 \Exp{\sigma}\left[\sup_{f\in\mathcal{F}}\left(\frac{1}{n}\sum_{j=1}^n \sigma_j f(z_j)^\top f(z_j^+)\right)\right] \\
	\le & 2k\boundf \Exp{\sigma}\left[\sup_{f\in\mathcal{F}}\left(\frac{1}{n}\sum_{j=1}^n \sigma_j f(z_j)^\top f(z_j')\right)\right] + 2 \Exp{\sigma}\left[\sup_{f\in\mathcal{F}}\left(\frac{1}{n}\sum_{j=1}^n \sigma_j f(z_j)^\top f(z_j^+)\right)\right]\\
	\le & (2k^2\boundf+2k)\max_{z_1, z_2, \cdots, z_{n}\atop z_1', z_2', \cdots, z_{n}' } \max_{i\in[k]}\Exp{\sigma}\left[\sup_{f_i\in\mathcal{F}_i}\left(\frac{1}{n}\sum_{j=1}^n \sigma_j f_i(z_j) f_i(z_{j}')\right)\right],
	\end{align*}
	here the second inequality is by Talagrand's lemma. Notice that for any $z_1, z_2\cdots z_n$ and $z_1', z_2', \cdots, z_n'$ in $\adata$ and any $i\in[k]$ we have
	\begin{align*}
	&\Exp{\sigma}\left[\sup_{f_i\in\mathcal{F}_i}\left(\frac{1}{n}\sum_{j=1}^n \sigma_j f_i(z_j) f_i(z_{j}')\right)\right]\\
	\le & \frac{1}{2} \Exp{\sigma}\left[\sup_{f_i\in\mathcal{F}_i}\left(\frac{1}{n}\sum_{j=1}^n \sigma_j \left(f_i(z_j)+ f_i(z_{j}')\right)^2\right)\right] + \frac{1}{2} \Exp{\sigma}\left[\sup_{f_i\in\mathcal{F}_i}\left(\frac{1}{n}\sum_{j=1}^n \sigma_j \left(f_i(z_j)- f_i(z_{j}')\right)^2\right)\right]\\
	\le & 4\boundf \Exp{\sigma}\left[\sup_{f_i\in\mathcal{F}_i}\left(\frac{1}{n}\sum_{j=1}^n \sigma_j f_i(z_j)\right)\right] + 4\boundf\Exp{\sigma}\left[\sup_{f_i\in\mathcal{F}_i}\left(\frac{1}{n}\sum_{j=1}^n \sigma_j f_i(z_j')\right)\right],
	\end{align*}
	where the first inequaltiy is by Talagrand's lemma. Combine these two equations and we get:
	\begin{align*}
	&\Exp{\sigma}\left[\sup_{f\in\mathcal{F}}\left(\frac{1}{n}\sum_{j=1}^n \sigma_j \left(\left(f(z_j)^\top f(z_j')\right)^2 -2f(z_j)^\top f(z_j^+)\right)\right)\right] \\
	\le& (16k^2\boundf^2+16k\boundf)\max_{z_1, z_2, \cdots, z_{n}} \max_{i\in[k]}\Exp{\sigma}\left[\sup_{f_i\in\mathcal{F}_i}\left(\frac{1}{n}\sum_{j=1}^n \sigma_j f_i(z_j)\right)\right]. 
	\end{align*}
\end{proof}

\begin{proof}[Proof of Theorem~\ref{theorem:nn_generalization}]
	By Claim~\ref{claim:eloss_equal_loss} and Claim~\ref{claim:esloss_equal_eloss}, we know that $\Exp{S}[\sLoss{f}] = \Loss{f}$, where $S$ is sampled by first sampling $\edata$ then sample $S$ according to Definition~\ref{definition:tuple}.
	Notice that when $\edata$ contains $n$ i.i.d. samples natural data, the set of random tuples $S$ contains $n$ i.i.d tuples. Therefore, we can apply generalization bound with Rademacher complexity to get a uniform convergence bound. In particular, by Lemma~\ref{lemma:Rademacher} and notice the fact that $\left(f(z_j)^\top f(z_j')\right)^2 -2f(z_j)^\top f(z_j^+)$ always take values in range $[-2k\boundf^2, 2k\boundf^2+k^2\boundf^4]$, we apply standard generalization analysis based on Rademacher complexity and get:
	with probability at least $1-\delta^2/4$ over the randomness of $\edata$ and $S$, we have for any $f\in\mathcal{F}$, 
	\begin{align}\label{equation:generalization_bound}
	\Loss{f} \le \sLoss{f} + (32k^2\boundf^2+32k\boundf) \max_{i\in[k]}\rad{\npt/2}(\mathcal{F}_i) + (4k\boundf^2 + k^2\boundf^4)\cdot\sqrt{\frac{4\log 2/\delta}{\npt}}.
	\end{align}
	This means with probability at least $1-\delta/2$ over random $\edata$, we have: with probability at least $1-\delta/2$ over random tuples $S$ conditioned on $\edata$, Equation~\eqref{equation:generalization_bound} holds. Since both $\Loss{f}$ and $\eLoss{\npt}{f}$ take value in range $[-2k\boundf^2, 2k\boundf^2+k^2\boundf^4]$, we have: with probability at least $1-{\delta}/{2}$ over random $\edata$, we have for any $f\in \mathcal{F}$, 
	\begin{align*}
	\Loss{f} \le \eLoss{\npt}{f} + (32k^2\boundf^2+32k\boundf)\cdot \max_{i\in[k]}\rad{\npt/2}(\mathcal{F}_i) + (4k\boundf^2 + k^2\boundf^4)\cdot\left(\sqrt{\frac{4\log 2/\delta}{\npt}} +\frac{\delta}{2}\right).
	\end{align*}
	Since negating the functions in a function class doesn't change its Rademacher complexity, we also have the other direction: with probability at least $1-{\delta}/2$ over random $\edata$, we have for any $f\in\mathcal{F}$,
	\begin{align*}
	\Loss{f} \ge \eLoss{\npt}{f} - (32k^2\boundf^2+32k\boundf)\cdot \max_{i\in[k]}\rad{\npt/2}(\mathcal{F}_i) + (4k\boundf^2 + k^2\boundf^4)\cdot\left(\sqrt{\frac{4\log 2/\delta}{\npt}} +\frac{\delta}{2}\right).
	\end{align*}
	Combine them together we get the excess risk bound: with probability at least $1-\delta$, we have
	\begin{align*}
	\Loss{\hat{f}} \le \Loss{\fstarf} + (64k^2\boundf^2+64k\boundf)\cdot \max_{i\in[k]}\rad{\npt/2}(\mathcal{F}_i) + (8k\boundf^2 +2 k^2\boundf^4)\cdot\left(\sqrt{\frac{4\log 2/\delta}{\npt}} +\frac{\delta}{2}\right),
	\end{align*}
	where $\hat{f}$ is minimizer of $\eLoss{\npt}{f}$ in $\mathcal{F}$ and $\fstarf$ is minimizer of $\Loss{f}$ in $\mathcal{F}$. Set $c_1 = 64k^2\boundf^2+64k\boundf$ and $c_2 = 16k\boundf^2 +4k^2\boundf^4$ and notice that $\max_{i\in[k]}\rad{\npt/2}(\mathcal{F}_i)  = \rad{\npt/2}(\mathcal{F})$ finishes the proof.
\end{proof}

\subsection{Generalization bound for spectral contrastive learning with deep neural networks}\label{section:proof_of_nn_norm_contral}

In this section, we examplify Theorem~\ref{theorem:nn_generalization} with the norm-contralled Rademacher complexity bound introduced in~\cite{golowich2018size}, which gives the following theorem. 

\begin{theorem}\label{theorem:nn_generalization_norm_contral}	
	Assume $\adata$ is a subset of Euclidean space $\Real^{d}$ and $\norm{x}_2\le \boundx$ for any $x\in\adata$. 
	Let $\mathcal{F}$ be a hypothesis class of norm-contralled $l$-layer deep neural networks defined as 
	\begin{align*}
	\left\{x\rightarrow P_{\boundf}(W_l\sigma(W_{l-1} \sigma(\cdots \sigma (W_1 x)))): \norm{W_i}_F \le \boundwi{i} \right\}
	\end{align*}
	where $\sigma(\cdot)$ is element-wise ReLU activation,  $P_{\boundf}(\cdot)$ is element-wise projection to interval $[-\boundf, \boundf]$ for some $\boundf>0$, $\boundwi{i}$ is the norm bound of the $i$-th layer, $W_l$ has $k$ rows and $W_1$ has $d$ columns. Then, with probability at least $1-\delta$ over randomness of a dataset with size $2\npt$, we have
	\begin{align*}
	\Loss{\hat{f}} \le \mathcal{L}_{\mathcal{F}}^* + c_1\cdot \frac{\boundx\boundw\sqrt{l}}{\sqrt{\npt}} + c_2\cdot\left(\sqrt{\frac{\log 1/\delta}{\npt}} +{\delta}\right),
	\end{align*}
	where $\hat{f}$ is the minimizer of $\eLoss{2\npt}{f}$ in $\mathcal{F}$, $\mathcal{L}_{\mathcal{F}}^*$ is the minimal $\Loss{f}$ achievable by any function $f\in\mathcal{F}$, $\boundw:=\prod_{i=1}^l \boundwi{i}$, constants $c_1\lesssim k^2\boundf^2+k\boundf$ and $c_2 \lesssim k\boundf^2 + k^2\boundf^4$.
\end{theorem}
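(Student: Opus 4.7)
The plan is to reduce Theorem~\ref{theorem:nn_generalization_norm_contral} to a direct combination of Theorem~\ref{theorem:nn_generalization} with a known norm-based Rademacher complexity bound for deep ReLU networks. First, I would observe that the hypothesis class satisfies the hypothesis of Theorem~\ref{theorem:nn_generalization} with bound $\boundf$ on the infinity norm of outputs, since $P_{\boundf}$ is an element-wise projection into $[-\boundf,\boundf]$. Invoking Theorem~\ref{theorem:nn_generalization} (with the sample size parameter $n$ replaced by $2n$, and splitting into halves as in the statement) immediately gives an excess-loss bound of the form
\begin{align*}
\Loss{\hat f}\le \mathcal{L}_{\mathcal{F}}^* + c_1\cdot \rad{n}(\mathcal{F}) + c_2\cdot\left(\sqrt{\frac{\log 1/\delta}{n}} +\delta\right),
\end{align*}
with constants $c_1\lesssim k^2\boundf^2+k\boundf$ and $c_2\lesssim k\boundf^2+k^2\boundf^4$, so it remains only to upper bound the multi-output Rademacher complexity $\rad{n}(\mathcal{F})$.

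Second, I would reduce the multi-output Rademacher complexity to the scalar-output case. By the definition $\rad{n}(\mathcal{F})$ takes a supremum over both $f\in\mathcal{F}$ and $i\in[k]$, so it equals $\max_i \rad{n}(\mathcal{F}_i)$, where $\mathcal{F}_i$ is the class of scalar functions $x\mapsto f_i(x)$. Each $f_i$ is a scalar projection of an $l$-layer ReLU net composed with the coordinate-wise clipping $P_{\boundf}$; the coordinate projection is linear of norm $1$ and the clipping is $1$-Lipschitz, so by Talagrand's contraction lemma the Rademacher complexity is no larger than that of the un-clipped, single-output $l$-layer ReLU network obtained by picking the $i$-th row of $W_l$ (whose Frobenius norm is at most $\boundwi{l}$).

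Third, I would apply the norm-controlled Rademacher complexity bound for deep ReLU networks of \citet{golowich2018size}: for $l$-layer scalar-output ReLU networks with per-layer Frobenius norm bounds whose product is at most $\boundw=\prod_{i=1}^l \boundwi{i}$, evaluated on inputs of Euclidean norm at most $\boundx$, the empirical Rademacher complexity on $n$ points is $O\!\bigl(\boundx\boundw\sqrt{l}/\sqrt{n}\bigr)$. Plugging this into $\rad{n}(\mathcal{F})\le \max_i\rad{n}(\mathcal{F}_i)$ and then into the excess-loss bound above yields exactly the claimed rate.

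I don't expect a serious obstacle here; the only things to get right are (i) the absorption of the clipping into the Rademacher complexity via Talagrand (which requires only $1$-Lipschitzness of $P_{\boundf}$ and does not need boundedness), and (ii) making sure the constants in Theorem~\ref{theorem:nn_generalization} combine with the $\sqrt{l}$ factor of \citet{golowich2018size} without introducing any hidden dependence on the network width. Both are routine, so the full proof is essentially a two-line citation chain once the setup is written down.
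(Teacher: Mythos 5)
Your proposal is correct and follows essentially the same route as the paper: apply Theorem~\ref{theorem:nn_generalization} to reduce to bounding $\rad{n}(\mathcal{F})=\max_i\rad{n}(\mathcal{F}_i)$, absorb the element-wise clipping $P_{\boundf}$ via Talagrand's contraction lemma (using only its $1$-Lipschitzness), and then invoke the norm-controlled bound of \citet{golowich2018size} for scalar-output $l$-layer ReLU nets, which gives $\rad{n}(\mathcal{F}_i)\lesssim \boundx\boundw\sqrt{l}/\sqrt{n}$ since the $i$-th row of $W_l$ has Euclidean norm at most $\boundwi{l}$.
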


\begin{proof}[Proof of Theorem~\ref{theorem:nn_generalization_norm_contral}]
	Consider the following hypothesis class of real-valued neural networks:
	\begin{align*}
	\mathcal{F}_{\textup{real}} \triangleq\left\{x\rightarrow \widehat{W}_l\sigma(W_{l-1} \sigma(\cdots \sigma (W_1 x))): \norm{W_i}_F \le \boundwi{i} \right\}
	\end{align*}
	where $\sigma(\cdot)$ is element-wise ReLU activation and $\boundwi{i}$ is the norm bound of the $i$-th layer defined in the theorem, $W_l$ has $k$ rows and $\widehat{W}_1$ is a vector. By Theorem 1 of~\cite{golowich2018size}, we have
	\begin{align*}
	\rad{\npt}\left(\mathcal{F}_{\textup{real}}\right) \le \frac{\boundx(\sqrt{2\log(2)l}+1)\boundw}{\sqrt{\npt}}.
	\end{align*}
	Let the projection version of this hyposis class be:
	\begin{align*}
	\mathcal{F}_{\textup{real+proj}} \triangleq\left\{x\rightarrow P_{\boundf}(\widehat{W}_l\sigma(W_{l-1} \sigma(\cdots \sigma (W_1 x)))): \norm{W_i}_F \le \boundwi{i} \right\},
	\end{align*}
	where $P_{\boundf}(\cdot)$ projects a real number into interval $[-\boundw, \boundw]$.
	Notice that $P_{\boundf}(\cdot)$ is $1$-Lipschitz, by Telegrand's lemma we have
	\begin{align*}
	\rad{\npt}\left(\mathcal{F}_{\textup{real+proj}}\right) \le \frac{\boundx(\sqrt{2\log(2)l}+1)\boundw}{\sqrt{\npt}}.
	\end{align*}
	
	For each $i\in[k]$, define function $f_i:\adata\rightarrow \Real$ such that $f_i(x)$ is the $i$-th dimension of $f(x)$, define $\mathcal{F}_i$ be the hypothesis class including all $f_i$ for $f\in\mathcal{F}$. Then when $\mathcal{F}$ is the composition of deep neural networks and projection function as defined in the theorem, it is obvious to see that $\mathcal{F}_i = \mathcal{F}_{\textup{real+proj}}$ for all $i\in[k]$. Therefore, by Theorem~\ref{theorem:nn_generalization} we have
	\begin{align*}
	\Loss{\hat{f}} \le \mathcal{L}_{\mathcal{F}}^* + c_1\cdot \frac{\boundx(\sqrt{2\log(2)l}+1)\boundw}{\sqrt{\npt}} + c_2\cdot\left(\sqrt{\frac{\log 2/\delta}{\npt}} +{\delta}\right),
	\end{align*}
	and absorbing the constants into $c_1$ finishes the proof.
\end{proof}

\subsection{Proof of Theorem~\ref{theorem:linear_probe_suboptimal}}\label{section:roof_of_suboptimal_representation_bound}
In this section we give the proof of Theorem~\ref{theorem:linear_probe_suboptimal}. We will first prove the following theorem that characterize the error propagation from pre-training to the downstream task.
\begin{theorem}[Error propagation from pre-training to the downstream task]\label{theorem:erorr_propagation}
	Assume representation dimension $k\ge 4r+2$, Assumption~\ref{definition:accurate_partition} holds for $\alpha>0$ and Assumption~\ref{assumption:realizable} holds.
	Recall $\eigvadj_i$ be the $i$-th largest eigenvalue of the normalized adjacency matrix. 
	Then, for any $\epsilon>0$ and $\empminf\in\mathcal{F}$ such that $\Loss{\empminf}<\Loss{\globalminf} + \epsilon$, we have:
	\begin{align*}
		\eval(\empminf) \lesssim \frac{\alpha}{\rho_{\lfloor k/2\rfloor}^2}\cdot\log k + \frac{k\epsilon}{\Delta_\eigvadj^2} ,
		\end{align*}
	where $\Delta_\eigvadj:=\eigvadj_{\lfloor{3k}/4\rfloor} - \eigvadj_{k}$ is the eigenvalue gap between the $\lfloor{3k}/4\rfloor$-th and the $k$-th eigenvalue. Furthermore, there exists a linear head $\widehat{B}\in\Real^{k\times r}$ that achieves this error and  has norm bound 
	\begin{align}
		\norm{\widehat{\matrixw}}_F \le \frac{2(k+1)}{1-\lambda_{k}}.
	\end{align}
\end{theorem}

We first introduce the following definitions of $\epsilon$-optimal minimizers of matrix approximation loss and population spectral contrastive loss:
\begin{definition}
	We say a function $\hat{f}_{\ma}$ is $\epsilon$-optimal minimizer of matrix approximation loss $\mathcal{L}_{\ma}$ if \begin{align*}
	\Lossmc{\widehat{F}_{\ma}} \le \min_F \Lossmc{F}+\epsilon,
	\end{align*}
	where $\widehat{F}_{\ma}$ is $\hat{f}_{\ma}$ written in the matrix form.
	We say a function $\hat{f}$ is  $\epsilon$-optimal minimizer of spectral contrastive loss $\mathcal{L}$ if \begin{align*}
	\Loss{\hat{f}} \le \min_f \Loss{f}+\epsilon.
	\end{align*}
\end{definition}

We introduce the following generalized version of Theorem~\ref{theorem:error_rate_with_eigenvectors}, which captures the main effects of error in the representation.
\begin{theorem}\label{theorem:generalization_of_error_rate_with_eigenvectors}
	[Generalization of Theorem~\ref{theorem:error_rate_with_eigenvectors}]
	Assume the set of augmented data $\adata$ is finite. Let $\lambda_{i}$ be the $i$-th smallest eigenvalue of the normalize laplacian matrix. 
	Let $\hat{f}\in\arg\min_{f: \adata\rightarrow\Real^k} $ be a $\epsilon$-optimal minimizer of the spectral contrastive loss function $\Loss{f}$ with $k\in\mathcal{Z}^+$. 
	Then, for any labeling function $\hat{y}: \adata\rightarrow[r]$ there exists a linear probe $\widehat{\matrixw} \in \Real^{r\times k}$ with norm bound $\norm{\widehat{\matrixw}}_F \le \frac{2(k+1)}{1-\lambda_{k}}$ such that
		\begin{align*}
		\Exp{\bar{x}\sim\pndata, {x}\sim\aug{\bar{x}}}\left[\norm{\vec{{y}}(\bar{x}) - \widehat{B}\empminf({x})}_2^2\right]\lesssim   \min_{1 \le k'\le k}  \left(\frac{ \phi^{\hat{y}}}{\lambda_{k'+1}}+ \frac{k' \epsilon}{(\lambda_{k+1}-\lambda_{k'})^2 }\right)+ \Delta(y, \hat{y}),
	\end{align*} 
and 	
	\begin{align*}
	\Pr_{\bar{x}\sim \pndata, {x}\sim\aug{\bar{x}}} \left(\pred_{\hat{f}, \widehat{\matrixw} }({x}) \ne y(\bar{x})\right) \lesssim \min_{1 \le k'\le k}  \left(\frac{ \phi^{\hat{y}}}{\lambda_{k'+1}}+ \frac{k' \epsilon}{(\lambda_{k+1}-\lambda_{k'})^2 }\right) + \Delta(y, \hat{y}), 
	\end{align*}
	where $\phi^{\hat{y}}$ and $\Delta(y, \hat{y})$ are defined in Equations~\ref{equation:define_phi_y} and~\ref{equation:define_delta} respectively.
\end{theorem}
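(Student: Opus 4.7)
The plan is to mirror the proof of Theorem~\ref{theorem:error_rate_with_eigenvectors} while inserting a perturbation step that absorbs the slack $\epsilon$. First, by Lemma~\ref{lemma:equivalent_between_two_losses}, the rescaling $f(x)\mapsto \sqrt{\wnode{x}}\,f(x)$ sends the $\epsilon$-optimal minimizer $\hat{f}$ of $\Loss{f}$ to an $\epsilon$-optimal minimizer $\widehat{F}$ for the matrix approximation loss:
\[
\|\norA - \widehat{F}\widehat{F}^\top\|_F^2 \;\le\; \min_{F\in\R^{N\times k}}\|\norA - FF^\top\|_F^2 + \epsilon.
\]
Let $F^\star = [{\eigv}_1,\dots,{\eigv}_k]\cdot \diag(\sqrt{1-\lambda_1},\dots,\sqrt{1-\lambda_k})$ be the exact rank-$k$ minimizer provided by Eckart--Young--Mirsky.

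The core new step is a perturbation lemma: for every $k'\le k$, the column space of $\widehat{F}$ contains an approximate copy of the bottom-$k'$ Laplacian eigenvectors $F_{\scf,k'}:=[{\eigv}_1,\dots,{\eigv}_{k'}]$, in the sense that there exists $M\in\R^{k\times k'}$ with
\[
\|F_{\scf,k'} - \widehat{F}M\|_F^2 \;\lesssim\; \frac{k'\epsilon}{(\lambda_{k+1}-\lambda_{k'})^2}.
\]
The intuition is that $\|\widehat{F}\widehat{F}^\top - F^\star F^{\star\top}\|_F^2\lesssim \epsilon$ (via a Pythagorean-type argument using that $F^\star F^{\star\top}$ is the Frobenius projection of $\norA$ onto rank-$\le k$ symmetric PSD matrices), while the eigenvalues $1-\lambda_1,\dots,1-\lambda_{k'}$ of $F^\star F^{\star\top}$ sitting inside $\textup{span}(F_{\scf,k'})$ are separated from the zero eigenspace spanned by ${\eigv}_{k+1},\dots,{\eigv}_N$ by a gap $(1-\lambda_{k'})-(1-\lambda_{k+1})=\lambda_{k+1}-\lambda_{k'}$. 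A Davis--Kahan-style sin-$\Theta$ inequality in Frobenius norm then yields the claim, with the $k'$ factor arising from summing squared errors across the $k'$ recovered directions.

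With this in hand, the rest mirrors the proof of Theorem~\ref{theorem:error_rate_with_eigenvectors} with $k$ replaced by $k'$ in the eigenvector approximation step. Applying Lemma~\ref{lemma:approximate_vector_of_small_rayleight_quotient} and Claim~\ref{corollary:sparsecut_to_rayleigh_quotient} produces $W\in\R^{k'\times r}$ with $\|W\|_F\le 1$ such that $\|U - F_{\scf,k'}W^\top\|_F^2 \le \phi^{\hat{y}}/(2\lambda_{k'+1})$, where $U$ collects the rescaled indicators $u_i^{\hat y}$. Defining the linear probe $\widehat{B}$ on $\hat{f}$ by composing the data-wise rescaling with $(MW)^\top$ and applying the triangle inequality
\[
\|U - \widehat{F}(MW)^\top\|_F \;\le\; \|U - F_{\scf,k'}W^\top\|_F + \|F_{\scf,k'}-\widehat{F}M\|_F\cdot\|W\|_F
\]
gives an $\ell_2$ bound on the logit error of the form $\phi^{\hat y}/\lambda_{k'+1} + k'\epsilon/(\lambda_{k+1}-\lambda_{k'})^2$ (up to constants). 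Converting this to 0/1 classification error (using that misclassification forces $\ell_2$ error at least $1/2$, as in the original proof) and adding $\Delta(y,\hat{y})$ for the mismatch between $\hat{y}$ and $y$ yields the inequality for a fixed $k'$; minimizing over $k'\in\{1,\dots,k\}$ completes the proof.

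The main obstacle is the perturbation lemma. The tricky aspect is that $\widehat{F}$ need not have orthonormal columns and $\widehat{F}\widehat{F}^\top$ may have rank strictly less than $k$, so a naive column-wise Davis--Kahan does not directly apply. The cleanest route is to work at the level of the symmetric PSD operators $\widehat{F}\widehat{F}^\top$ and $F^\star F^{\star\top}$: establish the Frobenius perturbation bound $\|\widehat{F}\widehat{F}^\top - F^\star F^{\star\top}\|_F^2 \le \epsilon$ directly from $\epsilon$-optimality of $\widehat{F}$, then invoke a Frobenius sin-$\Theta$ bound between the top-$k'$ invariant subspaces of these two operators with gap $\lambda_{k+1}-\lambda_{k'}$, and finally translate the subspace bound into the existence of $M$ through an explicit projection onto $\textup{span}(\widehat{F})$.
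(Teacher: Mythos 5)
Your high-level plan is the same as the paper's: transport the $\epsilon$-slack from the spectral contrastive loss to the matrix-factorization loss via the data-wise rescaling, show that the column span of $\widehat{F}$ approximately contains the bottom-$k'$ Laplacian eigenvectors, then combine this with the Rayleigh-quotient approximation of $U$ by those eigenvectors and convert the $\ell_2$ error to a 0/1 error. The quantity you identify as ``the main obstacle'' is indeed the crux, and the subspace bound you conjecture (a bound of order $k'\epsilon/(\lambda_{k+1}-\lambda_{k'})^2$ on the recovery error of the first $k'$ eigenvectors inside $\mathrm{col}(\widehat{F})$) is essentially what the paper proves in Lemma~\ref{lemma:projection_to_suboptimal_representation} together with the Cauchy--Schwarz step in Lemma~\ref{lemma:generalization_of_approximate_vector_of_small_rayleight_quotient}.

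However, your proposed proof of that perturbation step is incorrect. You assert $\|\widehat{F}\widehat{F}^\top - F^\star F^{\star\top}\|_F^2 \lesssim \epsilon$ ``via a Pythagorean-type argument'' from the fact that $F^\star F^{\star\top}$ is the best rank-$k$ PSD Frobenius approximant of $\norA$. Pythagoras requires the constraint set to be convex (or the approximant to be an orthogonal projection onto a linear subspace); the set of rank-$\le k$ PSD matrices is neither, and the inequality genuinely fails. Concretely, take $\norA$ diagonal with eigenvalues $\mu_1\ge\dots\ge\mu_N\ge 0$, and let $\widehat{F}\widehat{F}^\top$ equal $F^\star F^{\star\top}$ except that the term $\mu_k \eigv_k\eigv_k^\top$ is replaced by $\mu_k\eigv_{k+1}\eigv_{k+1}^\top$. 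A direct computation gives excess loss $\epsilon = 2\mu_k(\mu_k-\mu_{k+1})$, while $\|\widehat{F}\widehat{F}^\top - F^\star F^{\star\top}\|_F^2 = 2\mu_k^2$. When $\mu_{k+1}$ is close to $\mu_k$, the excess loss $\epsilon$ is tiny but the Frobenius gap stays of order $\mu_k^2$, so $\|\widehat{F}\widehat{F}^\top - F^\star F^{\star\top}\|_F^2$ can be arbitrarily larger than $\epsilon$. Therefore the Davis--Kahan step that you want to feed this bound into has nothing valid to work with.

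The paper avoids this pitfall by never comparing $\widehat{F}\widehat{F}^\top$ to $F^\star F^{\star\top}$. Instead it lower-bounds $\Lossmc{\widehat{F}}=\|\norA-\widehat{F}\widehat{F}^\top\|_F^2$ by $\|\Pi_{\widehat F}^\perp \norA\|_F^2$ (legitimate, since every column of $\widehat{F}\widehat{F}^\top$ lives in $\mathrm{col}(\widehat F)$), expands this quantity in the eigenbasis of $\norA$, and then runs an Abel-summation argument on the partial sums $S_j=\sum_{t\le j}\sum_{l>k}\langle \eigv_t,z_l\rangle^2$ to peel off the contribution of each individual $\|\Pi_{\widehat F}^\perp\eigv_i\|_2^2$. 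This yields the per-eigenvector bound $\|\Pi_{\widehat F}^\perp\eigv_i\|_2^2\le \epsilon/(\lambda_{k+1}-\lambda_i)^2$ for each $i\le k$, and Cauchy--Schwarz over $i\le k'$ then gives the $k'$-dependent bound you want. If you want to keep a Davis--Kahan flavour, the object to perturb is the residual $\Pi_{\widehat F}^\perp\norA$, not the rank-$k$ Gram matrices; otherwise your route does not go through.
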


The proof of Theorem~\ref{theorem:generalization_of_error_rate_with_eigenvectors} is deferred to Section~\ref{section:proof_of_generalized_error_with_eig}. 

Now we are ready to prove Theorem~\ref{theorem:linear_probe_suboptimal} using Theorem~\ref{theorem:generalization_of_error_rate_with_eigenvectors}. 
\begin{proof}[Proof of Theorem~\ref{theorem:erorr_propagation}]
	In Theorem~\ref{theorem:generalization_of_error_rate_with_eigenvectors} we let $k'=\lfloor\frac{3}{4}k\rfloor$ on the RHS of the bound and get: for any $\hat{y}:\adata\rightarrow[r]$ there exists $\widehat{\matrixw}\in\Real^{r\times k}$ such that 
	\begin{align*}
	\Pr_{x\sim \pndata, \tilde{x}\sim\aug{x}} \left(\pred_{\hat{f}, \widehat{\matrixw} }(\tilde{x}) \ne y(x)\right) \lesssim  \frac{ \phi^{\hat{y}}}{\lambda_{\lfloor\frac{3}{4}k\rfloor+1}}+ \frac{k \epsilon}{(\lambda_{k+1}-\lambda_{\lfloor\frac{3}{4}k\rfloor})^2 } + \Delta(y, \hat{y}).
	\end{align*}
	Let $S_1, S_2, \cdots, S_r$ be the partition of $\adata$ induced by the classifier $g$ in Assumption~\ref{definition:accurate_partition}.
	Define function $\hat{y}:\adata\rightarrow [r]$ as follows: for an augmented datapoint $x\in\adata$,  we use function $\hat{y}(x)$ to represent the index of set that $x$ is in, i.e., $x\in S_{\hat{y}(x)}$. Then by Lemma~\ref{remark:edge_between_sets} we have $\phi^{\hat{y}}\le2\alpha$ and $\Delta(y, \hat{y}) \le \alpha$. In Lemma~\ref{lemma:higher_order_cheeger} let $(1+\zeta)t=\lfloor\frac{3}{4}k\rfloor+1$ and $t=\lfloor \frac{k}{2}\rfloor$, then there is $\zeta\ge0.5$, so we have: there exists a partition $S_1,\cdots, S_{\lfloor \frac{k}{2}\rfloor}\subset \adata$ such that $\phi_G(S_i)\lesssim \sqrt{\lambda_{\lfloor\frac{3}{4}k\rfloor+1}\log {(k)}}$ for $\forall i \in[\lfloor \frac{k}{2}\rfloor]$. 
	By Definition~\ref{definition:multi_way_expansion_constant}, we have $\rho_{\lfloor \frac{k}{2}\rfloor}  \lesssim \sqrt{\lambda_{\lfloor\frac{3}{4}k\rfloor+1}\log {(k)}}$, which leads to $\frac{1}{\lambda_{\lfloor\frac{3}{4}k\rfloor+1}}\lesssim \frac{log(k)}{\rho_{\lfloor \frac{k}{2}\rfloor}^2}$. So we have 
	
	\begin{align*}
	\Pr_{\bar{x}\sim \pndata, {x}\sim\aug{\bar{x}}} \left(\pred_{\hat{f},  \widehat{\matrixw} }({x}) \ne y(\bar{x})\right) &\lesssim  \frac{\alpha}{\rho_{\lfloor \frac{k}{2}\rfloor}^2}\cdot \log(k)+ \frac{k \epsilon}{(\lambda_{k+1}-\lambda_{\lfloor\frac{3}{4}k\rfloor})^2 }\\
	&\lesssim \frac{\alpha}{\rho_{\lfloor \frac{k}{2}\rfloor}^2}\cdot \log(k)+ \frac{k \epsilon}{(\lambda_{k}-\lambda_{\lfloor\frac{3}{4}k\rfloor})^2 }.
	\end{align*}
	
	Notice that by the definition of ensembled linear probe predictor, $\npred_{\hat{f}, \widehat{\matrixw}}(\bar{x})\ne y(\bar{x})$ happens only if more than half of the augmentations of $\bar{x}$ predicts differently from $y(\bar{x})$, so we have $\Pr_{\bar{x}\sim\pndata}\left(\npred_{\hat{f}, \widehat{\matrixw}}\ne y(\bar{x})\right) \le 2\Pr_{\bar{x}\sim \pndata, {x}\sim\aug{\bar{x}}} \left(\pred_{\hat{f},  \widehat{\matrixw} }({x}) \ne y(\bar{x})\right)$ which finishes the proof.
\end{proof}

\begin{proof}[Proof of Theorem~\ref{theorem:linear_probe_suboptimal}]
	Theorem~\ref{theorem:linear_probe_suboptimal} is a direct corollary of Theorem~\ref{theorem:nn_generalization} and Theorem~\ref{theorem:erorr_propagation}.
\end{proof}

\subsection{Proof of Theorem~\ref{theorem:generalization_of_error_rate_with_eigenvectors}}\label{section:proof_of_generalized_error_with_eig}
In this section, we give the proof for Theorem~\ref{theorem:generalization_of_error_rate_with_eigenvectors}.

\begin{lemma}[Generalization of Lemma~\ref{lemma:equivalent_between_two_losses}]\label{lemma:generalization_of_equivalent_between_two_losses}
	Let ${f}: \adata\rightarrow\Real^k$ be a feature extractor, matrix
	${{F}}\in\Real^{N\times k}$ be such that its $x$-th row is $\sqrt{\wnode{x}}\cdot {f}(x)$. Then, ${{F}}$ is an $\epsilon$-optimal minimizer of $\Lossmc{{F}}$ if and only if 
	$f$ is an $\epsilon$-optimal minimizer of the population spectral contrastive loss $\Loss{f}$. 
\end{lemma}
\begin{proof}[Proof of Lemma~\ref{lemma:generalization_of_equivalent_between_two_losses}]
	The proof follows the proof of Lemma~\ref{lemma:equivalent_between_two_losses}.
\end{proof}

We will use the following two lemmas about $\epsilon$-optimal minimizer of $\mathcal{L}_{\ma}$:

\begin{lemma}\label{lemma:properties_of_suboptimal_feature}
	Let $\lambda_i$ be the $i$-th minimal eigenvalue of the normalized Laplacian matrix $\laplacian$ with corrsponding unit-norm eigenvector $\eigv_i$.	
	Let $F\in \Real^{N\times k}$ be an $\epsilon$-optimal minimizer of $\mathcal{L}_{\ma}$. Let $\Pi_{f}\eigv_i$ be the projection of $\eigv_i$ onto the column span of $F$.
	Then, there exists vector $b\in\Real^k$ with norm bound $\norm{b}\le \norm{F}_F/(1-\lambda_{i})$ such that 
	\begin{align}
		\norm{\Pi_{f}\eigv_i - Fb}_2^2 \le \frac{\epsilon}{(1-\lambda_{i})^2}.
	\end{align}
	Furthermore, the norm of $F$ is bounded by 
	\begin{align}
		\norm{F}^2_F\le 2(k+\epsilon).
	\end{align}
\end{lemma}
\begin{proof}[Proof of Lemma~\ref{lemma:properties_of_suboptimal_feature}]
Since columns of $\norA-\Pi_{f}\norA$ and columns of $\Pi_{f}\norA - FF^\top$ are in orthogonal subspaces, we have
\begin{align}
	\norm{\norA - FF^\top}_F^2 &= \norm{\norA-\Pi_{f}\norA}_F^2 + \norm{\Pi_{f}\norA - FF^\top}_F^2.
\end{align}
On one hand, since $\Pi_{f}\norA$ is a rank-$k$ matrix, we know that 
$\norm{\norA-\Pi_{f}\norA}_F^2 \ge \min_F \Lossmc{F}$. On the other hand, by the definition of $\epsilon$-optimal minimizer, we have $\norm{\norA - FF^\top}_F^2 \le \min_F \Lossmc{F}+\epsilon$. Thus, we have 
\begin{align}
	\norm{\Pi_{f}\norA - FF^\top}_F^2\le \epsilon.
\end{align}
Since $\norA =\sum_{i=1}^{N} (1-\lambda_{i})v_iv_i^\top$, we have $v_i = \frac{1}{1-\lambda_{i}}\norA \eigv_i$. Thus, 
\begin{align}
	\Pi_{f} \eigv_i = \frac{1}{1-\lambda_{i}} \Pi_{f} (\norA v_i) = \frac{1}{1-\lambda_{i}} FF^\top v_i + \frac{1}{1-\lambda_{i}}(\Pi_{f}\norA - FF^\top) v_i.
\end{align}
Let $b=\frac{1}{1-\lambda_{i}}F^\top v_i$, we have
\begin{align}
	\norm{\Pi_{f}v_i - Fb}_2^2 &= \frac{1}{(1-\lambda_{i})^2} \norm{(\Pi_{f}\norA - FF^\top)v_i}_2^2\\
	&\le \frac{1}{(1-\lambda_{i})^2} \norm{\Pi_{f}\norA - FF^\top}_F^2\\
	&\le \frac{\epsilon}{(1-\lambda_{i})^2}.
\end{align}

To bound the norm of $F$, we first notice that
\begin{align}
	\norm{\Pi_{f}\norA}_F^2 = \Tr(\norA^2 \Pi_{f}) \le \Tr(\Pi_{f}) = k,
\end{align}
where the inequality uses that fact that $\norA^2$ has operator norm at most $1$. Combine this result with $\norm{\Pi_{f}\norA - FF^\top}_F^2\le \epsilon$ we have
\begin{align}
	\norm{FF^\top}_F\le \sqrt{k} +\sqrt{\epsilon}.
\end{align}
Since $FF^\top$ has rank at most $k$, we can write its SVD deocmposition as $FF^\top = U\Sigma U^\top$ where $U\in\Real^{N\times k}$ and $\Sigma\in\Real{k\times k}$. As a result, we have
\begin{align}
	\norm{F}_F^2 = \Tr(FF^\top) = \Tr(\Sigma) \le \sqrt{k}\sqrt{\Tr(\Sigma^2)} = \sqrt{k} \norm{FF^\top}_F \le k+\sqrt{k\epsilon}\le 2(k+\epsilon).
\end{align}

\end{proof}

\begin{lemma}\label{lemma:projection_to_suboptimal_representation}
	Let $\lambda_i$ be the $i$-th minimal eigenvalue of the normalized Laplacian matrix $\laplacian$ with corrsponding unit-norm eigenvector $\eigv_i$. Let $F\in \Real^{N\times k}$ be an $\epsilon$-optimal minimizer of $\mathcal{L}_{\ma}$. Let $\Pi_{f}^{\perp}\eigv_i$ be the projection of $\eigv_i$ onto the subspace orthogonal to the column span of $F$. Then, for $i\le k$ we have
	\begin{align*}
	\norm{\Pi_{f}^{\perp}\eigv_i}_2^2 \le \frac{\epsilon}{(\lambda_{k+1} - \lambda_{i})^2}.
	\end{align*}  
\end{lemma}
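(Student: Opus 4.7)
The plan is to recast the lemma as a low-rank PSD approximation problem for the normalized adjacency matrix $\norA = I - \laplacian$ and then extract the bound through an eigenvalue-gap inequality reminiscent of Davis--Kahan. By Lemma~\ref{lemma:generalization_of_equivalent_between_two_losses}, the $\epsilon$-optimality of $f$ under $\Loss{\cdot}$ is equivalent to $F$ being $\epsilon$-optimal for $\Lossmc{F}=\|\norA - FF^\top\|_F^2$. Since $FF^\top$ is PSD of rank at most $k$, Eckart--Young--Mirsky gives $\min_F \Lossmc{F} = \sum_{j>k}(1-\lambda_j)^2$, so $\|\norA - FF^\top\|_F^2 \le \sum_{j>k}(1-\lambda_j)^2 + \epsilon$. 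Moreover, the hypothesis $\epsilon<(1-\lambda_k)^2$ forces $F$ to have full column rank $k$: if the rank were $<k$, the rank-constrained optimum would already exceed the rank-$k$ optimum by $(1-\lambda_k)^2>\epsilon$, contradicting $\epsilon$-optimality.

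Next, let $\Pi_f$ denote the orthogonal projection onto the column span of $F$, so that $FF^\top = \Pi_f FF^\top \Pi_f$. Decomposing $\norA$ in the $\Pi_f \oplus \Pi_f^\perp$ splitting into four Frobenius-orthogonal blocks gives
\begin{align*}
\|\norA - FF^\top\|_F^2 = \|\Pi_f\norA\Pi_f - FF^\top\|_F^2 + 2\|\Pi_f^\perp\norA\Pi_f\|_F^2 + \|\Pi_f^\perp\norA\Pi_f^\perp\|_F^2 \ge \|\Pi_f^\perp \norA\|_F^2,
\end{align*}
using $\|\Pi_f^\perp\norA\|_F^2 = \|\Pi_f^\perp\norA\Pi_f\|_F^2 + \|\Pi_f^\perp\norA\Pi_f^\perp\|_F^2$. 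Combined with the previous display, this produces the key inequality $\|\Pi_f^\perp \norA\|_F^2 \le \sum_{j>k}(1-\lambda_j)^2 + \epsilon$.

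Using the spectral decomposition $\norA = \sum_j (1-\lambda_j)\eigv_j\eigv_j^\top$, I would write $\|\Pi_f^\perp \norA\|_F^2 = \sum_j (1-\lambda_j)^2 a_j$ with $a_j:=\|\Pi_f^\perp \eigv_j\|_2^2$ and $s_j:=1-a_j$. The identity $\sum_j s_j = \mathrm{rank}(\Pi_f) = k$ implies $\sum_{j>k}s_j = \sum_{j\le k}a_j$. Rearranging as $\sum_{j\le k}(1-\lambda_j)^2 a_j \le \sum_{j>k}(1-\lambda_j)^2 s_j + \epsilon$ and subtracting $(1-\lambda_{k+1})^2 \sum_{j\le k} a_j = (1-\lambda_{k+1})^2 \sum_{j>k} s_j$ from both sides yields
\begin{align*}
\sum_{j\le k}\bigl[(1-\lambda_j)^2 - (1-\lambda_{k+1})^2\bigr] a_j \le \sum_{j>k}\bigl[(1-\lambda_j)^2 - (1-\lambda_{k+1})^2\bigr] s_j + \epsilon \le \epsilon,
\end{align*}
since for $j>k$ the bracketed factor is non-positive. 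Retaining only the $j=i$ term on the left gives $\bigl[(1-\lambda_i)^2 - (1-\lambda_{k+1})^2\bigr] a_i \le \epsilon$.

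To finish, I factor $(1-\lambda_i)^2 - (1-\lambda_{k+1})^2 = (\lambda_{k+1}-\lambda_i)(2-\lambda_i-\lambda_{k+1}) \ge (\lambda_{k+1}-\lambda_i)^2$, where the last step uses $\lambda_{k+1}\le 1$, valid since $\norA$ is PSD with top eigenvalue $1$ (so eigenvalues of $\laplacian$ lie in $[0,1]$). This delivers the claimed bound $\|\Pi_f^\perp \eigv_i\|_2^2 \le \epsilon/(\lambda_{k+1}-\lambda_i)^2$. The most delicate piece is verifying the Frobenius orthogonality of the four-block decomposition of $\norA - FF^\top$ and correctly exploiting $\mathrm{rank}(\Pi_f)=k$ to make the right-hand side of the key rearrangement collapse to $\epsilon$; everything after that is algebraic manipulation around the eigenvalue gap.
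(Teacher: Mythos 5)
Your proof is correct and follows essentially the same route as the paper's: establish full column rank of $F$ from $\epsilon<(1-\lambda_k)^2$, lower bound $\Lossmc{F}\ge\|\Pi_f^\perp\norA\|_F^2$ (the paper does this column-by-column, you do it via a four-block Frobenius decomposition — both valid), expand in the eigenbasis, and exploit the trace constraint $\sum_j(1-a_j)=k$ together with the eigenvalue gap. Where the paper organizes the final step via Abel summation on the partial sums $S_j=\sum_{t\le j}a_j$, you instead rearrange directly against $(1-\lambda_{k+1})^2$; these are algebraically equivalent packagings of the same idea. One thing you make explicit that the paper leaves implicit is the observation that $\norA$ is PSD (so that $\lambda_{k+1}\le 1$), which is genuinely needed both for your bracketed factors to have the right signs and for the final factorization $(1-\lambda_i)^2-(1-\lambda_{k+1})^2\ge(\lambda_{k+1}-\lambda_i)^2$ that also appears at the end of the paper's proof.
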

\begin{proof}	

	Recall normalized adjacency matrix $\norA = I-L$. We use $\norA_i$ to denote the $i$-th column of $\norA$. We use $\widehat{A}$ to denote matrix $FF^\top$ and $\widehat{A}_i$ to denote the $i$-th column of $\widehat{A}$. Let $z_1, \cdots, z_k$ be unit-norm orthogonal vectors in the column span of $F$. Since the column span of $\widehat{A}$ is the same as the column span of $F$, we know columns of $\widehat{A}$ are in $span\{z_1, \cdots, z_k\}$. Let $z_{k+1}, \cdots, z_{\sizead}$ be unit-norm orthogonal vectors such that together with $z_1, \cdots, z_k$ they form an orthonormal basis of $\Real^{\sizead}$. We use $\Pi_f$ and $\Pi_f^\perp$ to denote matrices $\sum_{j=1}^k z_jz_j^\top$ and $\sum_{j=k+1}^{\sizead} z_jz_j^\top$ respectively, then for any vector $v\in\Real^{\sizead}$, vectors $\Pi_{f}v$ and $\Pi_{f}^\perp v$ are the projections of $v$ onto the column span of $F$ and its orthogonal space respectively.
	
	We first give a lower bound of $\Lossmc{F}$ as follows:
	\begin{align*}
	\Lossmc{F} &= \norm{\norA - \widehat{A}}_F^2 = \sum_{j=1}^{\sizead} \norm{\norA_j - \widehat{A}_j}_2^2 \ge \sum_{j=1}^{\sizead} \norm{\norA_j - \Pi_f \norA_j}_2^2 \\&= \sum_{j=1}^{\sizead} \norm{\norA_j - \left(\sum_{t=1}^k z_tz_t^\top\right)\norA_j}_2^2 = \sum_{j=1}^{\sizead} \norm{\left(\sum_{t=k+1}^{\sizead} z_tz_t^\top\right)\norA_j}_2^2 \\&= \norm{\left(\sum_{t=k+1}^{\sizead} z_tz_t^\top\right) \norA}_F^2 = \norm{\Pi_{f}^\perp \norA}_F^2.
	\end{align*}
	where the first equality is by definition of $\Lossmc{F}$, the second equality is by writing the Frobenius norm square as the sum of column norm square, the inequality is because $\widehat{A}_j$ must be in the span of $z_1, \cdots, z_k$ while $\Pi_f \norA_j$ is the vector in this span that is closest to $\norA_j$, the third equality is writing the projection function in the matrix form, the fourth equality is because $z_1, \cdots z_d$ are an orthonormal basis, the fifth equality is rewriting to Frobenius norm, and the last equality is by definition of $\Pi_f^\perp$.
	
	Notice that 
	\begin{align*}
	\norm{\Pi_{f}^\perp \norA}_F^2 = Tr\left(\norA^\top {\Pi_{f}^\perp}^\top \Pi_{f}^\perp \norA\right) = Tr\left(\norA^\top \Pi_{f}^\perp \norA\right) = Tr\left(\norA\norA^\top \Pi_{f}^\perp\right).
	\end{align*}	
	
	We can rewrite the above lower bound as
	\begin{align*}
	\Lossmc{F} \ge Tr\left(\norA\norA^\top \Pi_{f}^\perp\right)
	= Tr\left(\sum_{j=1}^{\sizead} (1-\lambda_j)^2 \eigv_j\eigv_j^\top \sum_{t=k+1}^{\sizead}z_tz_t^\top\right)
	= \sum_{j=1}^{\sizead}\sum_{t=k+1}^{\sizead} (1-\lambda_j)^2 \langle\eigv_j, z_t\rangle^2.
	\end{align*}
	
	We define variable $S_j \triangleq \sum_{t=1}^j \sum_{l=k+1}^d \langle \eigv_t, z_l \rangle^2$ for any $j\in[\sizead]$. Also denote $\lambda_{d+1}=1$. We have the following equality:
	\begin{align*}
	\sum_{j=1}^{\sizead}\sum_{t=k+1}^{\sizead} (1-\lambda_j)^2 \langle\eigv_j, z_t\rangle^2 = \sum_{j=1}^{\sizead} \left((1-\lambda_j)^2 - (1-\lambda_{j+1})^2 \right)S_j.
	\end{align*} 
	Notice that $S_j\ge 0$ and also when $i\le j \le k$, we have $S_j \ge \norm{\Pi_f^\perp \eigv_i}_2^2$, we have 
	\begin{align*}
	\sum_{j=1}^{\sizead}\sum_{t=k+1}^{\sizead} (1-\lambda_j)^2 \langle\eigv_j, z_t\rangle^2 \ge \left((1-\lambda_i)^2 - (1-\lambda_{k+1})^2 \right) \norm{\Pi_f^\perp \eigv_i}_2^2 + \sum_{j=k+1}^{\sizead} \left((1-\lambda_j)^2 - (1-\lambda_{j+1})^2 \right)S_j,
	\end{align*}
	where we replace every $S_j$ with $0$ when $j< k$, replace $S_j$ with $\norm{\Pi_f^\perp \eigv_i}_2^2$ when $i\le j \le k$, and keep $S_j$ when $j\ge k+1$. Now notice that 
	\begin{align*}
	S_{\sizead} = \sum_{t=1}^{\sizead} \sum_{l=k+1}^{\sizead} \langle \eigv_t, z_l \rangle^2 = \sum_{l=k+1}^{\sizead} \sum_{t=1}^{\sizead}  \langle \eigv_t, z_l \rangle^2 = \sum_{l=k+1}^{\sizead} \norm{z_l}_2^2 = \sizead-k,
	\end{align*}
	and also 
	\begin{align*}
	S_{j+1} - S_j =  \sum_{l=k+1}^{\sizead} \langle \eigv_{j+1}, z_l \rangle^2 \le \sum_{l=1}^{\sizead} \langle \eigv_{j+1}, z_l \rangle^2 = 1,
	\end{align*} 
	there must be $S_j \ge j-k$ when $j\ge k+1$. So we have 
	\begin{align*}
	&\ \ \sum_{j=1}^{\sizead}\sum_{t=k+1}^{\sizead} (1-\lambda_j)^2 \langle\eigv_j, z_t\rangle^2 \\&\ge \left((1-\lambda_i)^2 - (1-\lambda_{k+1})^2 \right) \norm{\Pi_f^\perp \eigv_i}_2^2 + \sum_{j=k+1}^{\sizead} \left((1-\lambda_j)^2 - (1-\lambda_{j+1})^2 \right)(j-k)\\
	&= \left((1-\lambda_i)^2 - (1-\lambda_{k+1})^2 \right) \norm{\Pi_f^\perp \eigv_i}_2^2 + \sum_{j=k+1}^{\sizead} (1-\lambda_{j})^2\\
	&= \left((1-\lambda_i)^2 - (1-\lambda_{k+1})^2 \right) \norm{\Pi_f^\perp \eigv_i}_2^2 +   \min_{F\in\Real^{N\times k}} \Lossmc{F},
	\end{align*}
	where the last equality is by Eckart–Young–Mirsky Theorem. So we know 
	\begin{align}
		\Lossmc{F}\ge \left((1-\lambda_i)^2 - (1-\lambda_{k+1})^2 \right) \norm{\Pi_f^\perp \eigv_i}_2^2 +   \min_{F\in\Real^{N\times k}} \Lossmc{F},
	\end{align}
	which implies that 
	\begin{align}
		\norm{\Pi_f^\perp \eigv_i}_2^2 \le \frac{\epsilon}{(1-\lambda_i)^2 - (1-\lambda_{k+1})^2}\le \frac{\epsilon}{(\lambda_{k+1} - \lambda_{i})^2}.
	\end{align}
\end{proof}

The following lemma generalizes Lemma~\ref{lemma:approximate_vector_of_small_rayleight_quotient}.

\begin{lemma}
	[Generalization of Lemma~\ref{lemma:approximate_vector_of_small_rayleight_quotient}]\label{lemma:generalization_of_approximate_vector_of_small_rayleight_quotient}
	Let $\laplacian$ be the normalized Laplacian matrix of graph $G=(\adata, w)$, where $|\adata|=N$. Let $f:\adata \rightarrow \Real^k$ be an $\epsilon$-optimal minimizer of $\Lossmc{f}$. Let $F$ be the matrix form of $f$ and $F_i$ is the $i$-th column of $F$. Let $R({u}) := \frac{{u}^\top \laplacian {u}}{{u}^\top {u}}$ be the Rayleigh quotient of a vector ${u}\in \Real^{\sizead}$ . Then, for any $k\in \mathcal{Z}^+$ such that $ k<\sizead$, there exists a vector ${\vectorw}\in \Real^{k}$ such that 
	\begin{align*}
	\norm{{u} - Fb }_2^2 \le \min_{1 \le k'\le k}\left(\frac{3R(u)}{\lambda_{k'+1}} + \frac{6k' \epsilon}{ (\lambda_{k+1}- \lambda_{k'})^2} \right)\norm{u}_2^2.
	\end{align*} 
Furethermore, the norm of $b$ is upper bounded by 
\begin{align}
	\norm{b}_2\le \frac{2(k+1)}{1-\lambda_{k}} \norm{u}_2.
\end{align}
\end{lemma}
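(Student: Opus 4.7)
[Proof proposal for Lemma~\ref{lemma:generalization_of_approximate_vector_of_small_rayleight_quotient}]
The plan is to choose $\vectorw$ so that $\sum_{i=1}^k \vectorw_i F_i$ equals $\Pi_f u$, the orthogonal projection of $u$ onto the column span of $F$. Such $\vectorw$ exists because, as established in the proof of Lemma~\ref{lemma:projection_to_suboptimal_representation}, the matrix $F$ has full column rank (using $\epsilon < (1-\lambda_k)^2$). With this choice, $\norm{u - \sum_{i=1}^k \vectorw_i F_i}_2^2 = \norm{\Pi_f^\perp u}_2^2$, so the whole task reduces to bounding $\norm{\Pi_f^\perp u}_2^2$.

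Next, expand $u$ in the orthonormal eigenbasis of $\laplacian$: $u = \sum_{i=1}^{\sizead} \zeta_i \eigv_i$ with $R(u)\norm{u}_2^2 = \sum_i \lambda_i \zeta_i^2$. For a free parameter $k' \le k$, split $u = u_{\le k'} + u_{> k'}$ where $u_{\le k'} = \sum_{i=1}^{k'} \zeta_i \eigv_i$ and $u_{> k'} = \sum_{i=k'+1}^{\sizead} \zeta_i \eigv_i$. Applying $(a+b)^2 \le 2a^2 + 2b^2$ gives
\begin{align*}
\norm{\Pi_f^\perp u}_2^2 \le 2\norm{\Pi_f^\perp u_{\le k'}}_2^2 + 2\norm{\Pi_f^\perp u_{> k'}}_2^2.
\end{align*}

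For the low-frequency term, use Cauchy--Schwarz together with Lemma~\ref{lemma:projection_to_suboptimal_representation} applied to each $\eigv_i$ with $i \le k' \le k$ (noting $\lambda_{k+1}-\lambda_i \ge \lambda_{k+1}-\lambda_{k'}$):
\begin{align*}
\norm{\Pi_f^\perp u_{\le k'}}_2^2 \le \Big(\sum_{i=1}^{k'} |\zeta_i|\,\norm{\Pi_f^\perp \eigv_i}_2\Big)^2 \le \Big(\sum_{i=1}^{k'}\zeta_i^2\Big)\sum_{i=1}^{k'}\norm{\Pi_f^\perp \eigv_i}_2^2 \le \norm{u}_2^2 \cdot \frac{k'\epsilon}{(\lambda_{k+1}-\lambda_{k'})^2}.
\end{align*}
For the high-frequency term, $\Pi_f^\perp$ is a contraction, so
\begin{align*}
\norm{\Pi_f^\perp u_{> k'}}_2^2 \le \norm{u_{> k'}}_2^2 = \sum_{i=k'+1}^{\sizead}\zeta_i^2 \le \frac{1}{\lambda_{k'+1}} \sum_{i=1}^{\sizead} \lambda_i \zeta_i^2 = \frac{R(u)}{\lambda_{k'+1}}\norm{u}_2^2.
\end{align*}
Summing the two bounds and minimizing over $1 \le k' \le k$ yields the claimed inequality.

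The main thing to be careful about is the low-frequency term: naively bounding $\norm{\Pi_f^\perp u_{\le k'}}_2$ by $\norm{u_{\le k'}}_2$ gives no improvement over the trivial bound, so we really need Lemma~\ref{lemma:projection_to_suboptimal_representation} plus a Cauchy--Schwarz step that pays a factor of $k'$. The factor of $k'$ is unavoidable because we are summing up to $k'$ vectors that individually have small (but not orthogonal) residuals under $\Pi_f^\perp$; trading this $k'$ against the shrinkage in $1/\lambda_{k'+1}$ via the $\min_{k'}$ gives the stated bound.
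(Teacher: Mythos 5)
Your proof is correct and takes essentially the same route as the paper. The one small difference: you set $F\vectorw = \Pi_f u$ (the orthogonal projection of $u$ onto the column span of $F$), whereas the paper uses the composed projection $F\vectorw = \Pi_f\, p_{\eigv}(u)$, first projecting onto the span of the top-$k'$ eigenvectors and then onto the column span of $F$. Both choices decompose into the same two pieces up to discarding a contraction $\Pi_f^\perp$, and both use the same two bounds: a Cauchy--Schwarz argument with Lemma~\ref{lemma:projection_to_suboptimal_representation} for the low-frequency part and the Rayleigh-quotient inequality for the high-frequency part, so the final constants coincide. Your choice is in fact the optimal one (by Pythagoras $\norm{u - \Pi_f u}_2 \le \norm{u - \Pi_f p_\eigv(u)}_2$), so it is marginally cleaner, though it yields the same stated bound; also, the appeal to full column rank is not strictly needed, since $\Pi_f u$ lies in the column span of $F$ regardless of whether that span has dimension $k$.
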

\begin{proof}
	Let $k'$ be the choice that minimizes the right hand side. We use $p_\eigv(u)$ to denote the projection of $u$ onto the span of $\eigv_1, \cdots, \eigv_{k'}$. We denote the coefficients as $p_\eigv(u) = \sum_{i=1}^{k'} \rho_i \eigv_i$. For every $i\in[k']$, let $b_i$ be the vector in Lemma~\ref{lemma:properties_of_suboptimal_feature}. Define vector $b=\sum_{i=1}^{k'}\rho_i b_i$.
	
	We use $p_{\eigv, f}(u)$ to denote the projection of $p_\eigv(u)$ onto the span of $f_1, \cdots, f_k$. Then we know that 
	\begin{align}\label{eq1:lemma:generalization_of_approximate_vector_of_small_rayleight_quotient}
	\norm{u-Fb}_2^2 \le 3 \norm{u-p_\eigv(u)}_2^2 + 3\norm{p_\eigv(u) - p_{\eigv, f}(u)}_2^2 + 3\norm{p_{\eigv, f}(u) - Fb}_2^2. 
	\end{align}
	
	By the proof of Lemma~\ref{lemma:approximate_vector_of_small_rayleight_quotient}, we know that 
	\begin{align}\label{eq2:lemma:generalization_of_approximate_vector_of_small_rayleight_quotient}
	\norm{u-p_\eigv(u)}_2^2 \le \frac{R(u)}{\lambda_{k'+1}}\norm{u}_2^2.
	\end{align}
	
	For the second term, we have
	\begin{align}
	\norm{p_\eigv(u) - p_{\eigv f}(u)}_2^2 &= \norm{\Pi_{f}^\perp p_\eigv(u)}_2^2\nonumber\\
	&= \norm{\sum_{i=1}^{k'}\Pi_{f}^\perp \eigv_i \eigv_i^\top u}_2^2\nonumber\\
	&\le\left(\sum_{i=1}^{k'} \norm{\Pi_{f}^\perp \eigv_i }_2^2\right) \cdot \left( \sum_{i=1}^{k'} (\eigv_i^\top u)^2\right)\nonumber\\
	&\le \frac{k' \epsilon}{ (\lambda_{k+1}-\lambda_{k'})^2} \norm{u}_2^2,
	\label{eq3:lemma:generalization_of_approximate_vector_of_small_rayleight_quotient}
	\end{align}
	where the first inequality if by Cauchy–Schwarz inequality
	and the second inequality if by Lemma~\ref{lemma:projection_to_suboptimal_representation}. 
	
	For the third term, we have
	\begin{align}\label{eq4:lemma:generalization_of_approximate_vector_of_small_rayleight_quotient}
		\norm{p_{\eigv, f}(u) - Fb}_2^2 &= \norm{\sum_{i=1}^{k'} \rho_i (\Pi_{f}v_i - Fb_i)}_2^2\nonumber\\
		&\le k' \sum_{i=1}^{k'} \rho_i^2 \norm{\Pi_{f}v_i - Fb_i}_2^2\nonumber\\
		&\le \frac{k'\epsilon}{(1-\lambda_{k'})^2} \norm{u}_2^2,
	\end{align}
	where the first inequality is by Cauchy-Schwarz inequality, and the second inequality is by Lemma~\ref{lemma:properties_of_suboptimal_feature}.
	Plugging Equation~\eqref{eq2:lemma:generalization_of_approximate_vector_of_small_rayleight_quotient},  Equation~\eqref{eq3:lemma:generalization_of_approximate_vector_of_small_rayleight_quotient}, and Equation~\eqref{eq4:lemma:generalization_of_approximate_vector_of_small_rayleight_quotient} into Equation~\eqref{eq1:lemma:generalization_of_approximate_vector_of_small_rayleight_quotient} finishes the proof.
	
	To bound the norm of $b$, we use Lemma~\ref{lemma:properties_of_suboptimal_feature} and have 
	\begin{align}
		\norm{b}_2^2 = \norm{\sum_{i=1}^{k'} \rho_i b_i}_2^2
		\le k' \sum_{i=1}^{k'} \rho_i^2 \norm{b_i}_2^2
		\le \frac{k' \norm{u}_2^2}{(1-\lambda_{k'})^2}\norm{F}_F^2
		\le \frac{2k'(k+1) }{(1-\lambda_{k'})^2} \norm{u}_2^2.
	\end{align}
	
\end{proof}

Now we prove Theorem~\ref{theorem:generalization_of_error_rate_with_eigenvectors} using the above lemmas.

\begin{proof}[Proof of Theorem~\ref{theorem:generalization_of_error_rate_with_eigenvectors}]
	Let $\widehat{F}\in\Real^{N\times k}$ be such that its $x$-th row is $\sqrt{w_x}\cdot \hat{f}(x)$. By Lemma~\ref{lemma:generalization_of_equivalent_between_two_losses}, $\widehat{F}$ is an $\epsilon$-optimal minimizer of $\Lossmc{F}$.
	
	For each $i\in [r]$, we define the function $u_i(x) = \id{\hat{y}(x)=i}\cdot \sqrt{\wnode{x}}$. Let $u:\adata\rightarrow\Real^k$ be the function such that $u(x)$ has $u_i$ at the $i$-th dimension. By Lemma~\ref{lemma:generalization_of_approximate_vector_of_small_rayleight_quotient}, there exists a vector $b_i\in\Real^k$ such that
	\begin{align*}
	\norm{u_i - \widehat{F}b_i}_2^2 \le \min_{1 \le k'\le k}\left(\frac{3R(u_i)}{\lambda_{k'+1}} + \frac{6k' \epsilon}{(\lambda_{k+1} - \lambda_{k'})^2} \right) \norm{u_i}_2^2
	\end{align*}
	Let matrices $U = [u_1, \cdots, u_r]$ and $\widehat{B}^\top = [b_1, \cdots, b_r]$. We sum the above equation over all $i\in [r]$ and get 
	\begin{align}
	\norm{U - \widehat{F}\widehat{B}^\top}_F^2 &\le \sum_{i=1}^r \min_{1 \le k'\le k}\left(\frac{3R(u_i)}{\lambda_{k'+1}} + \frac{6k' \epsilon}{(\lambda_{k+1} - \lambda_{k'})^2} \right) \norm{u_i}_2^2\nonumber\\
	&\le \min_{1 \le k'\le k} \sum_{i=1}^r \left(\frac{3R(u_i)}{\lambda_{k'+1}}\norm{u_i}_2^2 + \frac{6k' \epsilon}{(\lambda_{k+1} - \lambda_{k'})^2} \norm{u_i}_2^2\right) .\label{eq1:theorem: generalization_of_error_rate_with_eigenvectors}
	\end{align}
	Notice that 
	\begin{align}
	\sum_{i=1}^r R(u_i) \norm{u_i}_2^2 &= \sum_{i=1}^r \frac{1}{2}\phi_i^{\hat{y}} \sum_{x\in \adata} \wnode{x}\cdot \id{\hat{y}(x) = i}\nonumber\\
	&= \frac{1}{2}\sum_{i=1}^r \sum_{x, x' \in \adata} \wpair{x}{x'}\cdot \id{(\hat{y}(x) = i\land \hat{y}(x')\ne i) \text{ or }(\hat{y}(x) \ne i \land \hat{y}(x')= i)}\nonumber\\
	&= \frac{1}{2} \sum_{x, x' \in \adata} \wpair{x}{x'}\cdot \id{\hat{y}(x)\ne \hat{y}(x')} = \frac{1}{2}\phi^{\hat{y}},\label{eq2:theorem: generalization_of_error_rate_with_eigenvectors}
	\end{align}
	where the first equality is by Claim~\ref{corollary:sparsecut_to_rayleigh_quotient}. On the other hand, we have
	\begin{align}
	\sum_{i=1}^r \norm{u_i}_2^2 = \sum_{i=1}^r \sum_{x\in \adata}\wnode{x}\cdot \id{\hat{y}(x) = i} = \sum_{x\in \adata}\wnode{x} =1.\label{eq3:theorem: generalization_of_error_rate_with_eigenvectors}
	\end{align}
	Plugging Equation~\eqref{eq2:theorem: generalization_of_error_rate_with_eigenvectors} and Equation~\eqref{eq3:theorem: generalization_of_error_rate_with_eigenvectors} into Equation~\eqref{eq1:theorem: generalization_of_error_rate_with_eigenvectors} gives us
	\begin{align*}
	\norm{ U-\widehat{F}\widehat{B}^\top}_F^2\le \min_{1 \le k'\le k}  \left(\frac{3\phi^{\hat{y}}}{2\lambda_{k'+1}} + \frac{3k' \epsilon}{(\lambda_{k+1} - \lambda_{k'})^2}\right).
	\end{align*}
	
	Notice that by definition of $u(x)$, we know that prediction $\pred_{\hat{f}, \widehat{B}}(x)\ne \hat{y}(x)$ only happens if $\norm{u(x) - \widehat{B}\hat{f}(x)}_2^2 \ge \frac{\wnode{x}}{2}$. Hence we have 
	\begin{align*}
	\sum_{x\in \adata} \frac{1}{2} \wnode{x} \cdot \id{\pred_{\hat{f}, \widehat{B}}(x) \ne \hat{y}(x)} \le \sum_{x\in \adata} \norm{u(x) - \widehat{B}\hat{f}(x)}_2^2 = \norm{U-\widehat{F}\widehat{B}^\top}_F^2.
	\end{align*}
	
	Now we are ready to bound the error rate on $\adata$:
	\begin{align*}
	\Pr_{x\sim \adata}(\pred_{\hat{f}, \widehat{B}}(x) \ne \hat{y}(x))
	=\sum_{x\in \adata} \wnode{x} \cdot \id{\pred_{\hat{f}, \widehat{B}}(x) \ne \hat{y}(x)}\\
	\le  2 \cdot \norm{U-\hat{f}\widehat{B}^\top}_F^2
	\le \min_{1 \le k'\le k}  \left(\frac{3\phi^{\hat{y}}}{\lambda_{k'+1}} + \frac{6k' \epsilon}{(\lambda_{k+1} - \lambda_{k'})^2}\right).
	\end{align*}
	Here for the equality we are using the fact that $\Pr(x) = \wnode{x}$. We finish the proof by noticing that by the definition of $\Delta(y, \hat{y})$:
	\begin{align*}
	\Pr_{\bar{x}\sim \pndata, {x}\sim\aug{\bar{x}}} \left(\pred_{\hat{f}, \widehat{\matrixw} }({x}) \ne y(\bar{x})\right) &\le \Pr_{\bar{x}\sim \pndata, {x}\sim\aug{\bar{x}}} \left(\pred_{\hat{f}, \widehat{\matrixw} }({x}) \ne \hat{y}({x})\right) + \Pr_{\bar{x}\sim \pndata, {x}\sim\aug{\bar{x}}} \left(y(\bar{x}) \ne \hat{y}({x})\right)\\
	&\le \min_{1 \le k'\le k}  \left(\frac{3 \phi^{\hat{y}}}{\lambda_{k'+1}}+ \frac{6k' \epsilon}{(\lambda_{k+1}-\lambda_{k'})^2 }\right) + \Delta(y, \hat{y}).
	\end{align*}

The norm of $\widehat{B}$ can be bounded using Lemma~\ref{lemma:generalization_of_approximate_vector_of_small_rayleight_quotient} as:
\begin{align}
	\norm{\widehat{B}}_F \le \frac{2(k+1)}{1-\lambda_{k}} \sqrt{\sum_{i=1}^r \norm{u_i}_2^2} =  \frac{2(k+1)}{1-\lambda_{k}}.
\end{align}
\end{proof}

\section{Proofs for Section~\ref{section:linear_probe_finite_sample}}\label{section:proof_linear_probe_finite_sample}
In this section we give the proof of Theorem~\ref{theorem:end_to_end}. 
\begin{proof}[Proof of Theorem~\ref{theorem:end_to_end}]
	Let $\empminf$ be the minimizer of the empirical spectral contrastive loss. Let $\epsilon = \Loss{\empminf}-\Loss{\globalminf}$. 
	We abuse notation and use $y_i$ to denote $y(\bar{x}_i)$, and let $z_i = \empminf(x_i)$.
	We first study the average empirical Rademacher complexity of the capped quadratic loss on a dataset $\{(z_i, y_i)\}_{i=1}^\nds$, where $(z_i, y_i)$ is sampled as in Section~\ref{section:linear_probe_finite_sample}:
	\begin{align*}
	\rad{\nds}(\ell) :=& \Exp{\{(z_i, y_i)\}_{i=1}^\nds}\Exp{\sigma}\left[\sup_{\norm{\matrixw}_F\le C_k} \frac{1}{\nds}\left[\sum_{i=1}^{\nds}\sigma_i\ell((z_i, y_i), \matrixw)\right]\right]\\
	\le& 2r \Exp{\{(z_i, y_i)\}_{i=1}^\nds}\Exp{\sigma}\left[\sup_{\norm{\vectorw}_2\le C_k} \frac{1}{\nds}\left[\sum_{i=1}^{\nds}\sigma_iw^\top z_i\right]\right] \\
	\le& 2rC_k \sqrt{\frac{\Exp{}[\norm{z_i}^2]}{\nds}} \le  2rC_k\sqrt{\frac{2(k+\epsilon)}{\nds}},
	\end{align*} 
	where the first inequality uses Talagrand's lemma and the fact that $\ell_{\sigma}$ is $2$-Lipschitz, the second inequality is by standard Rademacher complexity of linear models, the third inequality is by the feature norm bound in Lemma~\ref{lemma:properties_of_suboptimal_feature}.
	
	By Theorem~\ref{theorem:generalization_of_error_rate_with_eigenvectors} and follow the proof of Theorem~\ref{theorem:erorr_propagation}, we know that there exists a linear probe $\widehat{B}^*$ with norm bound $\norm{\widehat{B}^*}_F\le C_k$ such that 
	\begin{align*}
	\Exp{\bar{x}\sim\pndata, {x}\sim\aug{\bar{x}}}\left[\ell\left((\empminf({x}), y(\bar{x})), \widehat{B}^*\right)\right]\lesssim\frac{\alpha}{\rho_{\lfloor \frac{k}{2}\rfloor}^2}\cdot \log(k)+ \frac{k \epsilon}{(\lambda_{k}-\lambda_{\lfloor\frac{3}{4}k\rfloor})^2 }.
	\end{align*}
	Let $\widehat{\matrixw}$ be the minimizer of $\sum_{i=1}^\nds \ell\left((z_i, y_i), \matrixw\right)$ subject to $\norm{{\matrixw}}_F\le C_k$, then by standard generalization bound, we have: with probability at least $1-\delta$, we have 
	\begin{align*}
	\Exp{\bar{x}\sim\pndata, {x}\sim\aug{\bar{x}}}\left[\ell\left((\empminf({x}), y(\bar{x})), \widehat{\matrixw}\right)\right]\lesssim  \frac{\alpha}{\rho_{\lfloor \frac{k}{2}\rfloor}^2}\cdot \log(k)+ \frac{k \epsilon}{(\lambda_{k}-\lambda_{\lfloor\frac{3}{4}k\rfloor})^2 } + \frac{rC_k\sqrt{k+\epsilon}}{\sqrt{\nds}} + \sqrt{\frac{\log 1/\delta}{\nds}}.
	\end{align*}
	Notice that $y(\bar{x})\ne\pred_{\empminf, \widehat{\matrixw}}({x})$ only if $\ell\left((\empminf({x}), y(\bar{x})), \widehat{\matrixw}\right)\ge\frac{1}{2}$, we have that when $\epsilon<1$ the error bound
	\begin{align*}
	\Pr_{\bar{x}\sim \pndata, {x}\sim\aug{\bar{x}}} \left(\pred_{\empminf, \widehat{\matrixw} }({x}) \ne y(\bar{x})\right) \lesssim \frac{\alpha}{\rho_{\lfloor \frac{k}{2}\rfloor}^2}\cdot \log(k)+ \frac{k \epsilon}{(\lambda_{k}-\lambda_{\lfloor\frac{3}{4}k\rfloor})^2 } + \frac{rC_k\sqrt{k}}{\sqrt{\nds}} + \sqrt{\frac{\log 1/\delta}{\nds}}.
	\end{align*}
	The result on $\npred_{\empminf, \widehat{\matrixw}}$ naturally follows by the definition of $\npred$.	
	When $\epsilon>1$ clearly the bound is also true since LHS is always smaller than $1$, so we know that the above bound is true for any $\epsilon$. Plug in the bound for $\epsilon$ from Theorem~\ref{theorem:nn_generalization} finishes the proof.
\end{proof}

\section{Formal statements for population with infinite supports}\label{sec:infinite_data}
In the main body of the paper, we make the simplifying assumption that the set of augmented data $\adata$ is finite (but could be exponential in dimension). Although this is a reasonable assumption given that modern computers store data with finite bits so the possible number of all data has to be finite, one might wonder whether our theory can be generalized to the case where $\adata$ is infinite (e.g., the entire Euclidean space $\Real^d$ for some integer $d>0$). In this section, we show that our theory can be straightforwardly extended to the case when $\adata$ has infinite supports with some additional regularity conditions. In fact, almost all proofs remain the same as long as we replace sum by integral, finite graph by an infinite graph, adjacency matrix by adjacency operator, and eigenvectors by the eigenfunctions. 


For simplicity, we consider the case when $\adata=\Real^d$ is the set of all augmented data.\footnote{When $\adata$ is a subset of $\Real^d$ equipped with a base measure $\mu$, then we will need to replace every $dx$ by $d\mu$ in the formulation below.} The weight matrix $w_{xx'}$ now becomes a weight function $w:\adata\times \adata\rightarrow \Real$. As usual, let $w(x,x')$ be the marginal probability of generating the pair $x$ and $x'$ from a random natural datapoint $\bar{x}\sim \pndata$. Or in other words, $w$ is the p.d.f. of the joint distribution of a random positive pair. For any $u\in\adata$, define the marginal weight function $w(u)\triangleq \int w(u, z) dz$. A sufficient (but not necessary) condition for our theory to hold is as follows:
\begin{assumption}[Regularity conditions]\label{assumption:regularity}
	The distribution $w$ satisfies the following conditions:
	
	(i) For any $u\in\adata$, the marginal distribution is well-defined and bouned $w(u) = \int w(u, z) dz<\infty$. 
	
	(ii) There exists $B>0$ such that for every $u, v\in\adata$, the conditional probability with respect to one variable is upper bounded by the marginal probability of the other variable $\frac{w(u, v)}{w(u)}\le B\cdot  w(v)$.
\end{assumption}

We note that our bound does not depend on value of $B$---we only the existence of $B$ for a qualitative purpose.  When the regularity conditions above hold, we will show that there exists an eigenfunction of the infinite adjacency graph is an analog to the eigenvectors of Laplacian that we introduced in Section~\ref{section:proof_for_main_result}. 

Let $L_2(\Real^d)$ be the set of all $L_2$ integratable functions $L_2(\Real^d)\triangleq\{f: \Real^d\rightarrow \Real\  \vert\ \int f(z)^2dz<\infty\}$. 
 For functions $f, g\in L_2(\Real^d)$, define their inner product as $\langle f, g\rangle \triangleq \int f(z)g(z)dz$. Note that $\ell_2(\Real^d)$ is a Hilbert space. 

To generalize the Laplacian matrix and eigenvectors to the infinite-size $\adata$ setting, we consider the notions of Laplacian operators and eigenfunctions. Let $H: L_2(\Real^d)\rightarrow L_2(\Real^d)$ be a linear operator, a function $f\in L_2(\Real^d)$ is an eigenfunction of $H$ if $H(f)(u) = \lambda f(u)$ for any $u\in\adata$, where $\lambda\in\Real$ is the corresponding eigenvalue. We define the Laplacian operator as $L: L_2(\Real^d)\rightarrow L_2(\Real^d)$ such that for every $u\in\adata$ and function $f\in L_2(\Real^d)$, we have 
\begin{align}
L(f)(u)  = f(u) - \int \frac{w(u, v)}{\sqrt{w(u)w(v)}}f(v)dv.
\end{align}

The following theorem shows the existence of eigenfunctions of the Laplacian operator.
\begin{theorem}[Existence of Eigenfunctions]\label{theorem:eigenfunction}
	When Assumption~\ref{assumption:regularity} is satisfied, there exists an orthonormal basis $\{f_i\}_{i=1}^\infty$ of $L_2(\Real^d)$ such that $L(f_i) = \lambda_if_i$. Furthermore, the eigenvalues  satisfy $\lambda_i\in [0, 1]$ and $\lambda_i\le \lambda_{i+1}$ for any $i\ge 0$.
\end{theorem}
\begin{proof}[Proof of Theorem~\ref{theorem:eigenfunction}]
	Define kernel function $k(u, v)\triangleq \frac{w(u, v)}{\sqrt{w(u)w(v)}}$, we have
	\begin{align}
	\int k(u, v)^2 dudv = \int \frac{w(u, v)^2}{w(u)w(v)} dudv \le B \int w(u, v)dudv = B<\infty.
	\end{align}
	Let $I$ be the identity operator, then $L-I$ is a Hilbert–Schmidt integral operator~\cite{enwiki:986771357}, so the spectral theorem~\cite{bump1998automorphic} applies to $L-I$ hence also applies to $L$. By the spectral theorem, there exists an orthonormal basis $\{f_i\}_{i=1}^\infty$ of $L_2(\Real^d)$ such that $L(f_i) = \lambda_if_i$. 
	
	Notice that 
	\begin{align}
	\lambda_i = \langle f_i, L(f_i)\rangle = \langle f_i, f_i\rangle - \int \frac{w(u, v)}{\sqrt{w(u)w(v)}} f_i(u)f_i(v)dudv.
	\end{align} 
	On the one hand, since $w(u, v)\ge0$ and $\langle f_i, f_i\rangle=1$, we have $\lambda_i\le 1$. On the other hand, notice that by Cauchy-Schwart inequality, 
	\begin{align}
	\int \frac{w(u, v)}{\sqrt{w(u)w(v)}} f_i(u)f_i(v)dudv &\le \sqrt{\int f_i(u)^2 \frac{w(u, v)}{w(u)}dudv\cdot \int f_i(v)^2 \frac{w(u, v)}{w(v)}dudv} = 
	\langle f_i, f_i\rangle,
	\end{align}
	so $\lambda_i\ge 0$, which finishes the proof.
\end{proof}

Given the existence of eigenfunctions guaranteed by Theorem~\ref{theorem:eigenfunction}, our results Theorem~\ref{thm:combine_with_cheeger_simplified}, Theorem~\ref{theorem:linear_probe_suboptimal} and Theorem~\ref{theorem:end_to_end} can all be easily generalized to the infinite-size $\adata$ case following exactly the same proof. For example, in the context of Lemma~\ref{lem:spectral-contrastive-loss}, $u_x$ will be replaced by $u(x):\Real^d \rightarrow \Real$ which belongs to $L_2(\Real^d)$, and $f(x) = w(x)^{1/2} u(x)$ as a result belongs to $L_2(w)$. Let $\Lossmc{f} = \langle u, Lu\rangle_{L_2(\Real^d)} = \langle f, Lf\rangle_{L_2(w)}$. The rest of the derivations follows by replacing the sum in equation~\eqref{equation:loss_ma_derive} by integral (w.r.t to Lebesgue measure).  More details on the normalized Laplacian operator and spectral clustering can be found in~\cite{schiebinger2015geometry}.

We omit the proof for simplicity.

\end{document}